\documentclass{article}

 \usepackage[preprint]{neurips_2026}

\title{World Action Planner: Generalizable Decision-Making with Action-Conditioned World Models}

\usepackage[utf8]{inputenc} % allow utf-8 input
\usepackage[T1]{fontenc}    % use 8-bit T1 fonts
\usepackage{enumitem}
\usepackage[pagebackref=true,breaklinks=true,colorlinks,citecolor=blue]{hyperref}
\usepackage{url}            % simple URL typesetting
\usepackage{booktabs}       % professional-quality tables
\usepackage{amsfonts}       % blackboard math symbols
\usepackage{nicefrac}       % compact symbols for 1/2, etc.
\usepackage{microtype}      % microtypography
\usepackage{xcolor}         % colors
\usepackage{subcaption}

\renewcommand{\paragraph}[1]{\noindent\textbf{#1}.}

\usepackage{algorithm}
\usepackage{algorithmic}
\usepackage{amsmath,amsfonts,bm}
\usepackage{amsthm}
\usepackage{thmtools}
\usepackage{thm-restate}
\usepackage{mathtools}
\usepackage{bm}

\usepackage{booktabs}
\usepackage{multirow}
\usepackage{subcaption}
\usepackage{graphicx}
\usepackage[most]{tcolorbox}
\usepackage[skip=3pt,font=small]{caption}
\usepackage{wrapfig}
\usepackage{lipsum}
\usepackage{diagbox}
\usepackage{makecell}
\usepackage{tablefootnote}

\newtheorem{theorem}{Theorem}
\newtheorem{assumption}{Assumption}
\newtheorem{definition}{Definition}
\newtheorem{lemma}{Lemma}

\DeclareMathAlphabet{\mathsfit}{\encodingdefault}{\sfdefault}{m}{sl}
\SetMathAlphabet{\mathsfit}{bold}{\encodingdefault}{\sfdefault}{bx}{n}

\newcommand{\EE}{\mathbb{E}}
\newcommand{\RR}{\mathbb{R}}

\newcommand{\cA}{\mathcal{A}}

\newcommand{\cC}{\mathcal{C}}
\newcommand{\cK}{\mathcal{K}}

\newcommand{\cF}{\mathcal{F}}

\newcommand{\cS}{\mathcal{S}}
\newcommand{\cV}{\mathcal{V}}
\newcommand{\cN}{\mathcal{N}}

\newcommand{\cD}{\mathcal{D}}

\newcommand{\innerproduct}[2]{\left\langle #1, #2 \right\rangle}
\newcommand{\cM}{\mathcal{M}}

\newcommand{\poly}{\mathrm{poly}}

\newcommand{\abs}[1]{\left| #1 \right|}

\newcommand{\bracket}[1]{\left(#1\right)}
\newcommand{\mbracket}[1]{\left[#1\right]}
\newcommand{\norm}[1]{\left\| #1 \right\|}
\newcommand{\sets}[1]{\left\{ #1 \right\}}
\newcommand{\PP}{\mathbb{P} }

\newcommand{\dist}{\mathrm{dist}}
\newcommand{\supp}{\mathrm{Supp}}
\newcommand{\unif}{\mathrm{Unif}}
\newcommand{\ve}{\mathrm{vec}}

\newif\ifsup\supfalse
\suptrue

\DeclareMathOperator*{\argmax}{argmax}
\DeclareMathOperator*{\argmin}{argmin}

\author{%
  Xiangcheng Zhang \\
  Harvard University\\
  {xiangchengzhang@fas.harvard.edu}
  \And
  Yilun Du\\
  Harvard University\\
  {ydu@seas.harvard.edu}
}

\begin{document}

\maketitle

\begin{abstract}
\looseness=-1
  Building generalizable agents for diverse applications remains a fundamental challenge. While imitation learning-based policies succeed in specific training environments, they often fail to generalize to novel scenes and tasks. In this work, we propose \textbf{World Action Planner}, a robot planning system that leverages the reasoning capabilities of Vision-Language Models (VLMs) and the physical grounding of a multi-task pose-image conditioned world model. Our system enables an agent to propose initial action plans and iteratively refine them via optimization and search, reasoning over imagined world model rollouts. We demonstrate that our approach achieves superior performance across compositional tasks, new layouts, and zero-shot generalization scenarios, significantly outperforming state-of-the-art end-to-end policy models such as VLAs and WAMs. Project website at \href{https://worldactionplanner.github.io/}{worldactionplanner.github.io}
\end{abstract}

\begin{figure}[htb]
    \centering
    \includegraphics[width=\linewidth]{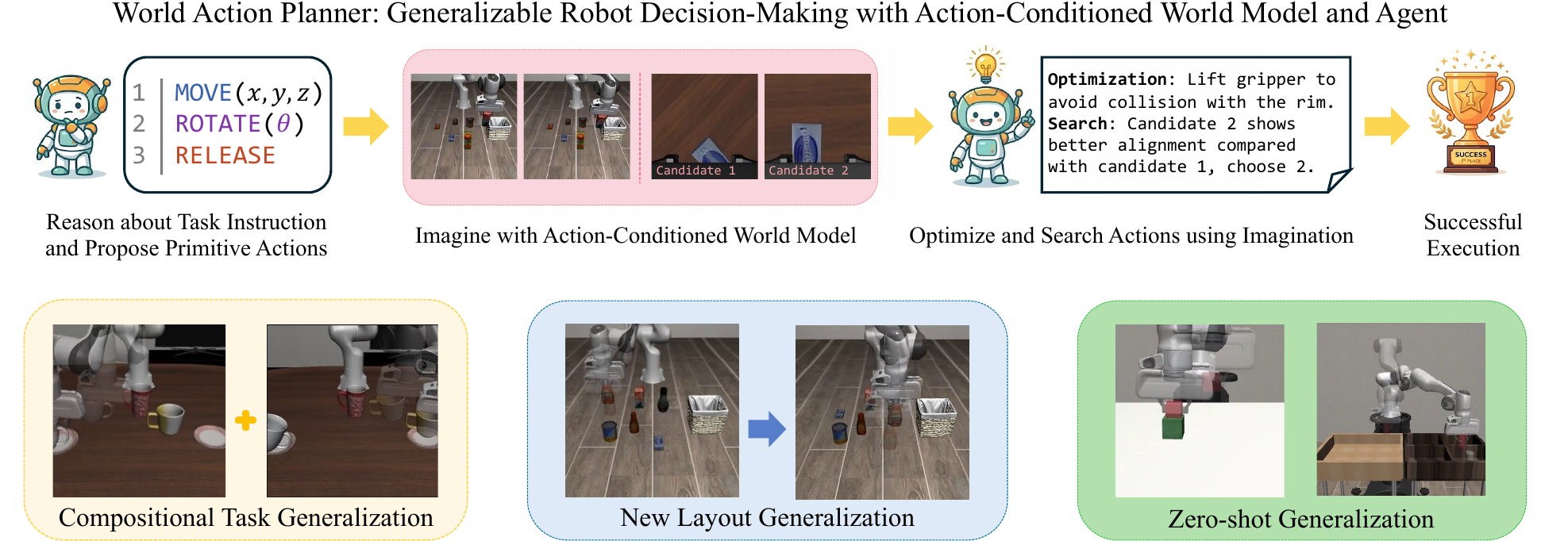}
    \caption{We introduce \textbf{World Action Planner}, a planning system in which the agent reasons about and refines its action plans via world model imagination, generalizing to novel scenarios and solving new tasks at test time.}
    \label{fig:teaser}
    % \vspace{-5pt} 
\end{figure}
\section{Introduction}
\looseness=-1
Developing generalizable robotic agents capable of mastering diverse tasks in heterogeneous environments remains a fundamental challenge in AI. Recent progress has been driven by end-to-end (E2E) imitation learning-based policies that leverage pretrained Vision-Language Models (VLMs) \cite{intelligence2025pi_} or video generation models \cite{ye2026world} as backbones. However, these approaches are often fundamentally limited by the scope of their demonstration data during training. For instance, a policy trained only on individual pick-and-place atomic trajectories often fails to navigate between objects when executing composed, long-horizon tasks. Furthermore, E2E models frequently overfit spurious motion patterns in demonstrations; if training data consistently depicts grasping at a specific coordinate, the policy may continue targeting that position even after the object has moved. Consequently, these models struggle in novel scenarios that demand systematic reasoning or flexible planning beyond the specific trajectory distributions seen during training.

\looseness=-1
In this work, we depart from the E2E imitation learning paradigm. Instead, we propose a principled planning system centered on a world model that enables an agent to propose, simulate, and iteratively refine action plans. Our approach is rooted in classical robotics, where environment abstractions and robot actions are represented through programs \cite{lozano2005robot} to allow for modular composition and generalization. While modern foundation models can generate high-level plans using such abstractions, they often lack an understanding of physical causality and dynamics, thus failing to directly generate executable actions. By integrating a action-conditioned world model, we enable systematic model-based planning that bridges the gap between high-level reasoning and physical execution. We show theoretically that model-based planning offers superior multi-task generalization compared to imitation learning. By training a generalizable world model on diverse trajectories, the system can synthesize new action sequences for unseen tasks and configurations via imagination-based planning, which remains challenging for end-to-end (E2E) policies tethered to expert demonstration trajectories.

In this work, we propose \textbf{World Action Planner} (Fig.~\ref{fig:teaser}), a robot planning system that orchestrates VLM agents and action-conditioned world models to solve new tasks with new scenes and layouts. In this framework, the VLM agent proposes initial action sequences which are then optimized through iterative interaction with the world model. We propose a systematic action planning pipeline that integrates agent-driven feedback and corrections based on world model imagined rollouts, and searches for plausible actions by imagining future trajectories of different candidates. To facilitate this, we develop a generalizable, \textit{pose-image conditioned} multi-view world model capable of precise control over robot actions and high-fidelity simulation of physical interactions. 

\looseness=-1
We demonstrate the generalization capability of our method across compositional tasks, new layouts, and zero-shot scenarios in simulation environments. In compositional long-horizon tasks, our agent successfully manages transitions between sub-tasks by proposing and refining maneuvers through imagination, whereas end-to-end models such as VLAs often stagnate after the first sub-task. In spatial generalization with modified object positions, our method accurately identifies the new target object position and completes the task, while imitation learning policies tend to reach for training-set object coordinates. By leveraging imaginations from the action-conditioned world model, our system detects physical risks such as collisions, and optimizes proposed actions for safe execution. Furthermore, we demonstrate that local search identifies feasible actions and states for fine-grained manipulation, such as mug grasping and cube stacking, by reasoning over imagined outcomes of different candidates.

\vspace{-2pt}
We summarize our main contributions as follows:
\vspace{-5pt}
\looseness=-1
\begin{itemize}[leftmargin=2em, itemsep=0.1em]
    \item[\textbf{(i)}] We introduce a generalizable, \textit{pose-image conditioned} robot world model and demonstrate its superior performance when generalizing to new actions, scenes, and across diverse robots.
    \item[\textbf{(ii)}] We provide theoretical analysis showing that model-based planning outperforms imitation learning in multi-task generalization for both tabular settings and linear function approximation.
    \item[\textbf{(iii)}] We propose \textbf{World Action Planner}, a planning system that orchestrates VLM agents and world models for systematic action proposal, optimization, and search via world model imagination. Our results demonstrate that the proposed method significantly outperforms state-of-the-art policies across compositional tasks, new layouts, and zero-shot test-time generalization scenarios.
\end{itemize}
% \vspace{-8pt}
\section{Related Works}
% \vspace{-5pt}
Here, we review the most closely related works. For an extended discussion, see Appendix~\ref{app:related works}. 

\looseness=-1
\paragraph{Robot World Models} Recent robot world models leverage pre-trained diffusion-based video backbones \cite{wan2025wan, yang2024cogvideox} and condition on low-dimensional actions via cross-attention \cite{guo2025ctrl} or AdaLN-Zero modulation \cite{zhu2025irasim, quevedo2025worldgym}, while other approaches utilize textual descriptions \cite{yang2023learning,agarwal2025cosmos,ali2025world}. \cite{team2025evaluating} demonstrated that pose-conditioned world models can effectively simulate OOD and unsafe scenarios, thereby serving as evaluators for robot policies. \cite{wang2025precise} also adopted robot pose-image conditioning; however, while they extracted ground-truth poses from future frames, we compute them through dynamics to enable planning via imagination. Finally, concurrent work \cite{jia2026dreamplan} conditions on rendered robot arm images, while our pose skeleton image representation is more computationally efficient and robust.

\looseness=-1
\paragraph{Planning with VLMs and World Models} Foundation VLMs are typically integrated into robotics either via high level planning \cite{liu2024moka, huang2023voxposer,hu2023look} or as backbones for Vision-Language-Action (VLA) models \cite{intelligence2025pi_, kim2024openvla}. To enhance robot policies, \cite{jain2025smooth, qi2025strengthening,jia2026dreamplan,hansen2023td} propose sampling-based or gradient-based action optimization using the imagination from the world model. More recently, world-action models (WAM) \cite{kim2026cosmos, ye2026world,li2026causal} fine-tune video generative models using demonstration data to generate future actions as well as future frames, and video planners \cite{rhoda2026dva, du2023video,chen2025large,hafner2019dream} leverage synthetic task completion videos from video generative models and extract the actions using inverse-dynamics models. Our approach is distinguished by the use of a VLM agent for planning and reasoning, leveraging imagined rollouts from an \textit{action-conditioned} world model to optimize and verify the actions in new environments and tasks.
% \vspace{-5pt}
\section{Methods}
% \vspace{-5pt}
Here we describe our world model in Section~\ref{sec:method video} and the world action planner in Section~\ref{sec:method plan}.

% \vspace{-5pt}
\subsection{Pose-Image Conditioning for Action-Controlled Robot World Modeling}\label{sec:method video}
\begin{figure}[htb]
% \vspace{-5pt}
    \centering
    \includegraphics[width=\linewidth]{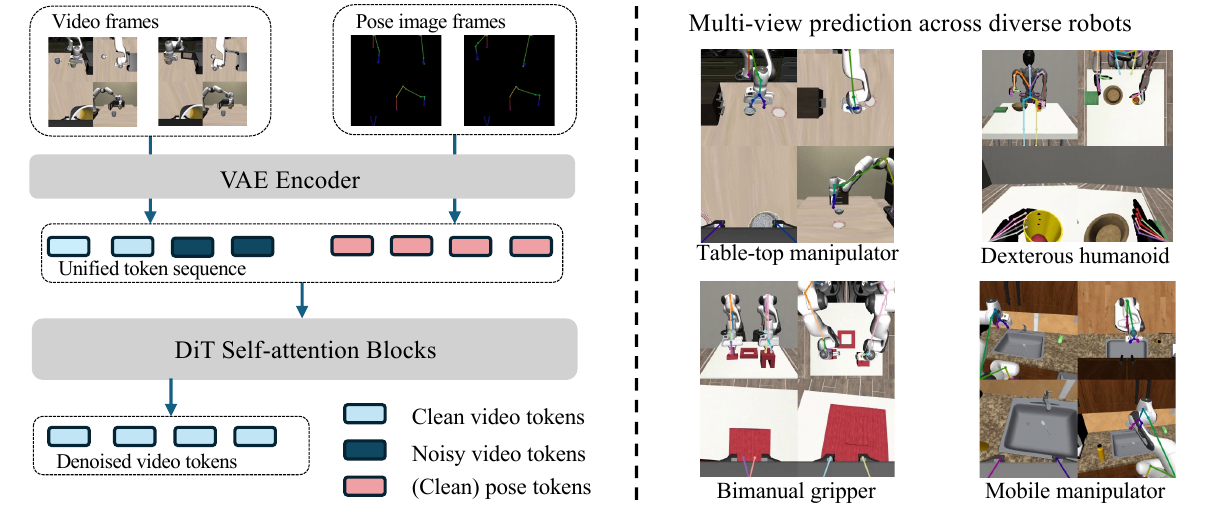}
    \caption{\textbf{Illustration of our pose-image conditioned world model}}
    \label{fig:video demo}
    % \vspace{-5pt}
\end{figure}
% \vspace{-3pt}
Prior robot action-conditioned video diffusion models typically condition on low-dimensional action vectors through AdaLN-Zero modulation \cite{quevedo2025worldgym,zhu2025irasim} or cross-attention \cite{guo2025ctrl}. However, such conditioning often fails to generalize to novel actions. In our approach, we pre-compute future robot joint positions via forward dynamics of the actions and render these positions as a pose skeleton image from the corresponding camera viewpoint. More details about pose-image conditioning are in Appendix~\ref{app:pose image}. Inspired by \cite{tan2025ominicontrol}, we encode the pose image frames using the VAE of the video model and concatenate the resulting pose image tokens with the video tokens into a unified sequence. We adopt multi-view prediction by concatenating third-person and wrist-view camera feeds into a grid for each video and pose-image frame \cite{guo2025ctrl}, so that the model can infer the relative 3D position of each robot joint from multi-view images. We train the model using diffusion-forcing \cite{song2025history,chen2025large} with the flow-matching objective \cite{lipman2022flow}, where we add independent random noise to the history video tokens and uniform noise to the future video tokens, while keeping the pose-image tokens noise-free for conditioning.

% \vspace{-2pt}
\subsection{World Action Planner}\label{sec:method plan}
\begin{algorithm}[thb]
\caption{World Action Planner (WAP)}
\begin{algorithmic}
    \STATE Require: VLM \texttt{Agent}, low-dimensional robot controller $\phi$, world-model \texttt{WM}, (Optional) policy $\pi$,
    \STATE Input: Task description $\ell$, initial state $s$, environment \texttt{env}, 
    \WHILE{not \texttt{done}}
    \item[]\vspace{1pt} $\triangleright$\textbf{ Agent Action Proposal} \hfill
    
    \STATE Propose primitive actions using the VLM agent, and then translate into robot actions using controller:
    $g=\texttt{Agent.ProposeActions}(s, \ell)$, $a=\phi(g,s_{\text{proprio}})$
    \item[]\vspace{1pt}$\triangleright$ \textbf{Global Optimization Guided by Agent Feedback} \hfill

    \STATE Imagine the actions using the world model: $\hat{s}_{\texttt{next}}=\texttt{WM}(s, a)$
    \STATE Optimize based on agent feedback:
    $\Delta g =\texttt{Agent.Optimize}(\hat{s}_{\texttt{next}}, \ell)$, $a=\phi(g+\Delta g, s_{\text{proprio}})$
    \item[]\vspace{1pt}$\triangleright$ \textbf{Local Search with Agent Ranking} \hfill
    \STATE Generate action candidates: $a_1, a_2, \cdots, a_N = \texttt{GridSearch}(a)$
    \STATE Imagine action candidates: $\hat{s}_i=\texttt{WM}(s, a_i)$, $i=1,2,\cdots, N$
    \IF{use policy}
    \STATE Imagine the policy rollout after the candidate action: $\hat{s}_i=\texttt{WM}\bracket{\hat{s}_i, \pi(\hat{s}_i)}$, $i=1,2,\cdots, N$
    \ENDIF
    \STATE Select the best one using VLM agent: $i^*=\argmax\texttt{Agent.Rank}(\hat{s}_1, \hat{s}_2, \cdots, \hat{s}_N, \ell)$
    \item[]\vspace{1pt}$\triangleright$ \textbf{Execute the Actions} \hfill
    \STATE Execute the actions after search: $s, \texttt{done}=\texttt{env.Step}(s, a_{i^*})$
    \IF{use policy}
    \STATE Roll out the policy following the actions $a_{i^*}$: $s, \texttt{done}=\texttt{env.Step}(s, \pi(s))$
    \ENDIF
    
    \ENDWHILE
\end{algorithmic}
\label{alg:mpc}
\end{algorithm}

% \vspace{-2pt}
To achieve generalizable robotic decision-making, we develop an action planning system that orchestrates VLM agents, action-conditioned world models, and manipulation policies, instead of end-to-end imitation learning. The pseudocode is in Alg.~\ref{alg:mpc}, with more details in Appendix~\ref{app:planner detail}.

\looseness=-1
\paragraph{Agent Action Proposal} First, we leverage the extensive world knowledge within foundation VLMs to propose action primitives. Specifically, similar to prior works \cite{liu2025simpact, liu2024moka}, the VLM generates a sequence of primitive actions, such as \texttt{MOVE}, \texttt{ROTATE}, \texttt{GRASP}, and \texttt{RELEASE}. For the \texttt{MOVE} actions, the VLM is asked to identify the target gripper position across \textit{multi-view} images \cite{bonnen2026human}, and then we triangulate these 2D pixel coordinates into 3D space, eliminating the requirement for explicit depth information \cite{liu2024moka,liu2025simpact}. We employ a low-level policy as the robot controller, which processes the current and VLM-proposed target end-effector poses as inputs to generate action chunks for execution.

\paragraph{Global Optimization Guided by Agent Feedback} When proposing primitive actions, VLMs often fail to account for the physical consequences of the resulting trajectory executed. For instance, a naive \texttt{MOVE} command between objects may lead to collisions. To address this, we imagine the action trajectory using our action-conditioned world model and prompt the VLM to evaluate whether the actions are safe and aligned with the intended goal based on the imagined rollout video. If a trajectory is deemed suboptimal, the VLM provides high-level semantic feedback to refine the action sequence \cite{yuksekgonul2024textgrad}. For example, if the gripper risks a collision, the VLM may suggest increasing the height for clearance; similarly, if an object is dropped behind the box, the model might suggest a forward adjustment. The action sequence is then updated through the controller based on this feedback.

\looseness=-1
\paragraph{Local Search with Agent Ranking} While VLMs provide effective corrective feedback for significant trajectory errors, they often struggle with subtle positioning inaccuracies that impede fine-grained tasks, such as grasping a mug by the rim. Although VLMs exhibit robust semantic understanding of spatial relationships, they inherently lack absolute metric grounding, such as directly outputting precise physical coordinates or distances from visual inputs. To overcome this, we transition to a discriminative selection process: we employ a grid search to sample a set of candidate actions and leverage the VLM's evaluative capabilities to identify the optimal trajectory, using the imagined rollout videos from the action-conditioned world model. For fine-grained manipulation, we further roll out a diffusion policy \cite{chi2025diffusion} to complete the grasp once the gripper is proximal to the target; we also imagine this process within the world model to identify the optimal state for subsequent policy execution.

\paragraph{Policies as Tools} In our World Action Planner framework, we treat existing imitation learning policies as modular \textit{tools}, enabling the integration of diverse models, such as diffusion policies \cite{chi2025diffusion}, VLAs \cite{intelligence2025pi_}, or WAMs \cite{ye2026world}. This approach allows us to 
directly leverage powerful generative policies for in-distribution tasks where demonstrations are available and synthesize novel actions using our full model-based planning system in OOD scenarios. This paradigm shift of viewing policies as tools provides a robust pathway for expanding the generalization frontier of robot decision-making.

% \vspace{-2pt}
\section{Theoretical Insights}\label{sec:theory main}
% \vspace{-5pt}
\looseness=-1
In this section, we provide theoretical justifications that model-based planning can facilitate efficient multi-task generalization in scenarios where imitation learning may fail, focusing on the tabular setting  (Section~\ref{sec:tabular main}) and with linear function approximation (Section~\ref{sec:linear main}). Proof can be found in Appendix~\ref{app:theory}.

\paragraph{Setup} We model multi-task learning as a contextual MDP $\cM=\sets{\cM_c, c\in\cC}$ \cite{hallak2015contextual}, where $c\in\cC$ represents the context (task) provided to the agent. Each task is an MDP $\cM_c=\sets{\cS, \cA, \PP, r_c, \rho, H}$, where $\cS$ and $\cA$ are the state and action spaces, respectively. Here, $\PP$ denotes the universal dynamics shared across tasks, while $r_c$ is the task-specific reward function. We consider episodic MDPs with horizon $H$ and initial state distribution $\rho$. In our setting, the agent first interacts with the environment for $K$ episodes. At test time, given a random context $c$, the agent produces a policy $\widehat{\pi}(\cdot;c)$ based on the collected data. We then evaluate the performance using the suboptimality gap $V_{\cM_c}^* - V_{\cM_c}^{\widehat{\pi}(\cdot;c)}$, where the value function $V^{\pi}_{\cM}=\EE_{\cM, \pi,\rho}\mbracket{\sum_{h=1}^H r(s_h,a_h)}$ is the expected cumulative reward.

During the data collection phase, the imitation learning agent can query an expert policy for any task $c$ to roll out in each episode, ultimately outputting a policy $\pi\in\Pi$ that minimizes the empirical error $\text{Pr}\bracket{\pi(s;c) \neq a}$ over all data points $(s, a; c)$ in the dataset. In contrast, the model-based planning agent interacts with the environment in each episode $k$ using a context $c_k$ and policy $\pi_k$ of its choosing. This agent then outputs estimated models $\widehat{\PP}$ and $\widehat{r}$. At test time, given a random context $c$, the agent derives the optimal policy within the estimated MDP $\widehat{\cM}_c = \bracket{\cS, \cA, \widehat{\PP}, \widehat{r}_c, \rho, H}$ through planning.

% \vspace{-8pt}
\subsection{Tabular Setting}\label{sec:tabular main}
% \vspace{-3pt}
Prior work \cite{sun2019model} has established that for certain classes of MDPs, model-based methods can achieve exponential improvements in sample complexity over model-free approaches. However, these results typically rely on strong structural assumptions regarding the model class; in the standard tabular setting, model-free and model-based methods are often shown to be information-theoretically equivalent. In contrast, we demonstrate that in the tabular setting with known rewards for each task, imitation learning can suffer from constant suboptimality if the number of tasks scales linearly with the data budget. Conversely, model-based exploration can efficiently learn the universal dynamics and produce near-optimal policies through planning given new tasks.
\looseness=-1
\begin{theorem}\label{thm:main tabular}
    When the reward $r_c$ is known for any $c\in\cC$, \textbf{(i)} there exists a model-based algorithm such that, given $K$ episodes of interaction, it outputs a policy with $\Tilde{\mathcal{O}}\bracket{\frac{1}{\sqrt{K}}}$ suboptimality gap for any task $c$, and \textbf{(ii)} there exists a contextual MDP instance such that the average suboptimality gap of the imitation learning agent across tasks $c\in\cC$ is at least $\Omega\bracket{\frac{|\cC|}{K}}$. 
\end{theorem}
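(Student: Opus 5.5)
For part \textbf{(i)} we reduce to reward-free exploration. Because $r_c$ is known for every $c$ and the dynamics $\PP$ are shared across tasks, the only unknown is $\PP$. We run a reward-free exploration algorithm in the tabular episodic setting, e.g.\ the reward-free UCRL/RF-Express scheme of Jin et al.\ (2020), or a simpler variant that uses exploration bonuses $b(s,a)\propto \sqrt{\log(\cdot)/N(s,a)}$ as a surrogate reward. After $K$ episodes this yields an empirical model $\widehat{\PP}$ satisfying, uniformly over all reward functions $r\in[0,1]$ and all policies $\pi$,
\[
\abs{V^{\pi}_{(\PP,r)}-V^{\pi}_{(\widehat{\PP},r)}}\le \epsilon, \qquad \epsilon=\Tilde{\mathcal{O}}\bracket{\sqrt{\poly(H,|\cS|,|\cA|)/K}}.
\]
At test time we plan in $\widehat{\cM}_c$ with the true $r_c$, taking $\widehat{\pi}(\cdot;c)=\argmax_\pi V^\pi_{\widehat{\cM}_c}$. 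The standard decomposition
\[
V^*_{\cM_c}-V^{\widehat\pi}_{\cM_c}=\bracket{V^{\pi^*}_{\cM_c}-V^{\pi^*}_{\widehat\cM_c}}+\bracket{V^{\pi^*}_{\widehat\cM_c}-V^{\widehat\pi}_{\widehat\cM_c}}+\bracket{V^{\widehat\pi}_{\widehat\cM_c}-V^{\widehat\pi}_{\cM_c}}
\]
then gives a gap of at most $2\epsilon$: the middle term is $\le 0$, and the outer two terms are each bounded by uniform model accuracy. The guarantee holds for every $c$ simultaneously because the accuracy bound is reward-free.

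The main technical step is the uniform-over-rewards simulation lemma. I would prove it via the simulation lemma, bounding $\sum_h \EE_{\pi}\bigl[\norm{(\widehat\PP-\PP)(\cdot|s_h,a_h)}_1\bigr]$ by the value of $\pi$ under the bonus reward. The exploration guarantee then controls $\max_\pi$ of that value, at rate $\Tilde{\mathcal{O}}(1/\sqrt{K})$ by a pigeonhole/elliptical-sum argument.

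For part \textbf{(ii)} we construct an instance where the dynamics are shared but the expert behavior for one task says nothing about another. Take $H=1$ (or pad the horizon), a single initial state, actions $\cA=\{1,\dots,|\cA|\}$, and contexts $c\in\cC$ with reward $r_c(s,a)=\mathbf{1}\{a=a_c\}$, where $a_c$ is drawn uniformly at random. Let $\Pi$ be the class of all maps $(s,c)\mapsto a$, so the expert is realizable. With $K$ expert episodes spread over contexts, at most $K$ contexts are observed; if $|\cC|\ge 2K$, at least $|\cC|-K\ge |\cC|/2$ contexts are never queried. On an unqueried context the empirical error places no constraint on $\pi(\cdot;c)$, so the learner's output there is independent of $a_c$, and it succeeds with probability at most $1/|\cA|$, giving gap $\ge 1-1/|\cA|\ge 1/2$. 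Averaging over $c$ gives an average gap of $\ge \tfrac{|\cC|-K}{2|\cC|}$, which is $\Omega(1)$ and hence $\Omega(|\cC|/K)$ whenever $|\cC|\lesssim K$. For $|\cC|$ between these regimes a finer count suffices: for general $|\cC|\le K$, let each context appear with a small per-episode probability (e.g.\ sample contexts uniformly), so that a constant fraction of contexts is missed. To match the exact $|\cC|/K$ form I would instead take $|\cC|=\Theta(K)$ directly, which is the regime the text emphasizes ("number of tasks scales linearly with the data budget"), and state the bound as $\Omega(\min\{1,|\cC|/K\})$.

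Finally, we use Yao's principle (random $a_c$) to convert the averaged bound into an existence statement for a fixed instance.

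The main obstacle is part (i)'s reward-free uniform accuracy. Part (ii) is essentially a counting argument, once we have verified that an ERM minimizing empirical error is unconstrained on unseen contexts and that ties can be resolved adversarially, or via the randomized-$a_c$ argument.
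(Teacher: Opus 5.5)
Part \textbf{(i)} is fine and matches the paper, which directly invokes Theorem~3.1 of Jin et al.\ (2020); that guarantee is uniform over reward functions, so the $\tilde{\mathcal{O}}(1/\sqrt{K})$ gap holds for all $c$ at once.

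Part \textbf{(ii)} has a genuine gap. Your instance has one initial state, $H=1$, and a deterministic expert $a_c$, so a single demonstration from context $c$ reveals $a_c$ exactly. In the protocol the imitation learner chooses the contexts $c_k$ itself. So whenever $|\cC|\le K$ it can query every context once, and the ERM policy is then optimal on every task, giving gap $0$. Your instance yields a positive gap only when $|\cC|>K$, and there the gap is a constant bounded by $H$, not a quantity scaling with $|\cC|/K$.

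Consequently your fallback $\Omega(\min\{1,|\cC|/K\})$ is false for your own instance whenever $|\cC|\le K$. That is exactly the regime where $|\cC|/K$ is a real rate: the multi-task analogue of the single-task $\Omega(1/N)$ behavior-cloning bound that the theorem contrasts with part (i).

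The proposed patch of letting each context ``appear with a small per-episode probability'' is not available. The allocation of episodes to contexts is the learner's choice, so the bound must hold for every allocation $(|\cK_c|)_{c\in\cC}$ with $\sum_c|\cK_c|=K$. Even under uniform context sampling, the missed fraction would be about $e^{-K/|\cC|}$, not $|\cC|/K$.

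What is missing is a per-context instance in which $\bar N=K/|\cC|$ demonstrations still leave an $\Omega(1/\bar N)$ gap. This is the single-task lower-bound construction of Rajaraman et al.\ (2020), and it is what the paper uses:
\begin{itemize}
\item $|\cS|$ absorbing states;
\item an initial distribution putting mass $\xi=1/(\bar N+1)$ on each of $|\cS|-1$ states;
\item an expert action drawn uniformly and independently for every pair $(s,c)$, with reward $1$ for matching it.
\end{itemize}
Conditioned on $\cD$, $\pi^*_c$ is uniform over policies consistent with $\cD_c$, independently across contexts. So on any initial state not visited under $c$, the learner loses $H(1-1/|\cA|)$ in expectation. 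The expected unvisited $\rho$-mass is at least $(|\cS|-1)\xi(1-\xi)^{|\cK_c|}$.

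The step with no counterpart in your argument is handling an arbitrary, possibly very uneven, allocation. By AM--GM (convexity of $n\mapsto(1-\xi)^n$), $\frac{1}{|\cC|}\sum_c(1-\xi)^{|\cK_c|}\ge(1-\xi)^{K/|\cC|}\ge e^{-1}$. Hence the average gap is at least $H(1-1/|\cA|)(|\cS|-1)/\bigl(e(\bar N+1)\bigr)$, which is $\Omega(|\cC|/K)$ with $H|\cS|$ factors. This holds in the regime $K\ge(|\cS|-2)|\cC|$, where this $\rho$ is a valid distribution.

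Your unseen-context counting argument is a reasonable complement for $|\cC|\ge 2K$, but it cannot replace this construction.
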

% \vspace{-3pt}
We note that in real-world applications, a VLM can be utilized to evaluate the terminal state of an episode to provide the reward for each task. In the single-task setting, imitation learning may achieve a superior $1/K$ suboptimality gap compared to the $1/\sqrt{K}$ gap of a model-based planning agent, owing to the expert policy. However, because the expert policy is distinct across different tasks, the imitation learning approach suffers from a suboptimality gap that scales linearly with the total number of tasks.

% \vspace{-5pt}
\subsection{Linear Function Approximation}\label{sec:linear main}
% \vspace{-2pt}
We now consider the linear MDP setting as defined in \cite{jin2023provably}, which utilizes a feature map $\phi(s,a): \cS \times \cA \rightarrow \RR^d$. Under this framework, the random (noisy) reward and transitions are represented as linear functions: $\PP(\cdot|s,a) = \innerproduct{\phi(s,a)}{\bm{\mu}(\cdot)}$ and $\EE\mbracket{r(s,a)} = \innerproduct{\phi(s,a)}{\theta}$, where $\bm{\mu}$ denotes unknown measures over $\cS$ and $\theta$ is an unknown vector. The formal details are provided in Definition~\ref{def:linear mdp}. To facilitate learning rewards across tasks, we assume a known task representation map $\psi(c): \cC \rightarrow \RR^{d'}$ exists, such that there is a matrix $\Theta \in \RR^{d \times d'}$ satisfying $\EE[r(s,a;c)] = \phi(s,a)^\top \Theta \psi(c)$ for all tasks.

These feature-based assumptions are common in practice; for instance, $\phi(s,a)$ may represent visual features \cite{oquab2023dinov2} used to model world dynamics \cite{zhou2024dino,assran2025v}, while $\psi(c)$ represents language features derived from task instructions $c$. In such cases, the reward is often approximated by the alignment between language and visual features \cite{radford2021learning}. To ensure that the feature space could be explored at every layer $h$, we assume $\kappa := \min_h\sup_{\pi} \lambda_{\min} \bracket{\EE_{\pi} \mbracket{\phi(s_h,a_h)\phi(s_h,a_h)^{\top}}} > 0$, which is standard as in \cite{li2020sample,kong2023improved}. Given the above assumptions, we have the following result for the model-based planning agent:
\looseness=-1
\begin{theorem}\label{thm:main model}
    There exists a model-based algorithm that learns the dynamics and multi-task rewards, and outputs a policy with $\Tilde{\mathcal{O}}\bracket{\frac{1}{\kappa \sqrt{K}}}$ suboptimality gap for any task $c \in \cC$ with $K$ episodes of data.
\end{theorem}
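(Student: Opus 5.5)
The algorithm has two phases: reward-free exploration of the shared dynamics in the linear MDP, followed by least-squares estimation of the bilinear multi-task reward. At test time we plan in the estimated model. The assumption $\kappa>0$ lets us avoid optimism altogether. For each layer $h$ we use a policy $\pi^{(h)}$ that (approximately) attains $\sup_\pi \lambda_{\min}\bracket{\EE_\pi[\phi(s_h,a_h)\phi(s_h,a_h)^\top]}\ge\kappa$. If this maximizer is not known, we first obtain it with an $\epsilon$-net over policies or a short reward-free warm-up phase, such as that of Jin et al., at lower-order cost.

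\emph{Data collection.} We split the $K$ episodes evenly across layers. In each episode we roll out $\pi^{(h)}$ up to step $h$ and also choose the context $c_k$. The contexts are chosen so that the task features $\psi(c_k)$ form a well-conditioned design in $\RR^{d'}$, for instance a G-optimal or barycentric spanner design over $\psi(\cC)$. We record $(s_h,a_h,s_{h+1},r_h;c_k)$.

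\emph{Step 1 (concentration of the empirical covariance).} By matrix Bernstein, $\Lambda_h=\sum_k \phi_h^k\phi_h^{k\top}$ satisfies $\lambda_{\min}(\Lambda_h)\gtrsim \kappa K/H$ with high probability once $K\gtrsim H d\log(d/\delta)/\kappa$.

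\emph{Step 2 (reward estimation).} The reward satisfies $\EE[r(s,a;c)]=\langle \phi(s,a)\psi(c)^\top,\Theta\rangle_F$, which is a linear regression in the $dd'$-dimensional feature $\phi\otimes\psi$. Its design covariance is $\sum_k(\phi_h^k\phi_h^{k\top})\otimes(\psi_k\psi_k^\top)$. Because the contexts are chosen independently of the states, the smallest eigenvalue of this covariance is bounded below by roughly $\kappa\,\lambda_{\min}(\text{context design})K/H$. Self-normalized concentration then gives, uniformly over $c$,
\[
|\widehat r(s,a;c)-\EE r(s,a;c)|\le \tilde{\mathcal O}\bracket{\sqrt{dd'}/\sqrt{\kappa K}}.
\]

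\emph{Step 3 (dynamics estimation).} We avoid estimating $\bm\mu$ in total variation. Instead we use the standard linear-MDP device: for any function $V:\cS\to[0,H]$ in the class of value functions generated by the planner, the regression $\widehat w_V=\Lambda_h^{-1}\sum_k\phi_h^k V(s_{h+1}^k)$ satisfies
\[
|\phi^\top\widehat w_V-\PP V(s,a)|\le \beta\norm{\phi}_{\Lambda_h^{-1}}
\]
with $\beta=\tilde{\mathcal O}(dH)$. The uniformity comes from a covering argument over linear-then-truncated value functions. Since $\norm{\phi}_{\Lambda_h^{-1}}\le 1/\sqrt{\lambda_{\min}(\Lambda_h)}\lesssim\sqrt{H/(\kappa K)}$, the one-step Bellman error of the estimated model at every $(s,a)$ is $\tilde{\mathcal O}\bracket{\mathrm{poly}(d,H)/\sqrt{\kappa K}}$.

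\emph{Step 4 (planning and simulation lemma).} Given $c$, we run backward value iteration in $\widehat\cM_c$ using these linear estimates, clipping values to $[0,H]$. The simulation lemma, applied both to $\pi^*_c$ and to $\widehat\pi(\cdot;c)$, bounds the gap by $2\sum_h\sup_{s,a}$ of the reward and Bellman errors. This gives a bound of order $\tilde{\mathcal O}\bracket{\mathrm{poly}(d,d',H)/\sqrt{\kappa K}}$, and since $\sqrt\kappa\ge\kappa$ for $\kappa\le1$, it implies the stated $\tilde{\mathcal O}\bracket{1/(\kappa\sqrt K)}$ rate.

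The main obstacle is Step 3: making the Bellman-error bound uniform over the data-dependent value functions produced by planning for arbitrary, possibly infinitely many, contexts $c$. The plan is to take the covering number of the function class $\{\min(\phi^\top w,H)\}$ with $\norm{w}\le \mathrm{poly}(d,H)K$. This class does not depend on $c$, so a single union bound covers all tasks, and the covering contributes only a $\sqrt{d\log K}$ factor. A secondary concern is the initial computation of the exploratory policies $\pi^{(h)}$. If exact maximizers are unavailable, we would accept a constant-factor approximation of $\kappa$ obtained by a reward-free exploration phase, which changes only constants.
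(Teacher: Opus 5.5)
Apart from Step~1, your plan follows the paper's proof. Both use a G-optimal design over $\psi(\cC)$ to choose contexts, and ridge regression for the dynamics (your $\widehat w_V$ is exactly $\widehat{\bm{\mu}}_h V$) and for $\mathrm{vec}(\Theta_h)$ in the Kronecker feature $\psi(c)\otimes\phi(s,a)$. Both make the self-normalized bounds uniform by covering, and finish with a simulation-lemma decomposition.

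The gap is Step~1. Everything downstream uses $\lambda_{\min}(\Lambda_h)\ge c\,\kappa K/H$. You obtain this by matrix Chernoff from rollouts of a policy $\pi^{(h)}$ with $\lambda_{\min}(\EE_{\pi^{(h)}}[\phi_h\phi_h^\top])\ge c\,\kappa$. Assumption~\ref{ass:exp} only asserts that such a policy exists. It is explicitly unknown, and computing it requires the transition kernel you are trying to learn. Your fallback does not supply it:
\begin{itemize}
\item An $\epsilon$-net over policies is not finite in general for arbitrary $\cS$. Even over a parameterized class (greedy policies for linear $Q$-functions and their mixtures), evaluating each candidate's covariance on-policy costs episodes proportional to the net size, i.e.\ exponentially many in $d$.
\item A reward-free warm-up certifies near-optimal planning only for rewards linear in $\phi$ (and the algorithm of \cite{jin2020reward} is tabular). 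The quantity $\sup_\pi\min_{\norm{u}=1}\EE_\pi[(u^\top\phi_h)^2]$ is instead a nonlinear design criterion over mixtures of policies. Extracting an approximate maximizer needs uniform evaluation of the quadratic rewards $(u^\top\phi)^2$ to accuracy $O(\kappa)$, plus a separate experimental-design argument. That route costs a burn-in of order $\mathrm{poly}(d,H)/\kappa^2$ episodes, not a change in constants.
\end{itemize}

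The paper closes exactly this step with Theorem~\ref{thm:wagen} \cite{wagenmaker2022reward}. That result reaches $\lambda_{\min}(\Lambda_h^c)\ge\kappa/\gamma^2$ within $\mathcal{O}(dH/(\kappa\gamma^2))$ episodes without knowing any exploratory policy. The paper runs it once for each $c\in\supp(p_c)$. Then $\Lambda_h\succeq(\kappa/\gamma^2)\bm{I}$, and the reward design matrix satisfies $\succeq G\otimes(\kappa/\gamma^2)\bm{I}$ with $G=\sum_{c\in\supp(p_c)}\psi(c)\psi(c)^\top$. Substituting this for your Step~1, your Steps~2--4 go through with $\norm{\phi}_{\Lambda_h^{-1}}\le\gamma/\sqrt{\kappa}$ and $K=\tilde{\mathcal{O}}(dd'^2H/(\kappa\gamma^2))$.

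The $1/\kappa$ in the stated rate comes from this guarantee, which amounts to $\lambda_{\min}(\Lambda_h^c)=\Omega(\kappa^2K_c/(dH))$. Your sharper $1/\sqrt{\kappa K}$ rests on the known-policy Chernoff step. A $1/\kappa^2$ burn-in from your fallback would be absorbed by the stated rate, because $\mathrm{poly}/(\kappa\sqrt K)\ge H$ whenever $K\le\mathrm{poly}/\kappa^2$. It would not be absorbed by $1/\sqrt{\kappa K}$.

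Two smaller points. In Step~2, state a Loewner bound $\succeq c_0(\kappa K/H)(V_p\otimes\bm{I})$ with $V_p=\sum_c p_c\psi(c)\psi(c)^\top$, and use $\max_c\norm{\psi(c)}^2_{V_p^{-1}}=d'$. Do not rely on $\lambda_{\min}(V_p)$, which the assumptions do not control. Also, your clipping and covering of $\{\min(\max_a\phi^\top w,H)\}$ with $\norm{w}\le\mathrm{poly}(d,d',H)\,K$ is more careful than the paper. The paper applies a covering bound for optimal value functions of genuine linear MDPs ($\norm{w}\le 2H\sqrt d$) to $\widehat V^{\widehat\pi}$, even though $\widehat{\bm{\mu}}_h$ is only a signed measure.
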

% \vspace{-3pt}
Next, we analyze the imitation learning agent. We consider a policy class consisting of multiclass classifier functions $\pi(f)(s;c) = \argmax_a f(s,a;c)$ associated with a function $f\in \cF$ \cite{rajaraman2021value}. While the policy is optimal if $f=Q^*$ is the optimal Q-function, we do not strictly require $Q^*\in\cF$ to ensure the optimal policy is realizable in $\cF$. Under function approximation, we assume each $f \in \cF$ is defined using the feature maps: $f(s,a;c) = f\bracket{\phi(s,a), \psi(c)}$. We obtain the following results:

\begin{theorem}\label{thm:main linear imitation}
    In the single-task setting, the optimal policy is a linear policy \cite{malik2021generalizable,rajaraman2021value}, where the function $f^*(s,a)$ with $\pi^*(s)=\argmax_af^*(s,a)$ can be represented as a linear function of $\phi(s,a)$, and the imitation learning agent's output policy achieves $\Tilde{\mathcal{O}}\bracket{\frac{1}{K}}$ suboptimality. In the multi-task setting, however, there exists an instance where the optimal policy $\pi^*(s;c) = \argmax_a f(\phi(s,a), \psi(c))$ cannot be realized by any function $f$ that is an $n$-degree polynomial in the feature $\psi(c)$ with $n \leq |\cA|$.
\end{theorem}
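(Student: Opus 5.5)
The statement has two parts, and I would prove them separately.

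\textbf{Single-task part.} I would first argue that the optimal policy is linear. In a linear MDP, Proposition 2.3 of \cite{jin2023provably} gives $Q_h^*(s,a)=\innerproduct{\phi(s,a)}{w_h^*}$ for some $w_h^*\in\RR^d$. This holds because $r_h+\PP_h V_{h+1}^*$ is linear in $\phi$ by the definition of $\bm{\mu}$ and $\theta$. Taking $f^*=Q^*$ then gives $\pi^*(s)=\argmax_a\innerproduct{\phi(s,a)}{w_h^*}$, so $\pi^*$ lies in the linear class $\cF_{\text{lin}}=\sets{\innerproduct{\phi}{w}}$.

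The imitation learner is empirical risk minimization over this realizable multiclass class. The Natarajan dimension of linear multiclass argmax classifiers is $\tilde{\mathcal{O}}(d)$. This is the bound used by \cite{rajaraman2021value} and \cite{malik2021generalizable}, which I would invoke rather than reprove. A standard realizable-case bound then gives, at each step $h$, a population disagreement $\text{Pr}_{s\sim d_h^{\pi^*}}(\widehat\pi(s)\ne\pi^*(s))=\tilde{\mathcal{O}}(d/K)$. The behavior-cloning error-compounding lemma bounds the suboptimality by $H\sum_h$ of these disagreements, which gives $\tilde{\mathcal{O}}(dH^2/K)=\tilde{\mathcal{O}}(1/K)$.

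\textbf{Multi-task part.} I would build an explicit instance in which the optimal action is a highly oscillating function of a scalar task feature. Take $H=1$ and a single state $s$, so that $\pi^*(s;c)=\argmax_a r(s,a;c)$. Take $\cA=\{1,\dots,m\}$ with one-hot features $\phi(s,a)=e_a\in\RR^m$, and a scalar task feature $\psi(c)=t\in\RR$ with $d'=1$.

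Each $f\in\cF$ then reduces to $m$ functions $g_a(t)=f(e_a,t)$, and the polynomial restriction makes each $g_a$ a polynomial of degree at most $n$. I would choose $\Theta$ and the contexts so that, as $t$ ranges over the contexts $t_1<t_2<\dots<t_N$, the optimal action alternates in a pattern such as $1,2,1,2,\dots$. This pattern is realized by the true linear reward through extra feature dimensions, for example $\psi(c)$ one-hot over contexts while $f$ can only see a scalar projection. To keep the true reward linear, I would instead let $\psi(c)\in\RR^{d'}$ be general and require $f$ polynomial in $\psi$; the obstruction then comes from sign patterns.

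The key counting step is as follows. For $\argmax_a g_a(t_j)$ to equal $\pi^*(c_j)$, the sign of $g_1-g_2$ must alternate across $N$ points. A nonzero degree-$n$ polynomial has at most $n$ roots, so at most $n+1$ alternations are possible. I would choose $N=|\cA|+2>n+1$ contexts with alternating optimal actions. Only two actions are used for the alternation; the remaining actions get very negative reward, so they are never optimal and never disturb the argmax.

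For the true reward, each context gets its own coordinate in $\psi$ and $\Theta$ encodes the alternating winner. This is linear, and the alternation is along a one-dimensional curve. Concretely, I would take $\psi(c_j)=(1,t_j)$ and let the reward be linear in $t$, but place the contexts so that what $f$ sees is a degree-limited polynomial of a scalar. I would reconcile this by letting $\psi(c)=(t,t^2,\dots,t^{N})$ (moment features). Then $\phi^\top\Theta\psi(c)=p_a(t)$ for arbitrary polynomials $p_a$ of degree $N$, and I can choose $p_1-p_2$ with $N$ sign changes. Any $f$ polynomial of degree $n$ in $\psi$ must, however, be written as a polynomial in the coordinates of $\psi$.

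\textbf{Main obstacle.} The hardest step is making the degree bound precise, since a degree-$n$ polynomial in $\psi$ is a polynomial in $t$ of degree up to $nN$. I expect to resolve this by taking $\psi(c)\in\RR^{d'}$ with $d'=1$ and $\psi(c)=t_c$, and putting the richness into $\phi$ instead. I would use $|\cA|$ actions and a single state, so that the reward $\phi(s,a)\Theta t$ is linear in $t$ per action. The optimal action is then the upper envelope of $|\cA|$ lines, and each action is optimal on one interval. I would then force a pattern that needs more than $|\cA|$ alternations via multiple states whose features differ, so that $f$ must be shared across states. Checking that this sharing actually yields the required alternation count, i.e.\ the sign-change argument, is the step I expect to need the most care.
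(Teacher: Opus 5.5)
Your single-task part is fine and is the paper's argument: $Q_h^*$ is linear in $\phi$, and the realizable behavior-cloning rate comes from \cite{rajaraman2021value}.

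The multi-task part has a genuine gap: you never reach a valid instance, and the designs you try cannot work. Every concrete construction you write down has $H=1$. There $\pi^*(s;c)=\argmax_a \phi(s,a)^\top\Theta\psi(c)$ is realized exactly by $f(\phi,\psi)=\phi^\top\Theta\psi$, a polynomial of degree $1$ in $\psi$, whatever $\phi$, $\Theta$, $\psi$ or the number of states. This is why the one-hot and moment-feature versions collapse. The reward is linear in $\psi$ by assumption, so your $p_a(t)$ are linear in $\psi$, and their degree in $t$ is irrelevant. It is also why sharing $f$ across several states cannot help: the true bilinear reward is itself a shared degree-$1$ $f$. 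Your final fallback $d'=1$, $\psi(c)=t_c$, fails for every horizon. The reward becomes $t_c\, r_0(s,a)$ with $r_0=\phi^\top\Theta$. Optimal policies are invariant under positive rescaling of the reward, so $\pi^*(\cdot;c)$ depends only on the sign of $t_c$, and the optimal action switches at most once.

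The missing idea is that the non-polynomial dependence on $\psi(c)$ must come from the Bellman maximization. This requires $H\ge 2$ and $\psi$ of dimension at least two. You came close when you noted that with $\psi=(1,t)$ the optimal action is read off an upper envelope of lines; the fix is to push those envelopes one level down. The first-step decision then compares $V_2^*(s';c)=\max_a\phi(s',a)^\top\Theta\psi(c)$ across next states $s'$, and two interleaved envelopes cross many times.

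The paper's instance is as follows:
\begin{itemize}
\item $H=2$; at the start state, actions $1$ and $2$ lead to states $1$ and $2$, and all other actions lead to a zero-reward state.
\item $\psi(c)=(\cos 2\pi c,\sin 2\pi c,0)$ and $\Theta$ is the identity.
\item Second-level action features lie on the unit circle, at the even multiples of $\pi/|\cA|$ for state $1$ and the odd multiples for state $2$.
\end{itemize}
The optimal first action then flips at the $2|\cA|$ points $c_i=(2i+1)/(4|\cA|)$. Your sign-change count is the right tool, but with the trigonometric bound. $f_1-f_2$ is a degree-$n$ polynomial in $(\cos 2\pi c,\sin 2\pi c)$, hence has at most $2n$ zeros on the circle (substitute $z=e^{\mathrm{i}x}$), which gives $n\ge|\cA|$.

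This only excludes $n<|\cA|$. In that instance $\mathrm{sign}\bigl(V^*(1,c)-V^*(2,c)\bigr)=\mathrm{sign}\cos(2\pi|\cA|c)$, and $\cos(2\pi|\cA|c)$ is the Chebyshev polynomial $T_{|\cA|}$ evaluated at $\cos 2\pi c$. So the instance certifies $n\ge|\cA|$ but not the boundary case $n=|\cA|$ in the statement.

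A univariate variant closer to your attempt closes this. Take $\psi(c)\propto(1,t)$. Let the action rewards at states $1$ and $2$ be the tangent lines to $t^2$ at alternating points $x_1<\dots<x_{2|\cA|}$, affinely rescaled into $[0,1]$. The two envelopes alternate in sign at the $x_j$. Hence $f_1-f_2$, a univariate polynomial of degree at most $n$ in $t$, needs $n\ge 2|\cA|-1$.
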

% \vspace{-3pt}
These findings indicate that while the optimal policy possesses a simple linear structure in the single-task setting that allows for efficient learning, this structure breaks down in the multi-task setting, where the dependency on task features becomes highly complex. This aligns with empirical observations: training a generalist expert policy is often difficult due to distinct behaviors across tasks, whereas a generalizable multi-task reward model can be effectively obtained using pretrained vision and language models, and we can obtain optimal policies with model-based planning across tasks.

% \vspace{-2pt}
\section{Experiments}
% \vspace{-3pt}
\looseness=-1
Here, we provide comprehensive experiments of both action-conditioned world modeling with our robot pose-image conditioning in Section~\ref{sec:video gen exp}, and generalizable robot planning with our World Action Planner in Section~\ref{sec:planning exp}. Experiment details and qualitative examples are in Appendix~\ref{app:exp detail}. 
% \vspace{-5pt}
\subsection{Action-Conditioned World Modeling}\label{sec:video gen exp}
% \vspace{-3pt}
\looseness=-1
We evaluate our pose-image conditioned world model by addressing three primary questions: 
\textbf{Q1:} Can we accurately generate multi-view future images conditioned on robot pose-images? 
\textbf{Q2:} Does pose-image conditioning improve generalization compared with low-dimensional action conditioning approaches? 
\textbf{Q3:} Can the model learn dynamics across diverse robot embodiments and action spaces?

\paragraph{Setup} 
We validate our approach across four simulation suites: LIBERO \cite{liu2023libero}, Robocasa \cite{nasiriany2024robocasa}, MimicGen \cite{mandlekar2023mimicgen}, and DexMimicGen \cite{jiang2025dexmimicgen}. To generalize to exploratory actions, we perturb demonstration trajectories with Gaussian noise following the MPPI algorithm \cite{williams2017model}, a standard for world-model-based control \cite{hansen2023td, jain2025smooth}. Each task uses 100 training trajectories (including perturbed versions) and 10 held-out evaluation trajectories. We employ \texttt{Wan}-T2V-1.3B \cite{wan2025wan} as the backbone, finetuning for 10K steps with a global batch size of 64. Models process 21 history frames\footnote{The spatial-temporal VAE structure of \texttt{Wan} requires $4n+1$ frames.} at 7 FPS to predict 20 future frames at 20 FPS. Input consists of four camera views concatenated into a $2\times2$ grid ($224$px per view). For inference efficiency, we use 20 diffusion steps for sampling. Details in Appendix~\ref{app:wm exp setup}.

\paragraph{Baselines and Evaluation} 
We compare against state-of-the-art diffusion-based world models: WPE \cite{quevedo2025worldgym} and IRA-Sim \cite{zhu2025irasim}, which utilize AdaLN-Zero modulation \cite{peebles2023scalable} for the action embeddings, and Ctrl-World \cite{guo2025ctrl}, which employs cross-attention with low-dimensional action tokens. For cross-embodiment methods, we adapt VLA architectures including unified action spaces \cite{liu2024rdt}, embodiment-aware encoders \cite{bjorck2025gr00t}, and soft prompting \cite{zheng2025x}. All baselines use the same \texttt{Wan}-T2V-1.3B backbone, camera configurations, and frame schedules, but are trained for 20K steps (twice our method’s duration) to ensure a competitive comparison. Evaluation metrics include LPIPS \cite{zhang2018unreasonable} and PSNR \cite{hore2010image}, as they best align with human preferences for controllable world modeling \cite{zhu2025irasim}.
\begin{table}[thb]
    \centering
    \resizebox{\linewidth}{!}{\begin{tabular}{c|cccc|cccccc}
    \toprule
        \multirow{2}{*}{\diagbox{Method}{Dataset}} &\multicolumn{2}{c}{LIBERO-90} & \multicolumn{2}{c|}{DexMimicGen} & \multicolumn{2}{c}{LIBERO-Long} & \multicolumn{2}{c}{LIBERO-Spatial}  
       &\multicolumn{2}{c}{MimicGen-Robot} \\
       &    wrist-view & third-view & wrist-view & third-view & wrist-view & third-view & wrist-view & third-view & wrist-view & third-view \\
       \midrule
       WPE & 15.01 / 0.339 & 19.05 / 0.121 & 15.52 / 0.298 & 19.38 / 0.142 & 14.05 / 0.364 & 17.97 / 0.157 & 14.72 / 0.379 & 19.50 / 0.156 & 16.38 / 0.193 & 17.10 / 0.130 \\
       IRA-Sim & 14.94 / 0.343 & 19.73 / 0.122 & 15.43 / 0.293 & 18.42 / 0.142 & 14.00 / 0.366 & 17.94 / 0.160 & 14.76 / 0.384 & 19.52 / 0.153 & 16.37 / 0.195 & 17.15 / 0.130 \\
      Ctrl-World vel.\tablefootnote{The original implementation in Ctrl-World \cite{guo2025ctrl} uses end-effector position control, and all other baselines use end-effector velocity control. We found out that velocity control performs similar to position control on most datasets, while position control generalizes poorly in the new scene of LIBERO-Spatial where the position of the robot and objects are changed.} & 15.54 / 0.318 & 20.10 / 0.109 & \underline{15.56} / \underline{0.276} & \underline{19.19} / \underline{0.135} & \underline{14.63} / \underline{0.345} & 18.26 / 0.139 & \underline{15.50} / \underline{0.361} & \underline{19.76} / \underline{0.137} & 16.80 / 0.171 & 17.43 / \underline{0.124}\\
       Ctrl-World pos. & \underline{15.56} / \underline{0.317} & \underline{20.13} / \underline{0.108} & 15.55 / 0.277 & 19.18 / 0.136 & 14.61 / 0.348 & \underline{18.27} / \underline{0.138} & 14.23 / 0.403 & 19.28 / 0.154 & \underline{16.82} / \underline{0.169} & \underline{17.44} / 0.126  \\
       \textbf{Ours} & \textbf{{17.02}} / \textbf{{0.286}} & \textbf{{23.13}} / \textbf{{0.085}} & \textbf{{16.33 / 0.266}} & \textbf{{21.18}} / \textbf{{0.112}} & \textbf{15.98} / \textbf{0.320} & \textbf{22.14} / \textbf{0.093} & \textbf{16.75} / \textbf{0.322} & \textbf{22.79} / \textbf{0.097} & \textbf{18.02} / \textbf{0.144} & \textbf{21.40} / \textbf{0.095}\\
       \midrule
       Rel. Improve. & 9.4\% / 9.8\% & 14.9\% / 21.3 \% & 4.9\% / 3.6\% & 10.4\% / 17.0\% & 9.2\% / 7.2\% & 21.2\% / 32.6\% & 8.1\% / 10.8\% & 15.3\% / 29.2\% & 7.1\% / 14.8\% & 22.7\% / 23.4\% \\
         \bottomrule
    \end{tabular}}
    \caption{\textbf{Results for single-embodiment action-conditioned world modeling.} We evaluate our pose-image conditioning against prior methods on diverse datasets with single embodiment.
    Each cell represents {PSNR$\uparrow$ / LPIPS$\downarrow$} metrics averaged across the predicted frames, with the best being \textbf{bold} and the second best \underline{underline}, and relative improvements computed against the second best number. We consistently outperform prior baselines with an average of 11.4\% improvement for in-distribution data and 16.8\% improvement in generalization settings.}
    \label{tab:single-robot}
    % \vspace{-20pt}
\end{table}

\paragraph{Quantitative Analysis}
\looseness=-1
We first evaluate in-distribution prediction (\textbf{Q1}) with a 7-DoF Franka-Panda (LIBERO-90) and a bi-manual dexterous robot (DexMimicGen), achieving a consistent average $11.4\%$ improvement over baselines (Table~\ref{tab:single-robot}). To assess generalization (\textbf{Q2}), we conduct zero-shot evaluation on LIBERO-10 and LIBERO-Spatial tasks, alongside few-shot finetuning on distinct hardware (Sawyer, IIWA, and UR5e) within the MimicGen-Robot, using the LIBERO-90 model. Across these novel trajectories, our pose-image conditioning yields a $16.8\%$ average improvement over prior methods, confirming its superior generalization capability. Qualitative results in Appendix~\ref{app:world model qualitative results}.

\begin{wraptable}{r}{0.5\linewidth}
    \centering
    \resizebox{\linewidth}{!}{\begin{tabular}{ccc}
    \toprule
       \multirow{2}{*}{\diagbox{Method}{Dataset}}   & \multicolumn{2}{c}{Mixture dataset}  \\
         & wrist-view & third-view   \\
         \midrule
        Unified action space & 14.60 / 0.323 & 17.80 / 0.164 \\
        Embodiment-aware encoder & \underline{14.68} / \underline{0.321} &  17.52 / 0.168 \\
        Soft prompt & 14.66 / 0.323 & \underline{17.83} / \underline{0.162} \\
        \textbf{Ours} & \textbf{15.11} / \textbf{0.308} & \textbf{19.44} / \textbf{0.131} \\
         \bottomrule
    \end{tabular}}
    \caption{\textbf{Results for cross-embodiment modeling ({PSNR$\uparrow$ / LPIPS$\downarrow$}).} We use a mixture of trajectories from multiple robots with distinct action spaces. Our method consistently outperforms existing approaches.}
    \label{tab:cross-robot}
    \vspace{-10pt}
\end{wraptable}

\paragraph{Cross-Embodiment Modeling (Q3)} 
To evaluate cross-embodiment world modeling, we train on a heterogeneous mixture of RoboCasa, MimicGen, and DexMimicGen, spanning action dimensions of 7 (fixed arm), 12 (mobile Panda), 14 (bi-manual parallel grippers), and 24 (bi-manual dexterous hands). 
% For all baselines, we adopt the cross-attention mechanism from Ctrl-World \cite{guo2025ctrl} as the base action-conditioner, given its strong baseline performance (Table~\ref{tab:single-robot}). 
As shown in Table~\ref{tab:cross-robot}, our method consistently outperforms prior approaches for cross-embodiment training. These results suggest that representing actions via robot pose images provides a scalable, embodiment-agnostic interface that may be of broader utility for future world-action models.

% \vspace{-2pt}
\subsection{World Action Planner}\label{sec:planning exp}
% \vspace{-3pt}
\looseness=-1
In this section, we evaluate our World Action Planner in three generalization scenarios: \textit{compositional task generalization}, \textit{new layout generalization}, and \textit{zero-shot generalization}. Details in Appendix~\ref{app:planner exp details}.

\paragraph{Setup} We evaluate 12 tasks in the LIBERO \cite{liu2023libero} and Robosuite \cite{zhu2020robosuite} environments, chosen for their modular design and object diversity. Success rates are measured across 50 trials per task. We employ Gemini 3.0 Flash \cite{pichai2025new} as our default VLM agent. For imitation learning baselines, we use SOTA VLA $\pi_{0.5}$ \cite{intelligence2025pi_} and WAM cosmos-policy \cite{kim2026cosmos}. To isolate the impact of our action-conditioned world model, we compare against a \textit{vision-language planner} baseline that directly executes actions from the VLM without further optimization and search. We further compare against prior world model-based policy enhancement methods where actions are sampled from policies rather than VLM agent. Specifically, we adapt SAILOR \cite{jain2025smooth} with MPPI sampling and GPC-RANK \cite{qi2025strengthening} with best-of-N sampling for test time planning, using the same world model and VLM to provide imaginations and rewards.

% \vspace{-2pt}
\subsubsection{Compositional Task Generalization}\label{sec:libero10}
\begin{figure}[htb]
% \vspace{-8pt}
    \centering
    \includegraphics[width=\linewidth]{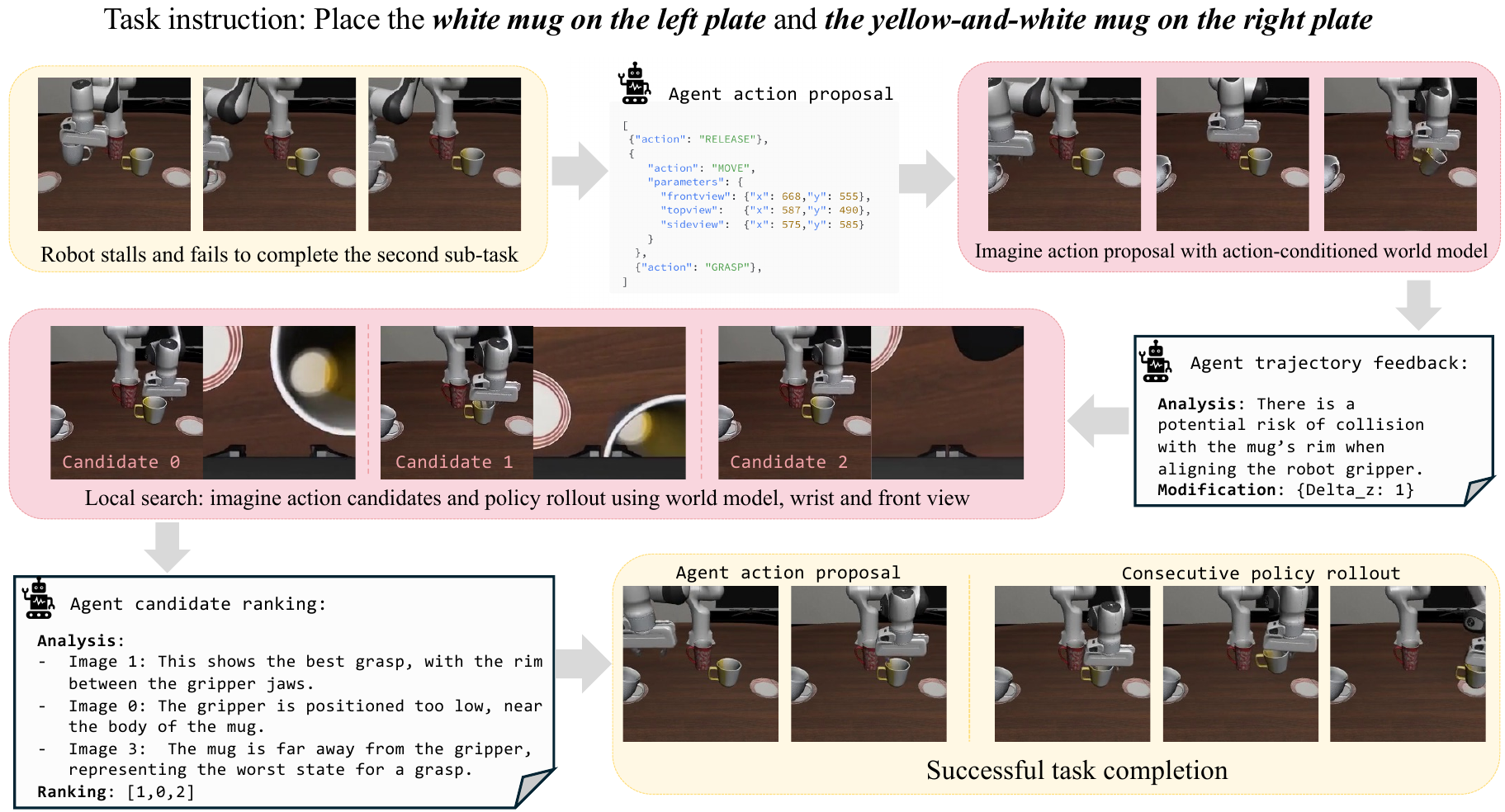}
    \caption{\textbf{Illustration of world action planner in compositional task generalization}}
    \label{fig:libero10}
    % \vspace{-5pt}
\end{figure}
In real-world applications, robot policies are often required to generalize to compositional, long-horizon tasks. For instance, a robot may be trained to pick and place individual objects, but a task such as tidying a room requires it to execute multiple such placements in a continuous sequence. We adopt four tasks from LIBERO-Long, each composed of two tasks from the LIBERO-90 suite and \textit{train our diffusion policy and world model on the LIBERO-90 dataset exclusively}. As the official checkpoints for $\pi_{0.5}$ and cosmos-policy do not include models trained on LIBERO-90, we fine-tune these baselines ourselves using the same data to ensure a fair comparison. Details in Appendix~\ref{app: libero long details}.

\looseness=-1
When transferring to these newly composed tasks, we observe a distinct failure mode in end-to-end VLA models: while they often complete the first sub-task successfully, they stagnate immediately afterward with near no-op actions. This behavior stems from the policy's inability to transition from the terminal state of the first sub-task to the starting configuration of the next, which is missing in the training demonstrations. Our World Action Planner overcomes this by reasoning with VLM agents; once the first sub-task is completed, the VLM agent proposes bridging actions to move the robot toward the subsequent target. During the local search process, we imagine policy rollouts following these interventions, confirming that the policy can successfully resume the next sub-task from the new state. As shown in Table~\ref{tab:libero10}, our method achieves significantly higher success rates than prior approaches.
\begin{table}[htb]
% \vspace{-5pt}
    \centering
    \resizebox{\linewidth}{!}{\begin{tabular}{c|cccc}
    \toprule
        \diagbox{Method}{Task} & \makecell{PnP alphabet soup \\ \& tomato sauce} & \makecell{PnP white mug \\ \& yellow and white mug} & \makecell{PnP white mug \\ \& chocolate pudding} & \makecell{PnP alphabet soup \\ \& cream cheese box} \\
       \midrule
       $\pi_{0.5}$ & 4 & 0 & 0 & 0 \\
       cosmos-policy  & 0 & 0 & 0 & 0 \\
       SAILOR & 18 & 0 & 8 & 2 \\
       GPC-RANK & 10 & 0 & 0 & 0 \\
       Vision-language planner & 56 &   28 & 46 & 32 \\
       \textbf{World action planner} &  \textbf{72} & \textbf{68} & \textbf{78} & \textbf{70} \\
       \bottomrule
    \end{tabular}}
    \caption{\textbf{Results for compositional task generalization.} We evaluate on 4 tasks in the LIBERO-Long suite, each composed by 2 seen tasks in the LIBERO-90 dataset. Our method significantly outperforms prior baselines.}
    \label{tab:libero10}
    % \vspace{-20pt}
\end{table}

\subsubsection{New Layout Generalization}
\begin{figure}[htb]
% \vspace{-5pt}
    \centering
    \includegraphics[width=\linewidth]{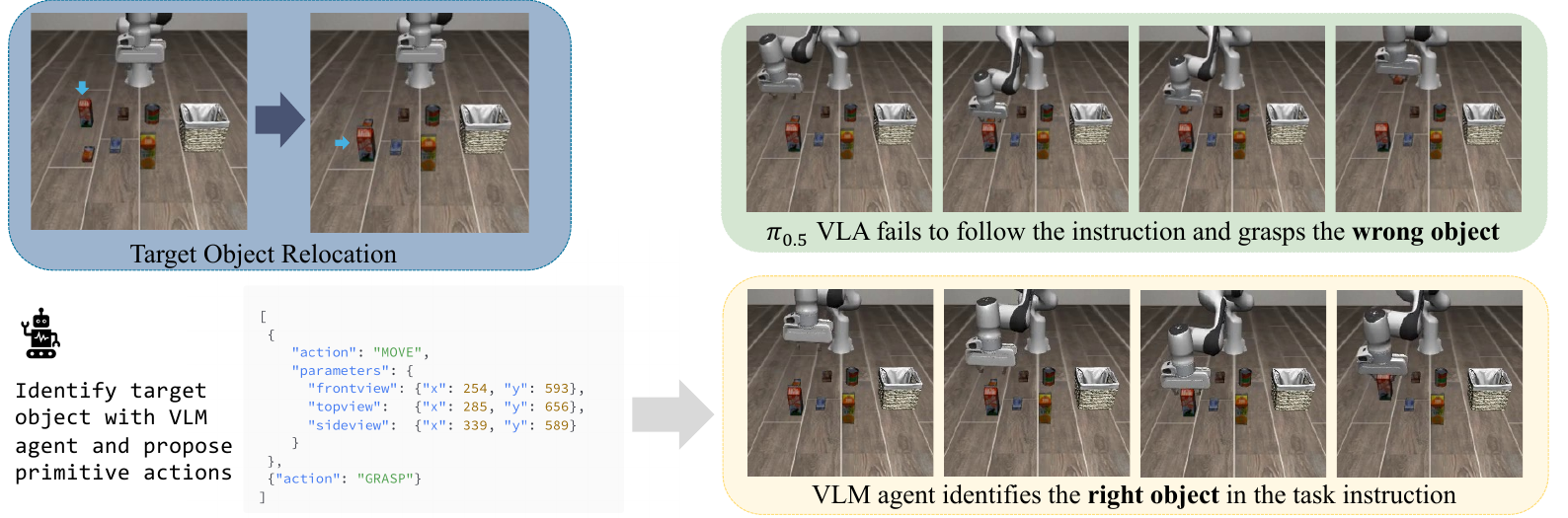}
    \caption{\textbf{Illustration of new layout generalization}}
    \label{fig:libero object}
    % \vspace{-5pt}
\end{figure}
A generalizable robot must adapt to novel object configurations at test time, despite being trained on a limited distribution of layout samples. To evaluate this capability, we utilize six tasks from the LIBERO-Object suite involving pick-and-place operations, where we introduce significant layout shifts by varying the positions of both target and distractor objects during testing. For baselines, we employ the official LIBERO checkpoints for $\pi_{0.5}$ and cosmos-policy. We finetune the diffusion policy and world model—previously trained on LIBERO-90—using the official LIBERO-Object dataset, which contains demonstrations only in the \textit{original layout}. To highlight our approach's sample efficiency, we train our policy using only \textit{5 expert demonstrations} per task, whereas baselines utilize the full dataset of over 40 demonstrations. For the world model, we augment these 5 demonstrations with 10 additional trajectories per task by applying Gaussian noise perturbations to the expert actions.

\begin{table}[htb]
% \vspace{-5pt}
    \centering
    \resizebox{\linewidth}{!}{\begin{tabular}{c|cccccc}
    \toprule
        \diagbox{Method}{Task}  & PnP alphabet soup & PnP cream cheese & PnP salad dressing & PnP ketchup & PnP milk & PnP chocolate pudding \\
        \midrule
        $\pi_{0.5}$ & 0 & 0 & 0 & 0 & 0 & 10 \\
        cosmos-policy & 0 & 0 & 0 & 0 & 0 & 0 \\
        SAILOR & 0 & 0 & 0 & 0 & 0 & 22 \\
        GPC-RANK & 0 & 0 & 0 & 0 & 0 & 16 \\
        Vision-language planner & 30 & 50 & 32 & 16 & 34 & 64 \\
        \textbf{World action planner} & \textbf{88} & \textbf{86} & \textbf{90} & \textbf{66} & \textbf{84} & \textbf{78} \\
        \bottomrule
    \end{tabular}}
    \caption{\textbf{Results for new layout generalization.} At test time we modify the layout and positions of the objects in the LIBERO-Object task suite. Our method consistently achieves high success rates. Details in Appendix~\ref{app:details libero object}.}
    \label{tab:libero-object}
    % \vspace{-15pt}
\end{table}

\looseness=-1
In our planner pipeline, the VLM agent identifies the target object within a novel layout and directs the robot to its proximity. Subsequently, our diffusion policy executes the grasp, followed by a second VLM-guided phase to navigate the gripper for placement. We observe that while the few-shot policy fails end-to-end due to navigation errors, it remains highly effective for the localized grasping manipulation required within our framework, even with modified layouts; the VLM agent successfully manages high-level identification and coarse navigation, allowing the policy to focus on fine-grained manipulation. This decoupling of high-level reasoning from low-level manipulation enables successful task completion with significantly less expert data than traditional end-to-end approaches. As shown in Table~\ref{tab:libero-object}, our method achieves high success rates across all tasks. In contrast, generalist VLAs and WAMs fail to ground instructions to the modified scene, instead defaulting to motion priors from the training layout. This leads to persistent failure modes where baselines either grasp distractors or enter futile re-grasp cycles in empty space around the original target coordinates (Fig.~\ref{fig:app libero object pi},~\ref{fig:app cosmos libero object}).

% \vspace{-3pt}
\subsubsection{Zero-shot Generalization}
\begin{figure}[htb]
% \vspace{-3pt}
    \centering
    \includegraphics[width=\linewidth]{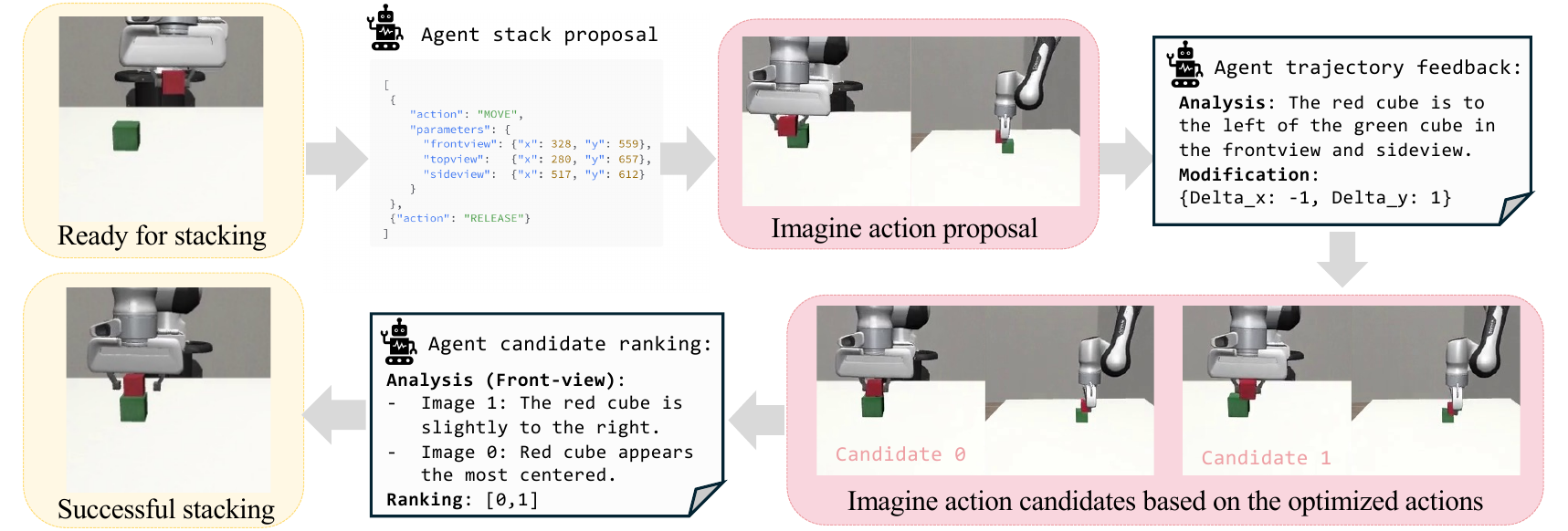}
    \caption{\textbf{Illustration of world action planner with zero-shot planning in \texttt{StackCube}.}}
    \label{fig:robosuite}
    % \vspace{-5pt}
\end{figure}
\begin{wraptable}{r}{0.4\linewidth}
    \centering
    \resizebox{\linewidth}{!}{\begin{tabular}{c|cc}
    \toprule
        \diagbox{Method}{Task} & \texttt{PickPlaceCan} & \texttt{StackCube}  \\
        \midrule
        Vision-language planner & 58 & 22 \\
        \textbf{World action planner} & 80 & 76 \\
        \bottomrule
    \end{tabular}}
    \caption{\textbf{Results for zero-shot generalization.} We achieve high success rates even without any expert demonstrations.}
    \label{tab:robosuite}
    \vspace{-10pt}
\end{wraptable}
\looseness=-1
We evaluate the World Action Planner's zero-shot capability to complete tasks in the absence of specialized policies or expert demonstrations. Using the \texttt{PickPlaceCan} and \texttt{StackCube} tasks from Robosuite, the VLM agent identifies target coordinates for grasping and placement, while the primitive gripper logic is hard-coded. We finetune the world model from Sec.~\ref{sec:libero10} on 50 exploratory trajectories, similar to \cite{jain2025smooth,qi2025strengthening}. As shown in Table~\ref{tab:robosuite}, our approach achieves high success rates without a dedicated policy. Notably, the World Action Planner significantly outperforms a baseline vision-language planner, particularly during the precision-heavy stacking phase of \texttt{StackCube}. This performance gap highlights the necessity of refining coarse VLM action proposals through imagination using the action-conditioned world model. Details in Appendix~\ref{app:robosuite details}.

% \vspace{-2pt}
\subsubsection{Ablation Experiments}\label{sec:ablation}
% \vspace{-2pt}
\looseness=-1
Here we ablate the components of our planner pipeline, with details in Appendix~\ref{app:ablation exp}.
We first ablate the individual contributions of global optimization, local search, and policy rollout imagination. The results are detailed in Table~\ref{tab:ablation planner}, where each component is essential for maximizing success as task complexity increases. The PnP cream cheese box requires specific gripper rotations that VLMs struggle to describe analytically but can accurately identify during local search by reasoning over imagined frames (Fig.~\ref{fig:app ablation rotation.}). For the PnP mug task, imagining policy rollouts is critical to identify the correct state for the policy to complete the grasp, which the VLM cannot inherently predict (Fig.~\ref{fig:app ablation policy imagination}).

\looseness=-1
We further evaluate our planning pipeline against a naive Best-of-N VLM sampling baseline, with results detailed in Table~\ref{tab: bon}. We observe that the actions proposed by the VLM often exhibit physical oversights and risks, such as collisions, across multiple samples. In contrast, our action optimization with agent feedback successfully identifies and corrects these potential risks (Fig.~\ref{fig:ablation}).
% \vspace{-5pt}
\section{Conclusion and Limitations}\label{sec:conclusion}
% \vspace{-5pt}
\looseness=-1
In this work, we propose World Action Planner, a robot planning system to address novel tasks in generalization scenarios. A primary limitation is that our evaluations are conducted in simulation, while real robot experiments are left to future work. While the system is manually designed, it follows a principled coarse-to-fine optimization process, offering superior efficiency over large stochastic sampling used in traditional MPC methods. Finally, while world model imagination is time consuming, with wall clock time reported in Appendix~\ref{app:clocl}, inference acceleration and few step sampling techniques could be applied to further increase planning speed and real-time efficiency.

%%%%%%%%%%%%%%%%%%%%%%%%%%%%%%%%%%%%%%%%%%%%%%%%%%%%%%%%%%%%
\bibliographystyle{abbrv}
\bibliography{ref}
\appendix

%%%%%%%%%%%%%%%%%%%%%%%%%%%%%%%%%%%%%%%%%%%%%%%%%%%%%%%%%%%%

\newpage
\paragraph{Appendix Overview}
First, we provide a comprehensive review of related works in Appendix~\ref{app:related works}. Proof of theoretical results is provided in Appendix~\ref{app:theory}. Then we provide the details of our method in Appendix~\ref{app:detail method} and details of the experiments in Appendix~\ref{app:exp detail}. We report the wall clock time of our world model in Appendix~\ref{app:clocl}.
\section{Additional Related Works}\label{app:related works}
\paragraph{Controllable World Models}
To generate and predict future states given control inputs, foundational video generative models typically condition on textual and visual descriptions \cite{ali2025world,agarwal2025cosmos}. To facilitate more fine-grained interaction, these models can also condition on specific control signals, such as movements in games \cite{hafner2023mastering,alonso2024diffusion} or latent action codes \cite{bruce2024genie}. While most research focuses on predicting future pixel frames, latent world models instead predict state features \cite{assran2025v,zhou2024dino} obtained through unsupervised learning. To apply such controllable generation to robot manipulation, most existing works condition on low-dimensional action vectors through AdaLN-Zero modulation \cite{quevedo2025worldgym,zhu2025irasim} in diffusion models, or interleave low-dimensional action tokens and visual state tokens in auto-regressive models \cite{wu2024ivideogpt,assran2025v}. Recently, \cite{guo2025ctrl} proposed cross-attention between frames and action tokens for improved control, while adopting multi-view prediction to enhance consistency. Furthermore, \cite{team2025evaluating} demonstrated that pose-conditioned robot world models can successfully simulate OOD scenarios and unsafe executions, thereby serving as a tool to evaluate robot policies. \cite{wang2025precise} also adopted pose-image conditioning via the ControlNet \cite{zhang2023adding} mechanism; however, they extract the ground truth robot poses from future frames using image matching \cite{he2025matchanything}, while we compute the poses using forward dynamics to imagine different actions during test time planning. Concurrently, \cite{jia2026dreamplan} proposed rendering future arm images corresponding to actions as direct inputs to the video diffusion model.

\paragraph{World Models for Decision Making}
Prior works have leveraged world models for reinforcement learning to reduce expensive real world interactions.
The Dreamer series, for instance, optimizes policy returns using value functions and a latent dynamics model via policy gradients \cite{hafner2019dream,hafner2020mastering} or actor-critic methods \cite{hafner2023mastering}. Recent works \cite{sharma2026world, jia2026dreamplan} have conducted reinforcement learning (RL) on generalist VLAs and VLM planners, respectively, demonstrating that RL within world models can enhance performance on physical robots. To plan using world models, MPC-based methods \cite{hansen2022temporal,hansen2023td,jain2025smooth} utilize the MPPI algorithm for exploration and update the sampling distribution based on Q-functions computed via the dynamics model. On the other hand, video planners \cite{du2023video,jang2025dreamgen,chen2025large} generate synthetic task completion videos using video generation models, and then extract the control actions using vision-based inverse-dynamics models or pose estimation techniques. In contrast to these prior works, we are the first to combine agentic reasoning with world model imagination for test-time planning in new environments.

\paragraph{VLMs for Robot Decision Making}
To leverage the extensive world knowledge and reasoning capabilities of VLMs, the majority of existing works utilize these models for high-level task planning \cite{duan2024manipulate,liu2024moka,huang2023voxposer,hu2023look}. However, to ground executable actions in the physical world, these methods typically rely on manually engineered or sampling-based manipulation techniques to generate low-level actions—such as grasp synthesis \cite{liu2024moka,duan2024manipulate}, volumetric value maps \cite{huang2023voxposer}, motion arrows \cite{nasiriany2024pivot}, or teleoperation and kinesthetic teaching \cite{hu2023look}. In contrast, our method leverages highly-expressive generative policies \cite{chi2025diffusion} as modular tools to execute atomic skills. By grounding the low-level actions proposed by the VLM through our fine-grained, pose-image conditioned world model, we enable the VLM to synthesize and reason directly over low-level executable actions.

\paragraph{VLMs and World Models for Imitation Learning Policies}
Recent attempts have adopted pretrained VLMs or video world models as backbones for imitation learning policies, such as vision-language-action models (VLA) \cite{intelligence2025pi_,kim2024openvla} and world-action-models (WAM) \cite{ye2026world,kim2026cosmos,li2026causal}. These methods finetune action heads on top of the VLM or video generation model backbone to generate future actions, where OpenVLA \cite{kim2024openvla} adopts auto-regressive action generation and $\pi_{0.5}$ \cite{intelligence2025pi_} further incorporates diffusion-based action expert that denoises the action chunk. To adapt video generation models for policies, \cite{ye2026world,kim2026cosmos,li2026causal} first generates future frames and then generates the actions auto-regressively with the action head. However, these approaches often necessitate substantial quantities of high-quality action-labeled demonstration data, and their generalization performance can remain constrained, as observed in our experimental evaluation.
\section{Theoretical Insights}\label{app:theory}
Here we provide the theoretical results which demonstrates the benefits of model-based learning with exploration over imitation learning \cite{rajaraman2020toward,rajaraman2021value} in multi-task generalization settings, where different tasks share the same dynamics but have different rewards \cite{kwon2023reward}. We focus on tabular MDPs and representation learning with linear function approximation. High level speaking, we show that model-based exploration can leverage data collected across all tasks, while imitation learning can only learn from the corresponding task expert, thus suffering from sub-optimal sample complexity.

\subsection{Literature Overview}
\subsubsection{Separation Between Model-Based and Model-Free Learning}
Prior research has demonstrated that model-free methods can exhibit exponentially worse sample complexity compared to model-based methods \cite{sun2019model} for specific classes of MDPs. However, these findings typically rely on strong structural assumptions regarding the model class; in the standard tabular setting, there is no information-theoretic gap between model-based and model-free approaches \cite{sun2019model,young2022benefits}. In our analysis, we show that within the tabular setting and under linear function approximation, multi-task imitation learning can be significantly more challenging than model-based planning. This stands in contrast to single-task settings, where imitation learning is generally more sample-efficient.

\subsubsection{Generalization in Reinforcement Learning}
In task-agnostic settings, where the agent lacks knowledge of the test environment's context and is evaluated on its \textit{average performance} across a family of MDPs, \cite{malik2021generalizable} established that MDPs must share certain similarities for a policy to generalize effectively. Furthermore, \cite{ye2023power} derived average optimality gaps comparable to those in supervised learning without online interaction, and demonstrating that pretraining can significantly improve sample complexity when subsequent online interactions are permitted in the test environment. Also, \cite{ghosh2021generalization,malik2021generalizable,ye2023power} have shown that such task agnostic policies can not achieve instance optimality in new environment without further interaction or history dependence. In our framework, we assume the context of each environment is provided to the learner, and we aim for \textit{instance optimality} without the need for online interactions during testing. Regarding multi-task representation learning, \cite{lu2021power,hu2021near} showed that learning a shared representation across tasks can improve sample complexity. We further assume that task-dependent context features are known to the learner and use them to construct context-dependent policies. Our setting is more aligned with modern robotic practices where task instructions serve as inputs to a generalist policy, and we derive representations for the tasks using pretrained models \cite{chen2025large,intelligence2025pi_}.

\subsection{Setup}
We consider multi-task learning as contextual MDPs \cite{hallak2015contextual,weltevrede2024exploration} where MDPs with different contexts have shared dynamics and different rewards. Each MDP is described by the tuple $\cM=\bracket{\cS, \cA, \PP, r, \rho, H}$, where $\cS$ is the state space, $\cA$ is the action space, and $H$ is the horizon. The initial state $s_1$ is drawn from the initial state distribution $\rho\in\Delta(\cS)$, and the state transition is governed by the transition probability $\PP$. At each time step $h\in[H]$, the agent chooses action $a_h\in\cA$ in state $s_h$ according to policy $\pi(s_h)\in\Delta(\cA)$, receives reward $r(s_h, a_h)$ and transitions to the next state according to the transition kernel $\PP(s_{h+1}|s_h,a_h)$. the episode terminates after $H$ steps. Define the state-action value function (Q-function) and the state value function as the expected cumulative reward of $\pi$:
\begin{align*}
    Q_h^{\pi}(s,a)&=\EE\mbracket{\sum_{h'=h}^Hr_{h'}(s_{h'}, a_{h'})|s_h=s, a_h=a, \pi}\,\\
    V_h^\pi(s)&=\EE\mbracket{\sum_{h'=h}r_{h'}(s_{h'}, a_{h'})|s_h=s, \pi}\,.
\end{align*}
which satisfies the Bellman function:
\begin{align*}
    Q_h^\pi(s_h,a_h)=r(s_h,a_h)+\sum_{s_{h+1}}\PP(s_{h+1}|s_h,a_h)V_{h+1}^\pi(s_{h+1})\,,~~~V_h^\pi(s_h)=\EE_{a_h\sim\pi(s_h)}\mbracket{Q_h(s_h, a_h)}\,.
\end{align*}
The goal of the agent is to achieve maximum expected cumulative reward $\EE_{s_1\sim\rho}V_1^{\pi}(s_1)$. Since the action space and episode length is finite, there always exist an optimal Q-function $Q_h^*(s,a)=\sup_{\pi}Q_h^\pi(s,a)$ and value function $V_h^*(s)=\sup_{\pi}V_h^\pi(s)$ for all $s,a$ that satisfies the Bellman optimality equation:
\begin{align*}
    Q_h^*(s_h,a_h)=r(s_h,a_h)+\sum_{s_{h+1}}\PP(s_{h+1}|s_h,a_h)V_{h+1}^*(s_{h+1})\,,~~~ V_h^*(s_h)=\argmax_{a\in\cA}Q_h^*(s_h,a)\,.
\end{align*}
and there exists a deterministic policy $\pi^*(s_h)=\delta(\argmax_{a}Q_h^*(s_h,a))$ that is optimal with $Q_h^{\pi^*}=Q_h^*$. When the policy is deterministic, we abuse the notation and denote $a=\pi(s)$. 

In the multi-task setting, we have a set of contexts (tasks) $\cC$, with different reward functions $\sets{r_c:\cS\times\cA\rightarrow[0,1], c\in\cC}$, and the MDP associated with context $c$ is $\cM_c=\bracket{\cS, \cA, \PP, r_c, \rho, H}$. We assume that the context $c$ is determined before the episode and revealed to the agent. We then denote the policy and value functions given context $c$ as $\pi_c(s)$ and $Q_h^\pi(s,a;c)$, $V_h^\pi(s,a;c)$ respectively. Since the context is known to the agent, we aim to obtain a context dependent policy $\pi_c(s)$ that maximizes $\EE_{s_1\sim\rho}V_1^\pi(s_1;c)$ for all context $c\in\cC$. When rewards for different contexts share the same function approximation, we abuse notation and write $r(s,a;c)=r_c(s,a)$; similarly when policies for different contexts are from the same  class, we write $\pi(s;c)=\pi_c(s)$.

\subsubsection{Interaction Protocol} In this work we investigate the generalization task, where the agent is able to explore the environment and collect data during training time, with any context selected by the agent. At test time, the agent is evaluated by the performance of its policy under a randomly selected context. For the imitation learning setup, we further assume the existence of expert policies to collect demonstrations for any given context. Denote the total rounds of episode during the training phase as $K$, we specify the data collection and learning protocol for the imitation learning agent and model-based agent as below.

\paragraph{Imitation Learning Agent} At each episode $k\in[K]$, the imitation learning agent can select any context $c_k\in\cC$ and query the deterministic expert policy $\pi_{c_k}^*$ to rollout in the environment $\cM_{c_k}$ and collect data $\tau_{k,c_k}=\sets{(s_h^k, a_h^k)}_{h=1}^H$, where $a_h^k=\pi^*_{c_k}(s_h^k)$. We denote the total data collected as $\cD=\sets{\tau_{k,c_k}}_{k=1}^K$.
After the data collection phase, the learner outputs a context dependent policy $\widehat{\pi}=\sets{\widehat{\pi}_c, c\in\cC}$ minimizing the empirical risk of $\pi$ with respect to $\cD$ in a policy class $\Pi$:
\begin{align*}
\widehat{\pi} = \argmin_{\pi\in \Pi}\text{Pr}_{(s,a,c)\sim\cD}\bracket{\pi_c(s)\neq a}\,,
\end{align*}
where we break ties arbitrarily. 

\paragraph{Model-based Agent} At each episode $k\in[K]$, the agent can explore the select any $c_k\in\cC$ and any policy $\pi_k$ to collect trajectory $\tau_{k,c_k}=\sets{(s_h^k, a_h^k, r_h^k)}_{h=1}^H$ in $\cM_{c_k}$, where $a_h^k\sim\pi_k(s_h^k)$ and $r_h^k\sim r_c(s_h^k, a_h^k)$. After collecting the dataset $\cD=\sets{\tau_{k,c_k}}_{k=1}^K$, the learner outputs an estimate of the model $\widehat{\PP}$ , and if the reward is unknown, an estimate of the reward $\widehat{r}=\sets{\widehat{r}_c, c\in\cC}$. At test time, given the random context $c$, the agent adopts a planner that outputs the optimal policy $\widehat{\pi}_c$ with estimated MDP $\widehat{\cM}_c=\bracket{\cS, \cA, \widehat{\PP}, \widehat{r}_c, \rho, H}$. 

\paragraph{Suboptimality Gap} We define the suboptimality gap of the context dependent policy class $\pi=\sets{\pi_c, c\in\cC}$ returned by the agent as:
\begin{align}\label{eq:subopt}
    \EE_{c\sim \unif(\cC), ~s_1\sim\rho}\mbracket{ V^{*}(s_1;c)-V^{\pi_c}(s_1;c)}\,.
\end{align}

\subsection{Tabular MDPs}\label{app:tabular}
Here we consider tabular MDPs with finite state and action spaces and finite context set. We assume the reward function for each context is known to the agent and the transition is unknown \cite{hu2022nearly}. We denote the number of states, actions and contexts as $|\cS|$, $|\cA|$ and $|\cC|$.

\subsubsection{Model-based Planning Agent}
Here we prove Part \textbf{(i)} of Theorem~\ref{thm:main tabular}. The result for the model-based exploration agent when generalizing to new rewards comes immediately with the reward-free exploration result of \cite{jin2020reward}.
\begin{theorem}[Theorem 3.1 of \cite{jin2020reward}]
There exists a reward-free model-based exploration algorithm, for any probability $\delta>0$, with probability at least $1-\delta$, outputs the model estimation $\widehat{\PP}$ such that for an arbitrary number of given reward functions $r_c$, the suboptimality gap of the optimal policy $\widehat{\pi}_c$ solved from the estimated MDP $\widehat{\cM}_c=\bracket{\cS, \cA, \widehat{\PP}, {r}_c, \rho, H}$ is smaller than $\epsilon$. The total episodes of data collected in the exploration phase is $\Tilde{\mathcal{O}}\bracket{\frac{H^5|\cS|^2|\cA|}{\epsilon^2}}$.  
\end{theorem}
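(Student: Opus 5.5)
The statement just quoted is Theorem 3.1 of \cite{jin2020reward}, and I would prove it with their two-phase scheme: first explore without rewards, then plan in the empirical model. The key idea is to explore so that every state with non-negligible reachability is visited in proportion to its maximal visitation probability. The empirical transition estimate is then accurate uniformly in the sense that matters for \emph{all} reward functions simultaneously. Because the guarantee is uniform over rewards, it applies to any number of contexts $r_c$, which is exactly what part \textbf{(i)} of Theorem~\ref{thm:main tabular} requires.

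\textbf{Step 1 (exploration).} For each state $s$ and layer $h$, define the indicator reward $r^{s,h}(s',h')=\mathbb{1}[s'=s,h'=h]$. Run a regret-minimizing algorithm such as Euler or UCB-VI with this reward for $N_0$ episodes. This produces a policy set $\Psi$ that mixes to visit $(s,h)$ with probability close to $\max_\pi \PP^\pi(s_h=s)$ whenever that quantity exceeds a threshold $\delta'\approx \epsilon/(|\cS|H^2)$; such states are called significant. Next, roll out policies drawn uniformly from $\Psi$, taking uniform actions at the target layer, for $N$ episodes. Build $\widehat{\PP}$ as the empirical transition kernel.

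\textbf{Step 2 (uniform model accuracy).} Apply the simulation lemma:
\begin{align*}
\abs{V^{\pi}_{\widehat{\cM}_c}-V^{\pi}_{\cM_c}} \le \EE_{\pi}\sum_{h}\abs{\bracket{\widehat{\PP}-\PP}(\cdot|s_h,a_h)^{\top}\widehat V_{h+1}}.
\end{align*}
The states that are not significant contribute at most $|\cS|H\delta'$ in total. For significant $(s,a)$, the visit count is at least about $N\,\PP^\pi(s_h=s)/(|\cS||\cA|)$ up to constants, for every $\pi$. A concentration bound on $\norm{\widehat{\PP}(\cdot|s,a)-\PP(\cdot|s,a)}_1$ must hold uniformly over all value vectors, so I use the $\ell_1$ bound, which carries an extra $\sqrt{|\cS|}$ factor. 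Combining this with Cauchy--Schwarz over the visitation distribution gives an error of order $\sqrt{H^4|\cS|^2|\cA|/N}$ per policy and reward.

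\textbf{Step 3 (planning).} Let $\widehat{\pi}_c$ be optimal in $\widehat{\cM}_c$. Decompose
$V^*-V^{\widehat\pi_c}=(V^{\pi^*}-\widehat V^{\pi^*})+(\widehat V^{\pi^*}-\widehat V^{\widehat\pi_c})+(\widehat V^{\widehat\pi_c}-V^{\widehat\pi_c})$.
The middle term is $\le 0$ by optimality in $\widehat{\cM}_c$, and Step 2 bounds each outer term by $\epsilon/2$. Solving for $N$, and noting that $N_0$ is of lower order, gives the stated episode count $\Tilde{\mathcal{O}}(H^5|\cS|^2|\cA|/\epsilon^2)$. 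Inverting this relation yields the $\Tilde{\mathcal{O}}(1/\sqrt{K})$ gap in Theorem~\ref{thm:main tabular}(i).

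The main obstacle is Step 1: certifying that the learned policies visit every significant state nearly as often as the best possible policy does. This needs a regret bound for the indicator rewards that is small relative to $\max_\pi\PP^\pi(s_h=s)$. I would take it from the Euler first-order (variance-aware) regret bound, which scales with the optimal value rather than with $H$, and then apply a Markov-type argument over the episodes.
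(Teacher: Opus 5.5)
The paper gives no argument for this statement: it quotes Theorem~3.1 of Jin et al.\ verbatim and uses it only as a black box for Theorem~\ref{thm:main tabular}(i), where polynomial factors are hidden anyway. Your two-phase architecture is the right one. That means indicator-reward runs of a first-order regret algorithm, a uniform mixture over the resulting policy sets with uniform actions at the target layer, an empirical model, the simulation lemma with true dynamics and estimated values, and the three-term decomposition. Step~1 is fine.

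The gap is in Step~2, whose accounting does not give the stated rate. First, your visitation bound loses a factor $H$. $\Psi$ consists of $|\cS|H$ policy sets, so $\Phi^{(s,h)}$ is drawn with probability $1/(|\cS|H)$, and you only get $\mu_h(s,a)\ge\max_\pi\PP^\pi(s_h=s)/(2|\cS||\cA|H)$. Second, and more seriously, per-pair $\ell_1$ concentration followed by Cauchy--Schwarz over the visitation distribution pays $\sqrt{|\cS|}$ twice. With $\norm{\widehat{\PP}_h(\cdot|s,a)-\PP_h(\cdot|s,a)}_1=\tilde{\mathcal O}(\sqrt{|\cS|/n(s,a)})$ and deterministic $\pi$, you get
\[
\EE_\pi\abs{(\widehat{\PP}_h-\PP_h)\widehat{V}_{h+1}(s_h,a_h)}\le\tilde{\mathcal O}\Bigl(H\sqrt{\tfrac{|\cS|^2|\cA|H}{N}}\Bigr)\sum_s\sqrt{\PP^\pi(s_h=s)}\le\tilde{\mathcal O}\Bigl(\sqrt{\tfrac{|\cS|^3|\cA|H^3}{N}}\Bigr).
\]
The last inequality is tight when visitation is spread out. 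Summing over $h$ gives $\tilde{\mathcal O}(H^5|\cS|^3|\cA|/\epsilon^2)$ episodes, one factor of $|\cS|$ too many. So your intermediate bound $\sqrt{H^4|\cS|^2|\cA|/N}$ is not justified. Even taken at face value, it would give $H^4$, not the $H^5$ you then report.

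The missing idea is to pay for uniformity over value functions once, not separately at every $(s,a)$. Put $\nu_h(s)=\max_\pi\PP^\pi(s_h=s)$. On significant states, Cauchy--Schwarz with weights $\nu_h$ gives $\EE_\pi\abs{(\widehat{\PP}_h-\PP_h)G(s_h,a_h)}\le\bigl(\sum_s\nu_h(s)\abs{(\widehat{\PP}_h-\PP_h)G(s,\pi_h(s))}^2\bigr)^{1/2}$, since $\sum_s\PP^\pi(s_h=s)^2/\nu_h(s)\le1$. Fix $(G,\pi_h)$ and work on the Chernoff event $n(s,a)\ge N\mu_h(s,a)/2$. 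Each summand then has conditional expectation at most $\nu_h(s)H^2/n(s,\pi_h(s))=\mathcal O(|\cS||\cA|H^3/N)$, so the sum has mean $\mathcal O(|\cS|^2|\cA|H^3/N)$. Given the step-$h$ state-action pairs, the summands are independent and sub-exponential. Bernstein plus a single union bound over a $1/N$-net of $G\in[0,H]^{|\cS|}$ and over $\pi_h\in\cA^{|\cS|}$ (log-cardinality $\tilde{\mathcal O}(|\cS|)$) keeps the sum at $\tilde{\mathcal O}(|\cS|^2|\cA|H^3/N)$ uniformly. Summing over $h$ then gives $\tilde{\mathcal O}(\sqrt{|\cS|^2|\cA|H^5/N})$, which is the stated count.

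Two smaller points. The insignificant states contribute $|\cS|H^2\delta'$, not $|\cS|H\delta'$; this is why $\delta'\approx\epsilon/(|\cS|H^2)$. Step~1 needs no Markov argument, because the uniform mixture of the $N_0$ policies Euler played reaches $s$ at step $h$ with probability exactly $\nu_h(s)-\mathrm{Regret}/N_0$.
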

Using the theorem above, we immediately obtain the $\mathcal{O}\bracket{\frac{1}{\sqrt{K}}}$ suboptimality gap, where $\mathcal{O}$ hides polynomial terms in $H$, $|\cS|$ and $|\cA|$. Note that this bound is independent of the context set size $|\cC|$ since the result applies to any given reward function uniformly.

\subsubsection{Imitation Learning Agent}
here we provide the hardness result for imitation learning in Part \textbf{ii} of Theorem~\ref{thm:main tabular}. High level speaking, since the expert policies are different at different tasks, the average number of demonstrations for each task is thus $\frac{K}{|\cC|}$, and the suboptimality gap is  $\Omega\bracket{\frac{|\cC|}{K}}$ following the $\Omega\bracket{1/N}$ lower boud in \cite{rajaraman2020toward}.

We design the hard MDP following \cite[Sec.~A.4.1]{rajaraman2020toward}. For each state $s\in\cS$, it is an absorbing state where $\PP(\cdot|s,a)=\delta(s)$, $\forall a\in\cA$. Denote $\xi=\frac{1}{\Bar{N}+1}$, where $\Bar{N}=\frac{K}{|\cC|}$ is the average number of episodes in the dataset for each context. We define the initial state distribution $\rho=\sets{\xi, \xi, \cdots, 1-(|\cS|-1)\xi}$.

We define the class of all context dependent deterministic policies as $\Pi_{\text{det}}$. We first sample the expert policy for each context uniformly and independently from $\Pi_{\text{det}}$, that is for each $s\in\cS$, $c\in\cC$, the optimal action $\pi^*_c(s)$ is uniformly sampled from $\cA$. Given $\pi_c^*$, we define the reward function $r_c(\pi_c^*)$ where at each state, choosing the expert's action $\pi_c^*(s)$ receives reward $1$ and otherwise $0$. We denote the MDP with the transition and initial state distribution constructed above and the reward function $r_c(\pi_c^*)$ as $\cM_c(\pi_c^*)$. It is obvious from the construction that the value of $\pi_c^*$ in $\cM_c(\pi_c^*)$ is $H$ for any initial state $s_1$, since it receices a reward of $1$ at every step.

Define the set of deterministic policies $\pi=\sets{\pi_c, c\in\cC}$ that agrees with the expert policy in dataset $D$ as:
\begin{align*}
    \Pi(\cD)=\sets{\pi\in\Pi_{\text{det}}^{|\cC|}:~~\pi_{c_k}(s_h^k)=a_h^k, ~\forall k\in[K], h\in[H]}\,.
\end{align*}
where we recall $a_h^k=\pi_{c_k}^*(s_h^k)$ in the expert dataset. We also denote
\begin{align*}
    \cK_c=\sets{k=1,2,\cdots,K:~c_k=c}\,,\quad \cD_c=\sets{\tau_{c_k,k},~k\in\cK_c}\,.
\end{align*}
as the subset of episode indices and episodes the context of which is $c$. We then denote the class of deterministic policies conditioning on context $c$ that agrees with the expert $\pi_c^*$ in dataset $D_c$ as:
\begin{align*}
    \Pi_c(\cD_c)=\sets{\pi\in\Pi_{\text{det}}:~~\pi_{c}(s_h^k)=a_h^k\,, k\in\cK_c}\,. 
\end{align*}
We have the conditional distribution of $\pi^*=\sets{\pi_c^*, c\in\cC}$ conditioned on $\cD$ is uniform in $\Pi(\cD)$, with the marginal conditional distribution of $\pi_c^*$ being uniform in $\Pi_c(\cD_c)$. Thus, the Bayes suboptimality gap of the output policy $\widehat{\pi}=\sets{\widehat{\pi}_c, c\in\cC}$ under the constructed MDP class $\cM(\pi^*)$ can be written as:
\begin{align}\label{eq:bayes tabular}
    \nonumber&\EE_{\pi^*\sim\unif(\Pi(\cD))}\mbracket{\EE_{c\in\unif(\cC)}\mbracket{\EE_{s_1\sim\rho}\mbracket{V^*_{\cM_c(\pi_c^*)}(s_1;c)-V^{\widehat{\pi}_c}_{\cM_c(\pi_c^*)}(s_1;c)}}}\\
    =&\EE_{c\in\unif(\cC)}\mbracket{\EE_{\pi_c^*\sim\unif(\Pi_c(\cD_c))}\mbracket{\EE_{s_1\sim\rho}\mbracket{H-V_{\cM_c(\pi_c^*)}^{\widehat{\pi}_c}(s_1;c)}}}
\end{align}
which is because the value of $\pi_c^*$ in our constructed MDP $\cM_c(\pi_c^*)$ is $H$; and each entry $\pi_c^*$ of $\pi^*$ is independently uniform distributed in $\Pi_c(\cD_c)$ so we apply Fubini's Theorem to exchange the expectations. 

We denote the set of initial states visited in episodes with context $c$ as 
\begin{align*}
    \cS_1(\cD_c)=\sets{s\in\cS: \exists k\in\cK_c, s_1^k=s}\,.
\end{align*}
Recall that in our MDP design each state is an absorbing state, so $\cS_1(\cD_c)$ is also the set of total states visited under context $c$ in $\cD$. Intuitively speaking, in context $c$, if $s_1\sim\rho$ is from a unvisited state outside of $\cS_1(\cD_c)$, then the the estimated policy $\widehat{\pi}$ will suffer from constant errors. This is because the potential distribution of optimal action $\pi^*_c(s_1)$ given $\cD$ is distributed uniformly in $\cA$, independent of the data from other contexts, thus the agent can not guess the right action with high probability. Given context $c\in\cC$, we have the following Lemma from \cite{rajaraman2020toward}.
\begin{lemma}[Lemma A.19 of \cite{rajaraman2020toward}]\label{lem: suboptimal}
Given context $c\in\cC$ and dataset $\cD$, we have the Bayes suboptimality of the imitation policy $\widehat{\pi}_c$ lower bounded as: 
    \begin{align*}
       \EE_{\pi_c^*\sim\unif(\Pi_c(\cD))}\mbracket{\EE_{s_1\sim\rho}\mbracket{H-V_{\cM_c(\pi_c^*)}^{\widehat{\pi}_c}(s_1;c)}}\geq H\bracket{1-\frac{1}{|\cA|}}\bracket{1-\rho\bracket{\cS_1(\cD_c)}}
    \end{align*}
\end{lemma}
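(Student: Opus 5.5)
We argue state by state: every state in the construction is absorbing, so an episode that starts at $s_1$ stays at $s_1$ for all $H$ steps. Fix the context $c$ and the dataset $\cD$, and write $\widehat{\pi}_c$ for the deterministic output policy (a deterministic function of $\cD$). Because $s_h=s_1$ for every $h$ and the reward is $1$ exactly when the expert action is taken, we get
\begin{align*}
V^{\widehat{\pi}_c}_{\cM_c(\pi_c^*)}(s_1;c)=H\cdot \mathbf{1}\sets{\widehat{\pi}_c(s_1)=\pi_c^*(s_1)}.
\end{align*}
Hence the quantity inside the expectation over $s_1\sim\rho$ equals $H\cdot\mathbf{1}\sets{\widehat{\pi}_c(s_1)\neq\pi_c^*(s_1)}$. (If $\widehat{\pi}_c$ were randomized, the indicator would become $1-\widehat{\pi}_c(\pi_c^*(s_1)\mid s_1)$, and the argument below goes through unchanged.)

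Next we swap the expectation over $\pi_c^*$ with the expectation over $s_1$, using Fubini since everything is nonnegative and finite. This gives
\begin{align*}
H\sum_{s\in\cS}\rho(s)\,\PP_{\pi_c^*\sim\unif(\Pi_c(\cD_c))}\bracket{\widehat{\pi}_c(s)\neq\pi_c^*(s)}.
\end{align*}
We keep only the states $s\notin\cS_1(\cD_c)$, which can only lower the sum because every term is nonnegative. For such a state $s$, no episode in $\cD_c$ ever visits it, so the only constraints defining $\Pi_c(\cD_c)$ are placed on the other states. The set $\Pi_c(\cD_c)$ is therefore a product set in which the coordinate $\pi(s)$ ranges freely over $\cA$. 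It follows that under $\unif(\Pi_c(\cD_c))$ the value $\pi_c^*(s)$ is uniform on $\cA$.

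The step that needs care is showing that $\widehat{\pi}_c(s)$ is independent of $\pi_c^*(s)$ once we condition on the data. The prior draws the experts independently across contexts and states. Episodes with other contexts $c'\neq c$ reveal only $\pi^*_{c'}$, so they carry no information about $\pi_c^*$. This is exactly the factorization already used to obtain \eqref{eq:bayes tabular}. Given $\cD$, the output $\widehat{\pi}_c(s)$ is fixed, with any arbitrary tie-breaking done independently of $\pi^*$, while $\pi_c^*(s)$ is uniform on $\cA$. Hence
\begin{align*}
\PP\bracket{\widehat{\pi}_c(s)\neq\pi_c^*(s)}=1-\frac{1}{|\cA|}.
\end{align*}

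Summing over the unvisited states then gives
\begin{align*}
H\bracket{1-\frac{1}{|\cA|}}\sum_{s\notin\cS_1(\cD_c)}\rho(s)=H\bracket{1-\frac{1}{|\cA|}}\bracket{1-\rho\bracket{\cS_1(\cD_c)}},
\end{align*}
which is the claimed bound.
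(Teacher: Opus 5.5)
Your proof is correct. It formalizes exactly the reasoning the paper relies on: the paper gives no proof of its own, importing the bound as Lemma A.19 of \cite{rajaraman2020toward} with only the remark that at an initial state outside $\cS_1(\cD_c)$ the posterior of $\pi_c^*(s_1)$ is uniform on $\cA$ and unaffected by other contexts' data. Your direct computation for the fully absorbing instance (value $H\cdot\mathbf{1}\{\widehat{\pi}_c(s_1)=\pi_c^*(s_1)\}$, uniform marginal from the product structure of $\Pi_c(\cD_c)$) makes the argument self-contained, without having to check that the cited lemma transfers to this modified construction. One small point: the cross-context independence you discuss is really what underlies \eqref{eq:bayes tabular}, whereas for the lemma as stated $\cD$ is fixed and $\widehat{\pi}_c$ depends only on $\cD$, so that step is immediate.
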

where $\rho(\cS_1(\cD_c))=\sum_{s\in \cS_1(\cD_c)}\rho(s)$ is the total measure of all initial states visited in $\cD_c$.

Next, we upper bound the average of $\sum_c \rho(S_c(\cD))$ for any dataset $\cD$ consisting of $K$ trajectories.
\begin{lemma}\label{lem:tabular geometric ineq}
For any dataset $\cD$ with $K$ trajectories $\cD=\sets{\tau_{k,c_k}}_{k=1}^K$, we have
    \begin{equation*}
        \EE_{c\sim\unif(\cC)}\mbracket{1-\rho(\cS_c(\cD))}\geq \frac{|\cS|-1}{e\bracket{\frac{K}{|\cC|}+1}}
    \end{equation*}
\end{lemma}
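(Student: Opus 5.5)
The bound cannot hold for every realization of the initial states, since a lucky draw could visit every state. I will therefore read $\EE$ as also averaging over the i.i.d.\ draws $s_1^k\sim\rho$. This is how the lemma is used when combined with Lemma~\ref{lem: suboptimal} and \eqref{eq:bayes tabular}. I also identify $\cS_c(\cD)$ with $\cS_1(\cD_c)$, because every state is absorbing. The plan is to compute the expected unvisited mass for each context exactly in terms of the number of episodes $n_c=|\cK_c|$ assigned to it, keep only the $|\cS|-1$ light states of mass $\xi$, and then use convexity in $n_c$ together with $\sum_c n_c=K$.

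\textbf{Step 1 (per-context computation).} Fix $c$ and a state $s$. The context $c_k$ is chosen before $s_1^k$ is drawn, possibly depending on earlier episodes, and $s_1^k$ is a fresh draw from $\rho$. Hence the event that $s$ is never an initial state among the episodes in $\cK_c$ has probability $\EE\mbracket{(1-\rho(s))^{n_c}}$. For a deterministic allocation this is immediate. For an adaptive allocation I would argue by a martingale/stopping step: each time an episode is assigned to $c$, the probability of missing $s$ is multiplied by $(1-\rho(s))$, independently of the history. Summing over states gives
\begin{align*}
\EE\mbracket{1-\rho(\cS_1(\cD_c))}=\EE\mbracket{\sum_{s}\rho(s)\bracket{1-\rho(s)}^{n_c}}\geq (|\cS|-1)\,\xi\,\EE\mbracket{(1-\xi)^{n_c}},
\end{align*}
where we dropped the heavy state of mass $1-(|\cS|-1)\xi$.

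\textbf{Step 2 (averaging over contexts).} Average over $c\sim\unif(\cC)$. The map $x\mapsto(1-\xi)^x$ is convex and $\sum_c n_c=K$ deterministically, so Jensen's inequality (applied pointwise and then in expectation) gives
\begin{align*}
\frac{1}{|\cC|}\sum_c (1-\xi)^{n_c}\geq (1-\xi)^{K/|\cC|}=(1-\xi)^{\Bar N}.
\end{align*}
This is exactly where the number of tasks enters: splitting $K$ episodes across $|\cC|$ contexts leaves only about $\Bar N=K/|\cC|$ episodes per context.

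\textbf{Step 3 (plugging in $\xi$).} With $\xi=1/(\Bar N+1)$ we have $(1-\xi)^{\Bar N}=\bracket{1+1/\Bar N}^{-\Bar N}\geq 1/e$. Combining the three steps yields
\begin{align*}
\EE_{c}\mbracket{1-\rho(\cS_1(\cD_c))}\geq \frac{|\cS|-1}{e(\Bar N+1)}=\frac{|\cS|-1}{e\bracket{\frac{K}{|\cC|}+1}}.
\end{align*}

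The main obstacle is Step 1 under adaptive context selection: I need to justify that the miss probability equals $\EE[(1-\rho(s))^{n_c}]$ even though $n_c$ is random. I would first try the direct product/induction argument over episodes $k=1,\dots,K$. If that fails, I would restrict to non-adaptive allocations, which suffices for the lower bound on a fixed dataset size.
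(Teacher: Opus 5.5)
Once the counts $n_c=|\cK_c|$ are fixed before the initial states are drawn, your argument is the paper's proof. The paper writes the unvisited mass as $\sum_s\rho(s)(1-\rho(s))^{|\cK_c|}$, which, as you correctly note, holds only in expectation over the draws $s_1^k\sim\rho$. It then drops the heavy state and applies AM--GM to $\prod_c(1-\xi)^{|\cK_c|}$ using $\sum_c|\cK_c|=K$. That is the same as your Jensen step for the convex map $x\mapsto(1-\xi)^x$, and it finishes with $(1-\xi)^{\Bar{N}}\ge 1/e$.

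What needs correcting is the adaptive case in Step 1. There the non-adaptive restriction is not a fallback; it is a necessary hypothesis. If $c_k$ may depend on earlier initial states, your multiplicative argument only shows that $\mathbf{1}\{s\text{ not yet seen in context }c\}\,(1-\rho(s))^{-|\cK_c\cap[k]|}$ is a martingale. That gives $\EE\mbracket{\mathbf{1}\{s\text{ missed}\}\,(1-\rho(s))^{-n_c}}=1$. It does not give $\PP(s\text{ missed})=\EE\mbracket{(1-\rho(s))^{n_c}}$, because the allocation can make $n_c$ correlated with the misses.

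Moreover, the lemma itself fails in that case. Take $|\cS|=2$, fix $\Bar{N}\ge 3$, and let $|\cC|\to\infty$. First give each context one episode. Then spend about $|\cC|/\Bar{N}$ further episodes completing the contexts whose first draw was the light state. Finally, process the remaining contexts one at a time, assigning episodes to each until the light state appears, at mean cost $\Bar{N}+1$ per context. About $|\cC|/\Bar{N}$ contexts are left missing only the light state. The average unvisited mass is therefore about $1/(\Bar{N}(\Bar{N}+1))$, which is below $1/(e(\Bar{N}+1))$.

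So the right justification for restricting to non-adaptive allocations is not ``fixed dataset size.'' It is that the context schedule must be chosen in advance, independently of the drawn initial states. The paper's proof silently assumes exactly this by treating $|\cK_c|$ as deterministic.
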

\begin{proof}
    Recall the design of $\rho$ as $\rho=\sets{\xi, \xi, \cdots, 1-(|\cS|-1)\xi}$, where $\xi=\frac{1}{\Bar{N}+1}$, $\Bar{N}=\frac{K}{|\cC|}$. 
   Since for every episode index $k\in\cK_c$, the initial state $s_1^k$ is distributed independently according to $\rho$, we have:
    \begin{align*}
        1-\rho(\cS_c(\cD))=&\sum_{s\in\cS}\rho(s)(1-\rho(s))^{|\cK_c|}\\
        \geq &\frac{|\cS-1|}{\Bar{N}+1}\bracket{1-\frac{1}{\Bar{N}+1}}^{|\cK_c|}\,.
    \end{align*}
    Thus, using the Arithmetic Mean-Geometric Mean Inequality, we obtain:
    \begin{align*}
        \EE_{c\sim\unif(\cC)}\mbracket{1-\rho(\cS_c(\cD))}\geq& \frac{|\cS-1|}{\Bar{N}+1}\frac{1}{|\cC|}\sum_{c\in\cC}\bracket{1-\frac{1}{\Bar{N}+1}}^{|\cK_c|}\\
        \geq&\frac{|\cS-1|}{\Bar{N}+1}\bracket{\Pi_{c\in\cC}\bracket{1-\frac{1}{\Bar{N}+1}}^{|\cK_c|}}^{\frac{1}{|\cC|}}\\
        =&\frac{|\cS-1|}{\Bar{N}+1}\bracket{1-\frac{1}{\Bar{N}+1}}^{\frac{K}{|\cC|}}\\
        \geq& \frac{|\cS-1|}{e\bracket{\frac{K}{|\cC|}+1}}\,.
    \end{align*}
\end{proof}
Finally, we prove the main theorem.
\begin{proof}[Proof of Part \textbf{ii}, Theorem~\ref{thm:main tabular}]
    Given dataset $\cD$, to bound the worst-case suboptimality of $\widehat{\pi}$, it sufficies to bound the Bayes optimality bound under our joint distribution of expert policies $\pi^*\sim\unif\bracket{\Pi(\cD)}$ and constructed MDP instances $\cM(\pi^*)$. Using \eqref{eq:bayes tabular}, Lemma.~\ref{lem: suboptimal} and Lemma.~\ref{lem:tabular geometric ineq}, we have:
    \begin{align*}
        \EE_{\pi^*\sim\unif(\Pi(\cD))}\mbracket{\EE_{c\in\unif(\cC)}\mbracket{\EE_{s_1\sim\rho}\mbracket{V^*_{\cM_c(\pi_c^*)}(s_1;c)-V^{\widehat{\pi}_c}_{\cM_c(\pi_c^*)}(s_1;c)}}}\geq  \Omega\bracket{\frac{|\cC|}{K}}\,.
    \end{align*}
    Thus, there exists a worst case instance of $\pi^*=\sets{\pi_c^*, c\in\cC}$ and MDP $\cM(\pi^*)=\sets{\cM_c(\pi_c^*), c\in \cC}$ given dataset $\cD$, under which the suboptimality of the policy given by the imitation learning agent is at least $\Omega\bracket{\frac{|\cC|}{K}}$.
\end{proof}

\subsection{Linear Function Approximation}
In this section, we provide the results under linear function approximation. First, we provide the definition of linear MDP following \cite{jin2023provably}:
\begin{definition}[Linear MDP \cite{jin2023provably}]\label{def:linear mdp}
MDP $\cM=\bracket{\cS,\cA,\PP,r,\rho, H}$ is called a linear MDP with feature map $\phi:\cS\times\cA\rightarrow\RR^d$, if there exist $d$ (unknown) signed measures $\bm{\mu}_h=\bracket{\mu_h^{(1)}, \mu_h^{(2)},\cdots,\mu_h^{(d)}}$ over $\cS$ such that the transition probability can be represented as linear functions
\begin{align*}
    \PP_h(\cdot|s,a)=\innerproduct{\phi(s,a)}{\mathbf{\mu}_h(\cdot)}\,,
\end{align*}
and (unknown) vector $\theta_h\in\RR^d$ such that the expectation of the random reward can be represented as a linear function
\begin{align*}
    \EE\mbracket{{r_h(s,a)}}=\innerproduct{\phi(s,a)}{\theta}\,,~~0\leq {r_h(s,a)}\leq 1,
\end{align*}
for all $(s,a)\in\cS\times\cA$.
    With out loss of generality, we assume $\norm{\phi(s,a)}\leq 1$ for all $(s,a)$ and $\max\sets{\norm{\mu_h(\cS)}, \norm{\theta}}\leq \sqrt{d}$ for all $h$.
\end{definition}

Given the above definition, we further explore the setting of learning the reward function for generalizing to new tasks with task features. Specifically, we assume that different reward vectors $\theta_c$ share a common feature map $\psi(c)\in\RR^d$ known to the agent, with $\theta_c=\Theta\psi(c)$. Thus, by learning the transformation $\Theta$ using reward samples from source tasks, we can generalize to any new task using the representation $\psi$. We point out that learning such reward function with linear structure is non-trivial due to the random noise in the reward received \cite{abbasi2011improved}.
\begin{assumption}[Reward representation]
There exists a feature map $\psi:\cC\rightarrow\RR^{d'}$ and (unknown) matrix $\Theta\in\RR^{d\times d'}$, where
    the random reward for context $c$ can be written as $\EE\mbracket{r_c(s,a)}=\phi(s,a)^{\top}\Theta\psi(c)$ with $0\leq{r_c(s,a)}\leq 1$ and $\norm{\psi(c)}\leq 1$, for all $(s,a,c)\in\cS\times\cA\times\cC$.
    We assume the set $\sets{\psi(c),c\in\cC}\subset \RR^d$ is compact and $\text{span}(\sets{\psi(c),c\in\cC})=\RR^d$.
\end{assumption}
To explore the feature set, we define the G-optimal design of the feature set as:
\begin{theorem}[Kiefer–Wolfowitz]\label{thm:design}
    There exists a distribution $p_c\in\Delta(\cC)$, such that $|\supp(p_c)|\leq \frac{d'(d'+1)}{2}$ and $\max_{c\in\cC}\norm{\psi(c)}_{V^{-1}}=d'$, where $V$ is the covariance matrix under $p_c$:
    \begin{align*}
V=\sum_{c\in\cC}p_c\psi(c)\psi(c)^{\top}\,.
    \end{align*}
\end{theorem}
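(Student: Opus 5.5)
The plan is the standard D-optimal design argument: take a design that maximizes $\log\det V$, read off the G-optimality property from its first-order condition, and obtain the support bound from Carathéodory's theorem. The quantity being controlled is the squared norm $\norm{\psi(c)}_{V^{-1}}^2=\psi(c)^\top V^{-1}\psi(c)$. This is the reading under which the equality with $d'$ is correct, so I will prove $\max_{c}\norm{\psi(c)}^2_{V^{-1}}=d'$.

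\textbf{Step 1 (existence of a maximizer).} Let $\cK=\mathrm{conv}\sets{\psi(c)\psi(c)^\top, c\in\cC}$, a subset of the $d'\times d'$ symmetric matrices. That space has dimension $D:=d'(d'+1)/2$.
\begin{itemize}
\item Since $\sets{\psi(c)}$ is compact and $\psi\mapsto\psi\psi^\top$ is continuous, $\cK$ is the convex hull of a compact set in finite dimension, hence compact.
\item By Carathéodory's theorem, every $M\in\cK$ equals $V(p)=\sum_c p_c\psi(c)\psi(c)^\top$ for some finitely supported $p\in\Delta(\cC)$.
\item Since the $\psi(c)$ span $\RR^{d'}$, some element of $\cK$ is positive definite, so $\sup_{\cK}\log\det>-\infty$.
\item $\log\det$ is upper semicontinuous (continuous where finite) on $\cK$, so it attains its maximum at some $V_*=V(p^*)\succ0$.
\end{itemize}

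\textbf{Step 2 (first-order condition).} For any $c$ and $t\in[0,1]$, the matrix $(1-t)V_*+t\psi(c)\psi(c)^\top$ lies in $\cK$. Its log-determinant has right derivative at $t=0$ equal to
\begin{align*}
\mathrm{tr}\bracket{V_*^{-1}(\psi(c)\psi(c)^\top-V_*)}=\norm{\psi(c)}^2_{V_*^{-1}}-d'.
\end{align*}
Optimality forces this derivative to be $\le 0$, so $\max_c\norm{\psi(c)}^2_{V_*^{-1}}\le d'$. In the other direction,
\begin{align*}
\sum_c p^*_c\norm{\psi(c)}^2_{V_*^{-1}}=\mathrm{tr}(V_*^{-1}V_*)=d',
\end{align*}
so the maximum is at least $d'$. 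Hence the maximum equals $d'$. Moreover, every support point of $p^*$ attains it: a weighted average of quantities that are each $\le d'$ can equal $d'$ only if every term with positive weight equals $d'$.

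\textbf{Step 3 (support size).} This is the step I expect to be the main obstacle, since naive Carathéodory in dimension $D$ only gives $D+1$ points.
\begin{itemize}
\item By Step 2, every support point satisfies $\mathrm{tr}(V_*^{-1}\psi(c)\psi(c)^\top)=d'$. So all matrices $\psi(c)\psi(c)^\top$ with $c\in\supp(p^*)$ lie in the affine hyperplane $\sets{M:\mathrm{tr}(V_*^{-1}M)=d'}$, which has dimension $D-1$.
\item Carathéodory's theorem inside this hyperplane re-expresses $V_*$ as a convex combination of at most $D$ of these points.
\item The resulting design $p$ has the same covariance $V=V_*$, so the conclusion of Step 2 carries over unchanged.
\end{itemize}
This gives $|\supp(p)|\le d'(d'+1)/2$ together with $\max_c\norm{\psi(c)}^2_{V^{-1}}=d'$, which is the claim of Theorem~\ref{thm:design}.
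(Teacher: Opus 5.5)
The paper gives no proof of this statement. It quotes the classical Kiefer--Wolfowitz theorem and uses it as a black box, so the only comparison is with the standard textbook proof, and your proposal is that proof, correct in every step. You take a $\log\det$ maximizer over the compact set $\mathrm{conv}\sets{\psi(c)\psi(c)^\top}$. You combine the directional-derivative condition with $\sum_c p^*_c\norm{\psi(c)}^2_{V_*^{-1}}=d'$ to get $\max_c\norm{\psi(c)}^2_{V_*^{-1}}=d'$, with equality on the support. You then reduce the support while keeping $V_*$ fixed.

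The usual write-up (e.g.\ Lattimore--Szepesv\'ari, Thm.~21.1) handles the last step by shifting weight along a nonzero $v$ with $\sum_c v_c\psi(c)\psi(c)^\top=0$ until a weight hits zero; the same trace identity forces $\sum_c v_c=0$. This is a hands-on version of your Carath\'eodory argument in the hyperplane $\mathrm{tr}(V_*^{-1}M)=d'$. Both routes give the sharp bound $d'(d'+1)/2$ rather than $d'(d'+1)/2+1$.

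Your squared-norm reading is the classical one; for the optimal design the unsquared maximum is $\sqrt{d'}$. It is also enough for the paper's later use, which only needs $\norm{\psi(c)}_{G^{-1}}\le d'$. Finally, the spanning hypothesis you rely on must be $\mathrm{span}\sets{\psi(c)}=\RR^{d'}$, which is what the paper's assumption (written with $\RR^d$) intends.
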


We also assume that the MDP has well-conditioned covariates and there exists a \textit{unknown} policy that can explore the $h$ layer of the MDP, following standard practice \cite{kong2023improved,li2020sample}
\begin{assumption}[Exploratory assumption]\label{ass:exp}
There exists $\kappa>0$, such that 
\begin{align*}
    \min_{h\in[H]}\sup_{\pi}\lambda_{\min}\bracket{\EE_{\pi}\mbracket{\phi(s_h,a_h)\phi(s_h,a_h)^{\top}}}\geq \kappa\,.
\end{align*}
\end{assumption}

\subsubsection{Model-based Planning Agent}
In this section, we prove the suboptimality gap for the model-based planning agent under our linear function approximation. We first restate the result from \cite{wagenmaker2022reward} that enables sufficient exploration of the feature space, without knowledge of any exploratory policies:
\begin{theorem}[Theorem 4, \cite{wagenmaker2022reward}]\label{thm:wagen}
Fix $\gamma\in[0,1]$, under Assumption \ref{ass:exp}. There exists an algorithm that collects observations $\cD=\sets{(s_1^{k}, a_1^{k}), \cdots, (s_H^k, a_H^k)}_{k=1}^K$, such that with probability $1-\delta$, we have for any $h$:
\begin{align*}
    \lambda_{\min}\bracket{\sum_{k=1}^K\phi(s_h^k, a_h^k)\phi(s_h^k, a_h^k)^{\top}}\geq \frac{\kappa}{\gamma^2}\,.
\end{align*}
    after running at most 
    \begin{align*}
        K=\mathcal{O}\bracket{\frac{dH}{\kappa\gamma^2}}
    \end{align*}
    where $\mathcal{O}$ hides polynomial dependencies on $H$ and $\log \frac{1}{\delta}$.
\end{theorem}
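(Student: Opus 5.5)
This result is quoted from \cite{wagenmaker2022reward}. To prove it directly, I would reduce coverage of the feature space to online experiment design and solve that design with a no-regret RL subroutine, one layer $h$ at a time. Fix a layer $h$, and suppose the earlier layers are already explored well enough to estimate dynamics up to $h$. The goal is to make $\Lambda_K^h=\sum_{k}\phi(s_h^k,a_h^k)\phi(s_h^k,a_h^k)^\top$ satisfy $\lambda_{\min}(\Lambda_K^h)\gtrsim K\kappa$. Assumption~\ref{ass:exp} says some unknown policy $\pi^\dagger_h$ attains $\lambda_{\min}(\EE_{\pi^\dagger_h}[\phi\phi^\top])\ge\kappa$, so a design reaching this value exists. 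The difficulty is that we can only reach it by choosing policies, not by placing features directly.

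Since $\lambda_{\min}$ is not smooth, I would replace it by the soft-min surrogate
\[
F_\eta(\Lambda)=-\tfrac1\eta\log\operatorname{tr}\exp(-\eta\Lambda),
\]
which is concave, is within $\log d/\eta$ of $\lambda_{\min}(\Lambda)$, and has a gradient $G(\Lambda)$ that is a density matrix. I would then run a Frank--Wolfe scheme over the space of achievable expected covariances. At epoch $t$, with current normalized covariance $\bar\Lambda_t$, the linear maximization step asks for a policy maximizing $\EE_\pi[\phi(s_h,a_h)^\top G(\bar\Lambda_t)\phi(s_h,a_h)]$. This is an RL problem with a \emph{known}, bounded reward $r_t(s,a)=\phi(s,a)^\top G_t\phi(s,a)$ at layer $h$ only. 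It is not linear in $\phi$, but because the reward is known, optimism-based linear-MDP algorithms (LSVI-UCB style, with $Q=r+\phi^\top w$) still apply. I would run such an algorithm for a block of episodes, getting regret $\tilde{\mathcal O}(\sqrt{d^3H^4 T})$ against the best policy, and in particular against $\pi^\dagger_h$. Standard Frank--Wolfe analysis with a smoothness constant of order $\eta$, plus martingale concentration (matrix Freedman) to pass from expected to realized covariances, should then give
\[
\tfrac1K\lambda_{\min}(\Lambda_K^h)\ \ge\ \kappa-\tilde{\mathcal O}\!\left(\tfrac{\log d}{\eta}+\tfrac{\eta}{\text{\#epochs}}+\mathrm{poly}(d,H)/\sqrt{K}\right).
\]

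Next I would tune $\eta$ and the epoch lengths so the error terms are at most $\kappa/2$. This gives $\lambda_{\min}(\Lambda_K^h)\ge K\kappa/2$ once $K$ is at least a polynomial in $d,H,\log(1/\delta)$ divided by $\kappa$. To reach the target level $\kappa/\gamma^2$, I would rescale the running time by $1/\gamma^2$, which matches the stated $K=\mathcal O(dH/(\kappa\gamma^2))$ up to the hidden factors. Repeating this for all $H$ layers and taking a union bound over layers and confidence events finishes the argument.

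I expect the main obstacle to be the coupling between the Frank--Wolfe iteration and the regret subroutine. The rewards $r_t$ change between epochs and depend on past data, so regret bounds must hold uniformly over the adaptive class $\{\phi^\top G\phi:\ G\succeq0,\ \operatorname{tr}G=1\}$. That requires a covering argument on this matrix class inside the self-normalized concentration. My first attempt would be to discretize $G$ on a grid of size $\exp(\tilde{\mathcal O}(d^2))$, which costs only polynomial factors in $d$. I would then check that the smoothness of $F_\eta$ is small enough that the additive regret does not swamp the $\kappa$ gap, which is where the $1/\kappa$ (rather than $1/\kappa^2$) dependence has to be preserved.
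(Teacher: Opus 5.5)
The paper gives no proof here. It imports the result as Theorem~4 of \cite{wagenmaker2022reward}, so your sketch has to stand on its own.

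\textbf{What works.} The architecture is reasonable. The soft-min surrogate is concave and within $\log d/\eta$ of $\lambda_{\min}$. The linear-maximization step is an RL problem with a known reward at layer $h$, which optimistic linear-MDP methods can handle because the $Q$-functions at earlier layers remain linear in $\phi$.

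\textbf{Where it breaks.} Since the target is only $\kappa/\gamma^2$ and $\kappa\le 1/d$, the $\gamma$-dependence is trivial: if $\lambda_{\min}(\Lambda^h_K)\ge K\kappa/2$ for all $K\ge K_0$, then $K=\max\{K_0,\lceil 2/\gamma^2\rceil\}$ suffices. The whole content of the theorem is therefore a burn-in $K_0$ scaling like $1/\kappa$; at $\gamma=1$ it asserts $\lambda_{\min}\ge\kappa$ after $\mathcal{O}(dH/\kappa)$ episodes. Your own displayed bound yields $1/\kappa^2$ instead:
\begin{itemize}
\item Making $\log d/\eta\le\kappa/3$ forces $\eta$ of order $\log d/\kappa$, so the Frank--Wolfe term $\eta/\#\mathrm{epochs}$ needs of order $\log d/\kappa^2$ epochs.
\item Independently, $\poly(d,H)/\sqrt{K}\le\kappa/3$ needs $K$ of order $\poly(d,H)/\kappa^2$.
\end{itemize}
So the sentence saying that $K$ of order $\poly(d,H,\log\frac1\delta)/\kappa$ suffices is an algebra slip. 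Your proposed remedy, checking the smoothness of $F_\eta$, cannot fix it. The regret term does not involve $\eta$, and any worst-case $\sqrt{T}$ regret subtracted from a per-episode gain of order $\kappa$ costs $1/\kappa^2$ episodes. The same holds for an additive matrix-Freedman step, whose variance proxy is governed by $\lambda_{\max}$, not $\lambda_{\min}$, of the expected covariance.

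\textbf{First missing ingredient: a first-order regret guarantee.} The exploration subroutine needs regret of order $\sqrt{\poly(d,H)\sum_t V_t^\star}+\poly(d,H)$, holding uniformly over the adaptively chosen rewards. Here $V_t^\star=\max_\pi\EE_\pi[\phi^\top G_t\phi]\ge\lambda_{\min}(\EE_{\pi_h^\dagger}[\phi\phi^\top])\ge\kappa$ for every density matrix $G_t$. Combined with scalar Freedman concentration (conditional variance at most the mean, since rewards lie in $[0,1]$), the realized gain is at least $\tfrac12\sum_tV_t^\star\ge K\kappa/2$ once $K$ is of order $\poly(d,H)/\kappa$. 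This is the role that first-order regret algorithms such as \textsc{Force} play in the cited line of work.

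\textbf{Second missing ingredient: a design argument not requiring $\eta\sim 1/\kappa$.} One option is matrix multiplicative weights on the unnormalized covariance $\Lambda_t=\sum_{\tau\le t}\phi(s_h^\tau,a_h^\tau)\phi(s_h^\tau,a_h^\tau)^\top$, with the reward updated every episode. Take $P_t=\exp(-\Lambda_{t-1})/\operatorname{tr}\exp(-\Lambda_{t-1})$. Golden--Thompson together with $e^{-M}\preceq I-(1-e^{-1})M$ for $0\preceq M\preceq I$ gives
\[
\lambda_{\min}(\Lambda_K)\ \ge\ (1-e^{-1})\sum_{t=1}^K\phi(s_h^t,a_h^t)^\top P_t\,\phi(s_h^t,a_h^t)-\log d .
\]
A realized gain of $K\kappa/2$ then gives $\lambda_{\min}(\Lambda_K)\ge K\kappa/4$ once $K\ge 16\log d/\kappa$, with no matrix concentration needed. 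Freezing the reward over epochs of length $B$ would multiply the $\log d$ term by $B$ in this argument, which brings back $1/\kappa^2$ when $B$ is of order $1/\kappa$.

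\textbf{Dependence on $d$.} The LSVI-UCB regret and your $\exp(\tilde{\mathcal O}(d^2))$ cover contribute extra powers of $d$. The statement's $\mathcal O(\cdot)$ hides only $H$ and $\log\frac1\delta$, so even after repair your route matches the stated bound only up to $\poly(d)$ factors.
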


\paragraph{Data Collection Protocol} We jointly explore the state-action feature space $\phi(s,a)$ and the context feature space $\psi(c)$ by combining our exploration algorithm with a G-optimal design. Specifically, for each context $c \in \supp(p_c)$, we execute the exploration procedure described in Theorem~\ref{thm:wagen} for $K_c$ episodes. We denote the dataset collected under context $c$ as $\cD_c$, and the corresponding episode indices as $\cK_c = \{k \in [K] : c_k = c\}$. The total dataset across all contexts is defined as $\cD = \cup_{c\in\cC} \cD_c$. To estimate the model parameters, we define the context-specific covariance matrices at step $h$ as $\Lambda_h^c = \sum_{k\in\cK_c} \phi(s_h^k, a_h^k)\phi(s_h^k, a_h^k)^{\top}$, and the aggregate covariance across all episodes as $\Lambda_h = \sum_{k=1}^K \phi(s_h^k, a_h^k)\phi(s_h^k, a_h^k)^{\top}+\bm{I}$.

Building on the high probability event in Theorem \ref{thm:wagen}, we have the total number of episodes be bounded as $\mathcal{O}\bracket{\frac{\poly(d, d',H)}{\kappa\gamma^2}}$:
\begin{lemma}\label{lem:total K}
    For any $\delta>0$, with probability $1-\delta$, we have for each $c\in\supp(p_c)$ and $h\in[H]$, $\lambda_{\min}\bracket{\Lambda_h^c}\geq \frac{\kappa}{\gamma^2}$, and the total number of episodes be bounded as $K=\tilde{\mathcal{O}}\bracket{\frac{d(d')^2H}{\kappa\gamma^2}}$.
\end{lemma}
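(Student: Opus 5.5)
The plan is to invoke Theorem~\ref{thm:wagen} once for each context $c\in\supp(p_c)$, each run with its own failure probability, and then combine the runs with a union bound and a count of the support size from Theorem~\ref{thm:design}.

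First, I fix $\delta>0$ and let $m:=|\supp(p_c)|$. By Theorem~\ref{thm:design} we have $m\leq \frac{d'(d'+1)}{2}=\mathcal{O}((d')^2)$. For each $c\in\supp(p_c)$, I run the reward-free exploration algorithm of Theorem~\ref{thm:wagen} in $\cM_c$ with confidence parameter $\delta/m$ and target $\gamma$. This is legitimate because all tasks share the same dynamics $\PP$ and feature map $\phi$, so the exploratory Assumption~\ref{ass:exp} holds in every $\cM_c$ with the same $\kappa$. The exploration algorithm also does not use rewards, so running it under context $c$ is the same as running it in the shared dynamics.

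Theorem~\ref{thm:wagen} then says that, with probability at least $1-\delta/m$, the episodes $\cK_c$ satisfy $\lambda_{\min}(\Lambda_h^c)\geq \kappa/\gamma^2$ for all $h\in[H]$. It also gives $K_c=|\cK_c|=\mathcal{O}\bigl(\frac{dH}{\kappa\gamma^2}\bigr)$, where the $\log(m/\delta)$ factor is absorbed into $\tilde{\mathcal{O}}$. A union bound over the $m$ contexts makes all of these events hold simultaneously with probability at least $1-\delta$.

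On that event I sum the episode counts:
\[
K=\sum_{c\in\supp(p_c)}K_c\leq m\cdot\tilde{\mathcal{O}}\Bigl(\frac{dH}{\kappa\gamma^2}\Bigr)=\tilde{\mathcal{O}}\Bigl(\frac{d(d')^2H}{\kappa\gamma^2}\Bigr).
\]
This is exactly the claimed bound.

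The only delicate point is bookkeeping of the logarithmic and polynomial-in-$H$ factors hidden in Theorem~\ref{thm:wagen}. The replacement $\log(1/\delta)\to\log(m/\delta)$ contributes only an $\mathcal{O}(\log d')$ term, so it is absorbed by the $\tilde{\mathcal{O}}$. Note also that the lemma only requires the per-context minimum eigenvalue bound; no property of the aggregate $\Lambda_h$ is needed here.
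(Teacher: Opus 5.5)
Your proposal is correct and follows the paper's route. You run the procedure of Theorem~\ref{thm:wagen} once for each context in $\supp(p_c)$, bound the number of runs by $|\supp(p_c)|\le d'(d'+1)/2$ via Theorem~\ref{thm:design}, and take a union bound. Your extra steps (using confidence $\delta/m$ per run, and noting that the shared dynamics make Assumption~\ref{ass:exp} hold in every $\cM_c$) spell out what the paper's two-line proof leaves implicit.
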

\begin{proof}
    From Theorem \ref{thm:design}, we have $|\supp(p_c)|=\mathcal{O}\bracket{d'^2}$. Thus, the procedure in Theorem~\ref{thm:wagen} will be ran for at most $|\supp(p_c)|$ rounds, and the guarantee follows by applying a union bound of the event in Theorem~\ref{thm:wagen} over all rounds.
\end{proof}

Next, we bound the error of the estimated model:
\begin{lemma}\label{lem:app linear transition}
    We estimate the transition model as:
    \begin{align*}
    \widehat{\bm{\mu}}_h=\Lambda_h^{-1}\sum_{k=1}^K \phi(s_h^k, a_h^k)\delta\bracket{s_{h+1}^k}^{\top}\,.
\end{align*}
We have the following error bound for any function $V:\cS\rightarrow\RR$ that is the optimal value function for a linear MDP: with probability at least $1-\delta$, we have
\begin{align*}
    \abs{{\bracket{\widehat{\PP}_h-\PP_h}V}(s,a)}\leq \tilde{\mathcal{O}}\bracket{\poly\bracket{d,H}\sqrt{\frac{\gamma^2}{\kappa}}}\,,
\end{align*}
for any $(s,a)\in\cS\times\cA$, $h\in[H]$ and $0<\delta<1$.
\end{lemma}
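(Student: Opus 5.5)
The plan is the standard least-squares decomposition for linear MDPs, with a uniform-convergence argument over the class of optimal value functions for the linear MDPs involved. Two facts set this up. First, $\Lambda_h = \sum_{k}\phi_h^k(\phi_h^k)^\top + \bm{I}$, where $\phi_h^k := \phi(s_h^k,a_h^k)$. Second, by Lemma~\ref{lem:total K} and since $\Lambda_h \succeq \Lambda_h^c$, on the high-probability event we have $\lambda_{\min}(\Lambda_h)\geq \kappa/\gamma^2$, i.e.\ $\norm{\Lambda_h^{-1}}\leq \gamma^2/\kappa$.

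Fix $V$ with $\norm{V}_\infty\leq H$ and write $(\widehat{\PP}_h V)(s,a)=\phi(s,a)^\top \Lambda_h^{-1}\sum_k \phi_h^k V(s_{h+1}^k)$. Using $V(s_{h+1}^k) = (\PP_h V)(s_h^k,a_h^k) + \eta_h^k(V)$, where $(\PP_h V)(s,a)=\phi(s,a)^\top w_V$ and $w_V := \int V\,d\bm{\mu}_h$ with $\norm{w_V}\leq H\sqrt d$, we get
\begin{align*}
(\widehat{\PP}_h-\PP_h)V(s,a) = -\phi(s,a)^\top\Lambda_h^{-1} w_V + \phi(s,a)^\top \Lambda_h^{-1}\sum_k \phi_h^k \eta_h^k(V).
\end{align*}

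The first term is the regularization bias. It is bounded by
\begin{align*}
\norm{\phi(s,a)}\,\norm{\Lambda_h^{-1}}\,\norm{w_V}\leq H\sqrt d\,\gamma^2/\kappa,
\end{align*}
which is at most $H\sqrt d\,\gamma/\sqrt\kappa$ once $\gamma^2/\kappa\leq 1$. That regime is the relevant one, since otherwise the claimed bound is vacuous after adjusting constants.

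For the second term I apply Cauchy--Schwarz in the $\Lambda_h^{-1}$-norm:
\begin{align*}
\norm{\phi(s,a)}_{\Lambda_h^{-1}}\cdot\Big\|\sum_k\phi_h^k\eta_h^k(V)\Big\|_{\Lambda_h^{-1}}.
\end{align*}
The first factor is at most $\sqrt{\gamma^2/\kappa}$ by the eigenvalue bound. For the second factor, each $\eta_h^k(V)$ is a martingale difference bounded by $2H$ with respect to the filtration of the data-collection process. The adaptivity of the exploration algorithm of Theorem~\ref{thm:wagen} is what forces the use of a self-normalized bound rather than i.i.d.\ concentration. For a fixed $V$, the self-normalized martingale inequality of \cite{abbasi2011improved} gives a bound of $\tilde{\mathcal{O}}(H\sqrt{d\log(K/\delta)})$.

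The main obstacle is making this hold uniformly over the relevant $V$, because $V$ will later be chosen as the optimal value function of the estimated MDP, which depends on the data. I would handle this as in \cite{jin2023provably}. The optimal value function of any linear MDP under features $\phi$ has the form $V(s)=\min\{\max_a \phi(s,a)^\top w, H\}$ with $\norm{w}\leq 2H\sqrt d$, because $Q^*_h = r + \PP_h V^*_{h+1}$ is linear in $\phi$. This is a $d$-parameter class with $\epsilon$-covering number at most $(1+4H\sqrt d/\epsilon)^d$. I take a union bound over an $\epsilon$-net with $\epsilon = H/K$ and over $h\in[H]$. The discretization error costs at most $\mathcal{O}(\epsilon\sqrt K\cdot\sqrt{\gamma^2/\kappa})$ in the final bound, so it is negligible. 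The log-covering term contributes an extra factor of $\sqrt{d\log(dHK/\delta)}$, so the second factor of the noise term becomes $\tilde{\mathcal{O}}(dH)$.

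Multiplying the two factors bounds the noise term by $\tilde{\mathcal{O}}(dH\sqrt{\gamma^2/\kappa})$. Adding the bias term then gives $\abs{(\widehat{\PP}_h-\PP_h)V(s,a)}\leq\tilde{\mathcal{O}}(\poly(d,H)\sqrt{\gamma^2/\kappa})$ for all $(s,a,h)$. This holds with probability $1-\delta$ after a union bound with the event of Lemma~\ref{lem:total K}.
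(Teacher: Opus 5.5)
Your proposal follows the paper's proof essentially step for step: the same least-squares decomposition into a regularization bias and a martingale noise term, Cauchy--Schwarz in the $\Lambda_h^{-1}$-norm using $\Lambda_h\succeq\Lambda_h^c\succeq\frac{\kappa}{\gamma^2}\bm{I}$ from Lemma~\ref{lem:total K}, and a uniform self-normalized bound over a covering of optimal linear-MDP value functions (the paper's Lemmas~\ref{lem:v} and~\ref{lem:covering}, with $\epsilon=1/K$). The only difference is cosmetic: the paper keeps the bias inside the same Cauchy--Schwarz product and uses $\Lambda_h\succeq\bm{I}$ to get $\norm{\Lambda_h^{-1}w_V}_{\Lambda_h}=\norm{w_V}_{\Lambda_h^{-1}}\leq\norm{w_V}\leq H\sqrt{d}$, so the bias is at most $H\sqrt{d}\sqrt{\gamma^2/\kappa}$ with no case split; this makes your restriction to $\gamma^2/\kappa\leq 1$ unnecessary, and your ``vacuous'' justification for that restriction is not literally right, though $\norm{\Lambda_h^{-1}}\leq 1$ repairs it at once.
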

\begin{proof}
    We decompose the LHS as:
    \begin{align*}
        \abs{{\bracket{\widehat{\PP}_h-\PP_h}V}(s,a)}=&\abs{\innerproduct{\phi(s,a)}{\bracket{\widehat{\bm{\mu}}_h-\bm{\mu}_h}V(s,a)}}\\
        \leq& \norm{\phi(s,a)}_{\Lambda_h^{-1}}\norm{\bracket{\widehat{\bm{\mu}}_h-\bm{\mu}_h}V(s,a)}_{\Lambda_h}\,.
    \end{align*}
    According to our least squares estimator, we have:
    \begin{align*}
        \widehat{\bm{\mu}}_h-\bm{\mu}_h=&\Lambda_h^{-1}\sum_{k=1}^K \phi(s_h^k, a_h^k)\delta\bracket{s_{h+1}^k}^{\top}-\Lambda_h^{-1}\bracket{\sum_{k=1}^K\phi(s_h^k,a_h^k)\PP(\cdot|s_h^k,a_h^k)+\bm{\mu}_h}\\
        =& \Lambda_h^{-1}\sum_{k=1}^K \phi(s_h^k, a_h^k)\bracket{\delta\bracket{s_{h+1}^k}^{\top}-\PP(\cdot|s_h^k,a_h^k)} - \Lambda_h^{-1}\bm{\mu}_h\,,
    \end{align*}
which is because of our linear transition function:
\begin{align*}
    \sum_{k=1}^K\phi(s_h^k,a_h^k)\PP(\cdot|s_h^k,a_h^k)+\bm{\mu}_h=\sum_{k=1}^K\phi(s_h^k,a_h^k)\phi(s_h^k,a_h^k)^{\top}\bm{\mu}_h+\bm{\mu}_h=\Lambda_h\bm{\mu}_h\,.
\end{align*}
Thus, we have:
\begin{align*}
    \norm{\bracket{\widehat{\bm{\mu}}_h-\bm{\mu}_h}V(s,a)}_{\Lambda_h}\leq& \norm{\sum_{k=1}^K\phi(s_h^k,a_h^k)\bracket{V(s_{h+1}^k)-\EE\mbracket{V(s_{h+1}^k)|s_h^k,a_h^k}}}_{\Lambda_h^{-1}}+\sqrt{d}\,.
\end{align*}
Applying Lemma~\ref{lem:v} and Lemma~\ref{lem:covering} with the covering parameter $\epsilon=1/K$, we have:
\begin{align*}
    \norm{\bracket{\widehat{\bm{\mu}}_h-\bm{\mu}_h}V(s,a)}_{\Lambda_h}\leq \poly\bracket{d,H,\log K, \log\frac{1}{\delta}}\,.
\end{align*}
On the other hand, based the event in on Lemma~\ref{lem:total K}, we have $\Lambda_h\succeq\sup_{c\in\supp(p_c)}\Lambda_h^c\succeq \frac{\kappa}{\gamma^2}\bm{I}$, thus we have $\norm{\phi(s,a)}_{\Lambda_h^{-1}}\leq \sqrt{\frac{\gamma^2}{\kappa}}$. Also, we have $K=\tilde{\mathcal{O}}\bracket{\frac{dd'^2}{\kappa\gamma^2}}$.
Putting together we prove our lemma.
\end{proof}
Now, we move on to estimate the reward parameters $\Theta_h$. Note that the reward mean can be written as:
\begin{align*}
    \EE[r_h(s,a;c)]=\phi(s,a)^\top\Theta_h\psi(c)=\bracket{\psi(c)^{\top}\otimes\phi(s,a)^{\top}}\ve\bracket{\Theta_h}
\end{align*}
Denote $\varphi(s,a,c)$ as $\psi(c)\otimes\phi(s,a)$ and $\ve\bracket{\Theta_h}={\vartheta}_h$, we have $\phi(s,a)^\top\Theta_h\psi(c)=\varphi(s,a,c)^\top\vartheta_h$. 
We have the following results for the reward estimate $\widehat{r}$:
\begin{lemma}\label{lem:app linear r bound}
    Denote 
\begin{align*}
    V=\bracket{\bm{I}+\sum_{k=1}^K \varphi(s_h^k,a_h^k,c_k)\varphi(s_h^k,a_h^k,c_k)^{\top}}^{-1}\,,
\end{align*}
then we estimate the $\widehat{\vartheta}_h=\ve\bracket{\Theta_h}$ using linear regression:
\begin{align*}
   \widehat{\vartheta}_h=V^{-1}\sum_{k=1}^K \varphi(s_h^k,a_h^k,c_k)r_h(s_h^k,a_h^k,c_k)\,.
\end{align*}
We have with probability at least $1-\delta$ for $0<\delta<1$, for all $(s,a,c)\in\cS\times\cA\times\cC$,
\begin{align*}
    \abs{\widehat{r}_h(s,a;c)-r_h(s,a;c)}\leq \tilde{\mathcal{O}}\bracket{\poly\bracket{d,d'}\sqrt{\frac{\gamma^2}{\kappa}}}\,.
\end{align*}
\end{lemma}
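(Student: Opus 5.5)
The plan is a standard self-normalized ridge-regression analysis in the lifted feature $\varphi(s,a,c)=\psi(c)\otimes\phi(s,a)\in\RR^{dd'}$. The statement defines $V$ as an inverse, so $V^{-1}=\bm{I}+\sum_k\varphi_k\varphi_k^\top$ and $V^{-1}\sum_k\varphi_k r_k$ is not the ridge solution. I read $\widehat{\vartheta}_h$ as the intended ridge estimator $\bigl(\bm{I}+\sum_k\varphi_k\varphi_k^\top\bigr)^{-1}\sum_k\varphi_k r_k$, and write $\Sigma=\bm{I}+\sum_k\varphi_k\varphi_k^\top$ for its Gram matrix.

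\textbf{Step 1: decompose the estimation error.} Write $r_h^k=\varphi_k^\top\vartheta_h+\eta_k$, where $\eta_k$ is zero-mean and bounded in $[-1,1]$, hence $1$-sub-Gaussian conditionally on the past. This gives
\begin{align*}
\widehat{\vartheta}_h-\vartheta_h=\Sigma^{-1}\sum_k\varphi_k\eta_k-\Sigma^{-1}\vartheta_h .
\end{align*}
By Cauchy--Schwarz, for any $(s,a,c)$,
\begin{align*}
\abs{\widehat r_h-r_h}\le\norm{\varphi(s,a,c)}_{\Sigma^{-1}}\bracket{\norm{\textstyle\sum_k\varphi_k\eta_k}_{\Sigma^{-1}}+\norm{\vartheta_h}}.
\end{align*}
The bias term is bounded by $\norm{\Theta_h}_F\le\poly(d,d')$.

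\textbf{Step 2: the noise term.} I will apply the self-normalized martingale bound of \cite{abbasi2011improved}. Since $\norm{\varphi_k}\le1$, it gives $\norm{\sum_k\varphi_k\eta_k}_{\Sigma^{-1}}\le\sqrt{dd'\log(1+K)+2\log(1/\delta)}$ with probability $1-\delta$. The data collection is adaptive, but the bound holds for adapted sequences, so this causes no trouble. No union bound over $(s,a,c)$ is needed, because the only uniform quantity left is the deterministic elliptical norm.

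\textbf{Step 3: the elliptical norm $\norm{\varphi(s,a,c)}_{\Sigma^{-1}}$ (main obstacle).} This step must combine the exploration guarantee in $\phi$ with the G-optimal design in $\psi$. Since the data use only contexts in $\supp(p_c)$, the Gram matrix satisfies
\begin{align*}
\Sigma\succeq\sum_{c'\in\supp(p_c)}\psi(c')\psi(c')^\top\otimes\Lambda_h^{c'}.
\end{align*}
On the event of Lemma~\ref{lem:total K}, $\Lambda_h^{c'}\succeq\frac{\kappa}{\gamma^2}\bm{I}$, so
\begin{align*}
\Sigma\succeq\frac{\kappa}{\gamma^2}\Bigl(\sum_{c'}\psi(c')\psi(c')^\top\Bigr)\otimes\bm{I}_d .
\end{align*}
There is a mismatch to handle here. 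The G-optimal covariance $V_G=\sum_{c'}p_{c'}\psi(c')\psi(c')^\top$ carries the weights $p_{c'}$, while the unweighted sum $\sum_{c'}\psi(c')\psi(c')^\top$ above does not. I would first try to fix this by running the exploration of Theorem~\ref{thm:wagen} for $K_{c'}\propto p_{c'}$ times a base budget, so that $\Lambda_h^{c'}\succeq p_{c'}\frac{\kappa}{\gamma^2}\cdot|\supp(p_c)|\,\bm{I}$. This yields
\begin{align*}
\Sigma\succeq\frac{\kappa}{\gamma^2}V_G\otimes\bm{I},
\end{align*}
at the cost of only a $\poly(d')$ factor in $K$, which matches the $d'^2$ in Lemma~\ref{lem:total K}. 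If that is awkward, I would instead lower bound $p_{c'}\le1$, which gives $\sum_{c'}\psi\psi^\top\succeq V_G$ directly. Either way, using $(A\otimes B)^{-1}=A^{-1}\otimes B^{-1}$,
\begin{align*}
\norm{\psi(c)\otimes\phi(s,a)}^2_{\Sigma^{-1}}\le\frac{\gamma^2}{\kappa}\norm{\psi(c)}^2_{V_G^{-1}}\norm{\phi(s,a)}^2\le\frac{\gamma^2}{\kappa}\,d'.
\end{align*}
The last inequality uses Theorem~\ref{thm:design}, reading its $\max_c\norm{\psi(c)}_{V^{-1}}$ guarantee as a squared norm bounded by $d'$, together with $\norm{\phi}\le1$.

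\textbf{Step 4: combine.} Putting Steps 1--3 together gives
\begin{align*}
\abs{\widehat r_h-r_h}\le\sqrt{\frac{d'\gamma^2}{\kappa}}\bigl(\sqrt{dd'\log(K/\delta)}+\poly(d,d')\bigr)=\tilde{\mathcal{O}}\bigl(\poly(d,d')\sqrt{\gamma^2/\kappa}\bigr).
\end{align*}
A union bound over $h\in[H]$ and with the event of Lemma~\ref{lem:total K} adjusts $\delta$ only by log factors, which gives the statement.
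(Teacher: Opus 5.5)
Your proposal is correct and matches the paper's proof step for step. Both use the same ridge-regression decomposition into a self-normalized noise term (bounded via \cite{abbasi2011improved}) plus a bias term, the same Kronecker lower bound $\Sigma\succeq\frac{\kappa}{\gamma^2}\,G\otimes\bm{I}$ with $G=\sum_{c'\in\supp(p_c)}\psi(c')\psi(c')^\top$ from the per-context exploration guarantee, and the same step $G\succeq\sum_{c'}p_{c'}\psi(c')\psi(c')^\top$ to invoke the G-optimal design. That last step is your fallback $p_{c'}\le 1$, which is exactly what the paper does, so the reweighted exploration budget is unnecessary. Your reading of $V$ as the Gram matrix, and of the Kiefer--Wolfowitz bound as a squared norm (giving $\sqrt{d'}$ where the paper writes $d'$), agrees with how the paper's proof actually uses these quantities. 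Your bias bound $\|\vartheta_h\|\le\poly(d,d')$ is likewise the same implicit assumption the paper relies on.
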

\begin{proof}
    Using the linear regression estimator, we have:
    \begin{align*}
        \widehat{\vartheta}_h-\vartheta_h=V^{-1}\sum_{k=1}^K \varphi(s_h^k,a_h^k,c_k)\bracket{r_h(s_h^k,a_h^k,c_k)-\EE\mbracket{r_h(s_h^k,a_h^k,c_k}}-V^{-1}\vartheta_h\,.
    \end{align*}
Thus, we can bound the estimation error of as:
\begin{align*}
    &\abs{\widehat{r}_h(s_h,a_h,c)-r_h(s_h,a_h,c)}\\
    =&\abs{\innerproduct{\varphi(s,a)}{\widehat{\vartheta}_h-\vartheta_h}}\\
    \leq &\norm{\varphi(s_h,a_h,c)}_{V^{-1}}\norm{-\vartheta_h + \sum_{k=1}^K\varphi(s_h^k,a_h^k,c_k)\eta_h^k}_{V^{-1}}\\
    \leq& \tilde{\mathcal{O}}\bracket{d{d'} + \sqrt{dd'+\log\frac{1}{\delta}}}\norm{\varphi(s_h,a_h,c)}_{V^{-1}}
\end{align*}
where $\eta_h^k$ is the noise of the random reward:
\begin{align*}
    \eta_h^k=\bracket{r_h(s_h^k,a_h^k,c_k)-\EE\mbracket{r_h(s_h^k,a_h^k,c_k}}\,,
\end{align*}
and we use Lemma~\ref{lem:hoeffding} to bound $\norm{\sum_{k=1}^K\varphi(s_h^k,a_h^k,c_k)\eta_h^k}_{V^{-1}}$.

We now process the term $\norm{\varphi(s_h,a_h,c)}_{V^{-1}}$. First, we lower bound $V$:
    \begin{align*}
    &\sum_{k=1}^K \varphi(s_h^k,a_h^k,c_k)\varphi(s_h^k,a_h^k,c_k)^{\top}\\
    =& \sum_{k=1}^K \bracket{\psi(c_k)\otimes\phi(s_h^k,a_h^k)}\bracket{\psi(c_k)^\top\otimes\phi(s_h^k,a_h^k)^\top}\\
    =& \sum_{k=1}^K\bracket{\psi(c_k)\psi(c_k)^{\top}}\otimes\bracket{\phi(s_h^k,a_h^k)\phi(s_h^k,a_h^k)^\top}\\
    =&\sum_{c\in\supp(p_c)}\bracket{\psi(c_k)\psi(c_k)^{\top}}\otimes\bracket{\sum_{k\in\cK_c}\bracket{\phi(s_h^k,a_h^k)\phi(s_h^k,a_h^k)^\top}}\\
    \succeq& \bracket{\sum_{c\in\supp(p_c)}\bracket{\psi(c_k)\psi(c_k)^{\top}}}\otimes \frac{\kappa}{\gamma^2}\bm{I}
\end{align*}

Denote $G=\sum_{c\in\supp(p_c)}\bracket{\psi(c_k)\psi(c_k)^{\top}}$, by the property of the G-optimal design $p_c$ in Theorem~\ref{thm:design}, we have for any $c$, 
\begin{align*}
    \norm{\psi(c)}_{G^{-1}}\leq \norm{\psi(c)\bracket{\sum_{c\in\supp(p_c)}p_c\bracket{\psi(c_k)\psi(c_k)^{\top}}}^{-1/2}}_2\leq d'
\end{align*}
since $G\succeq \sum_{c\in\supp(p_c)}p_c\bracket{\psi(c_k)\psi(c_k)^{\top}}$. 

As a result, we have:
\begin{align*}
    &\norm{\varphi(s_h,a_h,c)}_{V^{-1}}\\
    =&\norm{\bracket{\psi(c)\otimes\phi(s_h,a_h)}V^{-1/2}}_2 \\
    \leq & \norm{\bracket{\psi(c)\otimes\phi(s_h,a_h)}\bracket{G\otimes \frac{\kappa}{\gamma^2}\bm{I}}^{-1/2}}\\
    \leq&\sqrt{\frac{\gamma^2}{\kappa}}\norm{\bracket{\psi(c)G^{-1/2}}\otimes \phi(s_h,a_h)}\\
    =& \sqrt{\frac{\gamma^2}{\kappa}} \norm{\psi(c)}_{G^{-1}}\\
    \leq& \sqrt{\frac{\gamma^2}{\kappa}} d'
\end{align*}
Thus we have with probability at least $1-\delta$, for any $(s_h,a_h,c)$:
\begin{align*}
     \abs{\widehat{r}_h(s_h,a_h,c)-r_h(s_h,a_h,c)}\leq \mathcal{O}\bracket{\sqrt{\frac{\gamma^2}{\kappa}}\poly\bracket{d,d'}}\,.
\end{align*}
\end{proof}
\begin{proof}[Proof of Theorem~\ref{thm:main model}]
Finally, we bound the error of $\widehat{\pi}$ which is the optimal policy in the estimated MDP $\widehat{\cM}$, against $\pi^*$ which is the optimal policy of $\cM$. For the value function of a policy $\pi$ in $\cM$ and $\widehat{\cM}$, we denote them as $V^{\pi}$ and $\widehat{V}^{\pi}$, respectively. We can decompose the optimality gap as:
\begin{align*}
    V^*-V^{\widehat{\pi}}=\bracket{V^{\pi^*}-\widehat{V}^{\pi^*}} + \bracket{\widehat{V}^{\pi^*}-\widehat{V}^{\widehat{\pi}}} +\bracket{ \widehat{V}^{\widehat{\pi}}-V^{\widehat{\pi}}}\leq \bracket{V^{\pi^*}-\widehat{V}^{\pi^*}}+\bracket{ \widehat{V}^{\widehat{\pi}}-V^{\widehat{\pi}}}\,.
\end{align*}
Using the simulation lemma, we have:
\begin{align*}
    V^{\pi^*}-\widehat{V}^{\pi^*}=&\EE_{\widehat{\cM}, \pi^*}\mbracket{\sum_{h=1}^H \bracket{r_c(s_h,a_h)-\widehat{r}_c(s_h,a_h)}+\bracket{\PP_h-\widehat{\PP}_h}V^{\pi^*}(s_h,a_h)}\,,\\
    \widehat{V}^{\widehat{\pi}}-{V}^{\widehat{\pi}}=&\EE_{\cM, \widehat{\pi}}\mbracket{\sum_{h=1}^H \bracket{\widehat{r}_c(s_h,a_h)-r_c(s_h,a_h)}+\bracket{\widehat{\PP}_h-\PP_h}\widehat{V}^{\widehat{\pi}}(s_h,a_h)}\,.
\end{align*}
Since the value functions $\widehat{V}^{\widehat{\pi}}$ and $V^{\pi^*}$ are the optimal value functions of $\widehat{\cM}$ and $\cM$, invoking Lemma~\ref{lem:app linear transition} and \ref{lem:app linear r bound}, we have for any $(s,a,c)\in\cS\times\cA\times\cC$:
\begin{align*}
    &\abs{\widehat{r}_c(s_h,a_h)-r_c(s_h,a_h)}\leq\tilde{\mathcal{O}}\bracket{\sqrt{\frac{\gamma^2}{\kappa}}\poly\bracket{d,d'}}\,,\\
    &\abs{{\bracket{\widehat{\PP}_h-\PP_h}V}(s,a)}\leq \tilde{\mathcal{O}}\bracket{\poly\bracket{d,H}\sqrt{\frac{\gamma^2}{\kappa}}}\,,~~V\in\sets{\widehat{V}^{\widehat{\pi}},V^*}\,.
\end{align*}
Based on Lemma~\ref{lem:total K}, we have $K=\Tilde{\mathcal{O}}\bracket{\frac{dd'^2H}{\kappa\gamma^2}}$, thus both $V^{\pi^*}-\widehat{V}^{\pi^*}$ and $\widehat{V}^{\widehat{\pi}}-{V}^{\widehat{\pi}}$ are bounded by $\tilde{\mathcal{O}}\bracket{\frac{\poly\bracket{d,d',H}}{\kappa\sqrt{K}}}$. We then obtain the desired result of $V^*-V^{\widehat{\pi}}\leq \tilde{\mathcal{O}}\bracket{\frac{\poly\bracket{d,d',H}}{\kappa\sqrt{K}}}$.
\end{proof}

\subsubsection{Imitation Learning Agent}
Here we provide proof for Theorem~\ref{thm:main linear imitation}.

\paragraph{Positive Result} In the single task setting, according to the properties of linear MDP, there exists $w_h^*\in\RR^d$ such that $Q^*(s,a)=\innerproduct{\phi(s,a)}{w_h^*}$, so the optimal policy can be represented as a linear classifier using $Q^*$: $\pi^*(s)=\argmax_a Q^*(s,a)=\argmax_a\innerproduct{\phi(s,a)}{w_h^*}$. Following the results in \cite{rajaraman2021value}, we can learn the optimal linear policy with suboptimality gap $\tilde{\mathcal{O}}\bracket{\frac{\poly(d,H)}{K}}$ through imitation learning with empirical risk minimization (ERM).

Next, we will provide the proof for the hardness result in the multi-task setting. To prove that there does not exist $f\in\cF$ with $\pi^*(s,a;c)=\argmax_a f(\phi(s,a);\psi(c)) $, we construct a set of context $c$ such that they are the roots of $f(\phi(s,a_1);\psi(c))-f(\phi(s,a_2);\psi(c))$ by showing the optimal action $a$ changed from $a_1$ to $a_2$ when the context moves past $c$. By creating a large set of non-linear roots $\psi(c)$, we show that $f$ must have non-linear and high-order dependency on $\psi(c)$. Notice that we do not require $Q^*\in\cF$ for the optimal policy to be realizable, although the optimal Q-function $Q^*(s,a;c)$ also has complicated dependency on $\psi(c)$. 

Here, we provide the construction of the hard instance. We construct our MDP with two levels $H=2$ and feature dimension $d=3$. We denote actions as $\cA=\sets{1,2,\cdots,|\cA|}$. Now, we define the feature maps as follows. At the fixed initial state $s_1$, the actions have features $\phi(s_1,1)=[1,0,0]$, $\phi(s_1,2)=[0,1,0]$ and for all actions $a=3, \cdots,|\cA|$, we have $\phi(s_1,a)=[0,0,1]$. The reward at the first level is always $0$. We have three states at $h=1$, with $\mu(1)=[1,0,0]$, $\mu(2)=[0,1,0]$ and $\mu(3)=[0,0,1]$. So taking action $1$ will move to state $1$, taking action $2$ will move to state $2$ and taking action $a\geq 3$ will move to state $3$. We design state $3$ as an empty state, such that for all actions in state $3$, the feature $\phi(3,a)=\mathbf{0}$ so the reward is always $0$ in state $3$. As a result, the value of state $3$ is always $0$, and the Q-function $Q(s_1,a)$ of all actions $a\geq 3$ is $0$.

At the second level, for the actions $a=1,2,\cdots, |\cA|$, we design the feature set in states $s_2=1,2$ to be $\phi(1,i)=\mbracket{\cos \pi\frac{2a}{|\cA|}, \sin \pi\frac{2a}{|\cA|},0}$, and $\phi(2,i)=\mbracket{\cos \pi\frac{2a-1}{|\cA|}, \sin \pi\frac{2a-1}{|\cA|},0}$. 

For the context dependent reward, we define the continuous context set as $\cC=[0,1]$ so $0<c<1$, and the context feature as $\psi(c)=\mbracket{\cos 2\pi c, \sin 2\pi c, 0}$. We simply set $\Theta=\bm{I}$. Thus, we have the reward at the second level defined as $r_2(1,a,c)=\cos\bracket{2\pi c-2\pi\frac{a}{|\cA|}}$, and $r_2(2,a,c)=\cos\bracket{2\pi c - \pi\frac{2a-1}{|\cA|}}$. Suppose $2|\cA|c\in[n_c, n_c+1]$ where $n_c$ is an integer, then at the second level the optimal value function for the two states are: 
\begin{align*}
    V^*(1,c)=&\argmax_{i}\cos\bracket{2\pi c-2\pi\frac{i}{|\cA|}}\\
    =&\argmax_{i}\cos\bracket{\frac{\pi}{|\cA|}\bracket{2|\cA|c-2i}} \\
    =&\cos\bracket{2\pi c-\pi\frac{2\lfloor\frac{n_c+1}{2}\rfloor}{|\cA|}}\\
    V^*(2,c)=&\argmax_{i}\cos\bracket{2\pi c-\pi\frac{2i-1}{|\cA|}}\\
    =& \argmax_{i}\cos\bracket{\frac{\pi}{|\cA|}\bracket{2|\cA|c-(2i-1)}}\\
    =&\cos\bracket{2\pi c-\pi\frac{2\lfloor\frac{n_c}{2}\rfloor+1}{|\cA|}}
\end{align*} 
Thus, the optimal action $\pi(s_1;c)$ is $1$ if $V^*(1,c)>V^*(2,c)$ and $2$ otherwise, since taking action $1$ will move to state $1$ and taking action $2$ will move to state $2$. 

Now, assume that the optimal policy can be represented by a function 
\begin{align*}
    \pi^*(s_1;c)=\argmax_{a=1,2} f_a(\psi(c))\,,
\end{align*} 
where we assume $f_a$ can be written as a polynomial in the features $\psi(c)$ with degree $n$:
\begin{align*}
    f_a(\psi(c))=\sum_{u+v+w\leq n}g_{a}^{u+v+w}\psi(c)_0^{u}\psi(c)_1^{v}\psi(c)_2^w=\sum_{u+v\leq n}g_{a}^{u,v}\sin^u 2\pi c\cos^v 2\pi c\,,a=1,2\,.
\end{align*}
Consider $f(\psi(c))=f_1(\psi(c))-f_2(\psi_c)$ which is a $n$-degree polynomial with $\psi(c)$. Notice that at points $c_i=\frac{2i+1}{4|\cA|}$ where $i=0,1,2,\cdots, 2|\cA|-1$, we have $V^*(2,c_i)=V^*(1,c_i)$. Also, for any small enough $\epsilon\geq 0$, we have $\bracket{V^*(2,c_i-\epsilon)-V^*(1,c_i-\epsilon)}\bracket{V^*(2,c_i+\epsilon)-V^*(1,c_i+\epsilon)}<0$. That is, the optimal action of $\pi^*(s_1,c)$ is being changed when moving past $c_i$, so $c_i$ is a root of $f(\psi(c))$. Since $f(\psi(c))$ have at most $2n$ roots following Lemma~\ref{lem:polynomial} and we have constructed $2|\cA|$ roots $c_i$, we conclude that its degree $n$ must be at least $|\cA|$. 

\subsubsection{Technical Lemmas}
Here we provide the concentration inequality for vector-valued martingales, which will be used in our analysis of the model-based method in the linear function approximation setting.
\begin{lemma}[Hoeffding concentration inequality for vector-valued martingales, Theorem 1 in \cite{abbasi2011improved}]\label{lem:hoeffding}
Let $\sets{\eta_{\tau}}_{\tau=1}^{\infty}$ be a stochastic process with corresponding filtration $\sets{\cF_{\tau}}_{\tau=1}^{\infty}$. Let $\eta_\tau|\cF_{\tau-1}$ be zero mean and $\sigma$-subGaussian. Let $\sets{\phi_{\tau}}_{\tau=1}^{\infty}$ be a $\RR^d$-valued stochastic process with $\phi_{\tau}\in\cF_{\tau-1}$, and $\norm{\phi_{\tau}}\leq 1$. Let $\Lambda_k=\lambda \bm{I} +\sum_{\tau=1}^k \phi_{\tau}\phi_{\tau}^{\top}$. Then for any $0<\delta<1$, with probability at least $1-\delta$, we have for all $k>1$,
\begin{align*}
    \norm{\sum_{\tau=1}^k\phi_{\tau}\eta_{\tau}}_{\Lambda_k^{-1}}^2\leq \sigma^2d\log(1+k/(d\lambda))+\log \frac{1}{\delta}
\end{align*}
\end{lemma}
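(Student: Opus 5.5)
The plan is to run the method of mixtures (the self-normalized argument of \cite{abbasi2011improved}) and then reduce to the stated form using the determinant bound. Write $S_k=\sum_{\tau=1}^k\phi_\tau\eta_\tau$ and $\bar V_k=\sum_{\tau\le k}\phi_\tau\phi_\tau^\top$, so that $\Lambda_k=\lambda\bm{I}+\bar V_k$.

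\textbf{Step 1 (supermartingale).} For each fixed $x\in\RR^d$, define
\begin{align*}
M_k^x=\exp\bracket{\frac{\innerproduct{x}{S_k}}{\sigma}-\frac12\norm{x}_{\bar V_k}^2}.
\end{align*}
Since $\phi_\tau\in\cF_{\tau-1}$ and $\eta_\tau\mid\cF_{\tau-1}$ is zero mean and $\sigma$-subGaussian, each increment satisfies $\EE[\exp(\innerproduct{x}{\phi_\tau}\eta_\tau/\sigma-\innerproduct{x}{\phi_\tau}^2/2)\mid\cF_{\tau-1}]\le 1$. Hence $M_k^x$ is a nonnegative supermartingale with $\EE M_k^x\le 1$. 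A stopping-time argument then makes the bound uniform over all $k$.

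\textbf{Step 2 (Gaussian mixture).} Integrate $x\sim\cN(0,\lambda^{-1}\bm{I})$ and complete the square. This gives the mixture
\begin{align*}
\bar M_k=\bracket{\frac{\det(\lambda\bm{I})}{\det\Lambda_k}}^{1/2}\exp\bracket{\frac{1}{2\sigma^2}\norm{S_k}_{\Lambda_k^{-1}}^2},
\end{align*}
which is still a supermartingale with $\EE\bar M_k\le 1$. Ville's maximal inequality then yields, with probability at least $1-\delta$ and simultaneously for all $k$,
\begin{align*}
\norm{S_k}^2_{\Lambda_k^{-1}}\le \sigma^2\log\frac{\det\Lambda_k}{\lambda^d}+2\sigma^2\log\frac1\delta.
\end{align*}

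\textbf{Step 3 (determinant bound).} By AM--GM on the eigenvalues and $\norm{\phi_\tau}\le1$, we have $\mathrm{tr}(\Lambda_k)\le d\lambda+k$, so $\det\Lambda_k\le(\lambda+k/d)^d$. This turns the first term into $\sigma^2 d\log(1+k/(d\lambda))$, which matches the first term of the statement exactly.

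\textbf{Step 4 (matching the $\log(1/\delta)$ coefficient; the main obstacle).} The mixture argument produces $2\sigma^2\log(1/\delta)$, while the statement has coefficient $1$. These agree precisely when $2\sigma^2\le1$. This condition holds in every invocation in the paper: there $\eta_h^k=r-\EE r$ with $r\in[0,1]$, so $\eta$ lies in an interval of length $1$ and is $\tfrac12$-subGaussian by Hoeffding's lemma, giving $2\sigma^2=\tfrac12\le1$.

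For $\sigma^2>1/2$ the coefficient cannot be improved to $1$ in general. In one dimension with $\lambda=1$, $k=1$, $\phi_1=1$ and Gaussian noise, $\norm{S_1}^2_{\Lambda_1^{-1}}=\eta_1^2/2$ has upper $\delta$-quantile $\approx\sigma^2\log(1/\delta)$. For small $\delta$ this exceeds $\sigma^2\log 2+\log(1/\delta)$ whenever $\sigma^2>1$.

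I would therefore finish by recording the Steps 1--3 bound and noting that it implies the stated inequality under the noise normalization $\sigma^2\le 1/2$. This normalization is the one under which the lemma is used in Lemma~\ref{lem:app linear r bound}, and it is automatic for rewards in $[0,1]$.
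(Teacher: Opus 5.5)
Your proof is correct. It is the same method-of-mixtures argument, together with the trace bound $\det\Lambda_k\le(\lambda+k/d)^d$, that underlies Theorem~1 of \cite{abbasi2011improved}, which the paper only cites; your Step~4 is also right that this theorem gives $2\sigma^2\log(1/\delta)$, so the lemma as transcribed needs a normalization such as $\sigma^2\le 1/2$, which does hold in its only use in Lemma~\ref{lem:app linear r bound}. One small point: your $k=1$ example only refutes $\sigma^2>1$ (and the statement is quantified over $k>1$), but taking $d=1$, $\phi_\tau\equiv 1$, Gaussian noise and $k/\lambda$ large makes $\|S_k\|^2_{\Lambda_k^{-1}}$ distributed as $\sigma^2\tfrac{k}{k+\lambda}\chi^2_1$, which refutes the stated form for every $\sigma^2>1/2$, as you claimed.
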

When bounding the learning error for the transition model, we need to apply the above bound with respect to the value functions $v\in\cV$, where we have the following lemma:
\begin{lemma}[Lemma D.4 of \cite{jin2023provably}]\label{lem:v}
Let $\sets{x_{\tau}}_{\tau=1}^{\infty}$ be a stochastic process on state space $\cS$ with corresponding filtration $\sets{\cF_{\tau}}_{\tau=1}^{\infty}$. Let $\sets{\phi_{\tau}}_{\tau=1}^{\infty}$ be a $\RR^d$-valued stochastic process with $\phi_{\tau}\in\cF_{\tau-1}$, and $\norm{\phi_{\tau}}\leq 1$. Let $\Lambda_k=\lambda \bm{I} +\sum_{\tau=1}^k \phi_{\tau}\phi_{\tau}^{\top}$. Then for any $\delta>0$, and any $\tau>0$ and $v\in\cV$ that $\sup_x\abs{V(x)}\leq H$, we have:
\begin{align*}
    \norm{\sum_{\tau=1}^k\phi_{\tau}\sets{V(x_{\tau})-\EE\mbracket{V(x_{\tau}|\cF_{\tau-1})}}}_{\Lambda_k^{-1}}^2\leq4H^2 \mbracket{\frac{d}{2}\log\bracket{\frac{k+\lambda}{\lambda}}+\log\cN_{\epsilon}+\log \frac{1}{\delta}} + \frac{8k^2\epsilon^2}{\lambda}\,.  
\end{align*}
    where $\cN_{\epsilon}$ is the $\epsilon$-covering number of $\cV$ with respect to the distance defined as the maximum distance over all states in $x\in\cS$: $\dist\bracket{V,V'}=\sup_x\abs{V(x)-V'(x)}$. 
\end{lemma}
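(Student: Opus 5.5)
The plan is a covering argument: prove the bound for each fixed $V$ in a finite $\epsilon$-net of $\cV$ using the self-normalized martingale inequality of Lemma~\ref{lem:hoeffding}, take a union bound over the net, and then pass to an arbitrary $V\in\cV$ by discretization. The union bound is what makes the result valid uniformly over $\cV$. This matters because the lemma is later applied (in Lemma~\ref{lem:app linear transition}) to value functions that depend on the data themselves.

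\textbf{Step 1 (fixed function).} Fix $V'$ in an $\epsilon$-cover $\cN$ of $\cV$ under the sup-distance, so $|\cN|=\cN_\epsilon$. Define
\[
\eta_\tau = V'(x_\tau)-\EE[V'(x_\tau)\mid\cF_{\tau-1}].
\]
Then $\eta_\tau\mid\cF_{\tau-1}$ has mean zero. Since $|V'|\le H$, $\eta_\tau$ takes values in an interval of length $2H$, so it is $H$-subGaussian by Hoeffding's lemma. Also $\phi_\tau\in\cF_{\tau-1}$ is predictable.

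I will use the determinant form of the Abbasi-Yadkori bound underlying Lemma~\ref{lem:hoeffding}:
\[
\norm{\sum_\tau \phi_\tau\eta_\tau}^2_{\Lambda_k^{-1}}\le 2H^2\log\bigl(\det(\Lambda_k)^{1/2}\det(\lambda \bm{I})^{-1/2}/\delta'\bigr).
\]
Since $\norm{\phi_\tau}\le 1$, we have $\det(\Lambda_k)\le(\lambda+k)^d$, so $\log\det(\Lambda_k)^{1/2}\det(\lambda\bm{I})^{-1/2}\le \frac d2\log\frac{k+\lambda}{\lambda}$. Setting $\delta'=\delta/\cN_\epsilon$ and taking a union bound over $\cN$ gives, with probability $1-\delta$, simultaneously for all $V'\in\cN$ and all $k$:
\[
\norm{\sum_\tau\phi_\tau\eta_\tau}^2_{\Lambda_k^{-1}}\le 2H^2\Bigl[\tfrac d2\log\tfrac{k+\lambda}{\lambda}+\log\cN_\epsilon+\log\tfrac1\delta\Bigr].
\]

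\textbf{Step 2 (discretization).} For arbitrary $V\in\cV$, choose $V'\in\cN$ with $\sup_x|V(x)-V'(x)|\le\epsilon$. Split the centered sum into the $V'$ term plus a remainder whose scalar coefficients are
\[
\Delta_\tau=(V-V')(x_\tau)-\EE[(V-V')(x_\tau)\mid\cF_{\tau-1}],
\]
which satisfy $|\Delta_\tau|\le 2\epsilon$. Use $\norm{a+b}^2\le 2\norm{a}^2+2\norm{b}^2$.

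The remainder needs no concentration. Since $\Lambda_k\succeq\lambda\bm{I}$, we have $\norm{\phi_\tau}_{\Lambda_k^{-1}}\le 1/\sqrt\lambda$, and by the triangle inequality
\[
\norm{\sum_\tau\phi_\tau\Delta_\tau}^2_{\Lambda_k^{-1}}\le\Bigl(\sum_\tau|\Delta_\tau|\,\norm{\phi_\tau}_{\Lambda_k^{-1}}\Bigr)^2\le \frac{4k^2\epsilon^2}{\lambda}.
\]
Doubling both pieces gives exactly $4H^2[\cdots]+8k^2\epsilon^2/\lambda$, which is the claimed bound.

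\textbf{Main obstacle.} The delicate point is Step 1's applicability. The centering must be conditional on $\cF_{\tau-1}$ with $V'$ fixed in advance, which is precisely why we cover rather than apply the martingale bound directly to a data-dependent $V$. The constants also have to line up: the subGaussian parameter $H$ comes from the range $2H$, and the factor $2$ from the squared-sum split is what turns $2H^2$ into $4H^2$. If only the simplified form of Lemma~\ref{lem:hoeffding} as stated in the paper were available, I would instead obtain $d\log(1+k/(d\lambda))$ in place of $\frac d2\log\frac{k+\lambda}{\lambda}$ up to constants, which is equally sufficient for its use in Lemma~\ref{lem:app linear transition}.
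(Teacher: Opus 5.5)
Your proposal is correct and is essentially the standard proof of Lemma D.4 in \cite{jin2023provably}, which the paper cites without reproducing. That proof also takes an $\epsilon$-net, applies the determinant-form self-normalized bound with $\sigma=H$ and a union bound over the net to get $2H^2[\frac d2\log\frac{k+\lambda}{\lambda}+\log\cN_\epsilon+\log\frac1\delta]$, and then uses $\norm{a+b}^2\le 2\norm{a}^2+2\norm{b}^2$ with the crude bound $|\Delta_\tau|\,\norm{\phi_\tau}_{\Lambda_k^{-1}}\le 2\epsilon/\sqrt{\lambda}$ to obtain $8k^2\epsilon^2/\lambda$. The one detail to make explicit is that the net elements must satisfy $\sup_x|V'(x)|\le H$ for the $H$-subGaussian step; you can ensure this with an internal cover, or by clipping each net element to $[-H,H]$, which does not increase its sup-distance to any $V\in\cV$.
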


Given the linear MDP assumption, we have the following bound on the covering number of optimal value functions:
\begin{lemma}\label{lem:covering}
    Denote the class of optimal value functions $V_{\cM}^*$ as $\cV$, where $\cM$ is a linear MDP with fixed feature map $\phi$ defined in \ref{def:linear mdp}. Then the covering number of $\cV$ can be bounded as:
    \begin{align*}
        \log \cN_{\epsilon}\leq d\log \bracket{1+4H\sqrt{d}/\epsilon}
    \end{align*}
\end{lemma}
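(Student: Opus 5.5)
The approach is to show that every optimal value function of a linear MDP with the fixed feature map $\phi$ is determined by a single vector of bounded norm. The claimed bound is then the standard covering number of a Euclidean ball, pushed through a Lipschitz parametrization.

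First I fix the layer $h$. By the Bellman optimality equation and the linear structure in Definition~\ref{def:linear mdp},
\[
Q_h^*(s,a)=\innerproduct{\phi(s,a)}{\theta_h}+\int V_{h+1}^*(s')\,d\bm{\mu}_h(s')=\innerproduct{\phi(s,a)}{w_h},
\]
where
\[
w_h=\theta_h+\int V_{h+1}^*\,d\bm{\mu}_h .
\]
Since $0\le V_{h+1}^*\le H$ and $\max\sets{\norm{\bm{\mu}_h(\cS)},\norm{\theta_h}}\le\sqrt{d}$, the norm is at most $\norm{w_h}\le\sqrt{d}+H\sqrt{d}\le 2H\sqrt{d}$. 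This is the same computation as Lemma B.1 of \cite{jin2023provably}; if the total-variation form of the normalization is needed, I will read $\norm{\bm{\mu}_h(\cS)}$ that way. Hence
\[
\cV\subseteq\sets{V_w(\cdot)=\max_a\innerproduct{\phi(\cdot,a)}{w}:\ \norm{w}\le 2H\sqrt{d}},
\]
possibly truncated to $[0,H]$, which does not increase distances.

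Next comes the Lipschitz step. For any $w,w'$, the difference of maxima is at most the maximum of differences, so
\[
\sup_x\abs{V_w(x)-V_{w'}(x)}\le\sup_{s,a}\abs{\innerproduct{\phi(s,a)}{w-w'}}\le\norm{w-w'},
\]
using $\norm{\phi}\le 1$. Truncation to $[0,H]$ is $1$-Lipschitz, so the same bound holds after truncating. An $\epsilon$-net of the ball of radius $R=2H\sqrt{d}$ in $\RR^d$ therefore induces an $\epsilon$-cover of $\cV$ in the sup-distance. The standard volumetric bound gives such a net of size $(1+2R/\epsilon)^d$, and
\[
(1+2R/\epsilon)^d=(1+4H\sqrt{d}/\epsilon)^d .
\]
Taking logarithms yields $\log\cN_\epsilon\le d\log(1+4H\sqrt{d}/\epsilon)$.

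I expect the main obstacle to be the norm bound on $w_h$: it depends on how $\norm{\bm{\mu}_h(\cS)}$ is interpreted, namely whether $\norm{\int V\,d\bm{\mu}_h}\le H\norm{\bm{\mu}_h(\cS)}$ holds for signed measures. I would resolve this by adopting the convention of \cite{jin2023provably}, $\norm{\int v\,d\bm{\mu}_h}\le\sqrt{d}\,\sup|v|$, which gives exactly the radius $2H\sqrt{d}$ needed for the stated constant. A second subtlety is that the $\max$ over actions must be allowed to range over a possibly infinite action set; the Lipschitz argument handles this because it uses only a uniform bound over $a$.
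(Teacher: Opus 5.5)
Your proposal is correct and follows essentially the paper's route. The paper likewise uses Lemma B.1 of \cite{jin2023provably} to write $V_h^*(s)=\max_a\innerproduct{\phi(s,a)}{w_h^*}$ with $\norm{w_h^*}\le 2H\sqrt{d}$, bounds the sup-distance by $\norm{w-w'}$, and applies the $(1+2R/\epsilon)^d$ ball-covering bound (Lemma D.5 of \cite{jin2023provably}) with $R=2H\sqrt{d}$. Your version is slightly more careful than the paper's: you keep the absolute value in the max-of-differences step, and you state the normalization convention for the signed measures $\bm{\mu}_h$ on which the radius bound relies.
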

\begin{proof}
    Using Lemma B.1 of \cite{jin2023provably}, we have for any policy $\pi$, the Q-function of a linear MDP with $\pi$ can be written as $Q_h^{\pi}(s,a)=\innerproduct{\phi(s,a)}{w_h^\pi}$, with $\norm{w_h^\pi}\leq 2H\sqrt{d}$. And the value function can be written as: $V_h^*(s)=\max_{a}Q_h^*(s,a)=\max_a \innerproduct{\phi(s,a)}{w_h^*}$. Applying a $\epsilon$-covering argument over the Euclidean ball with radius $R=2H\sqrt{d}$ for the vector set $w_h^\pi$, we have:
    \begin{align*}
        \dist(V,V')=&\sup_s\abs{V(s)-V'(s)}\\
        \leq&\sup_{s,a}\innerproduct{\phi(s,a)}{w-w'}\\
        \leq& \norm{w-w'}\\
        \leq& \epsilon\,.
    \end{align*}
    Thus the $\epsilon$-covering number of $\cV$ can be bounded by the $\epsilon$-covering number of the set of vectors $w_h^\pi$, which can be bounded as $ \log \cN_{\epsilon}\leq d\log \bracket{1+4H\sqrt{d}/\epsilon}$ following Lemma D.5 of \cite{jin2023provably}.
\end{proof}
This lemma is used to bound the degree of the optimal function in the hard instance for imitation learning.
\begin{lemma}\label{lem:polynomial}
    Consider a $n$-degree polynomial defined on $\mbracket{\sin x,  \cos x}$:
    \begin{align*}
        f(x)=\sum_{u+v\leq n}g^{u,v}\sin^u x\cos^v x\,.
    \end{align*}
    Then it has at most $2n$ roots in $x\in[0,2\pi)$.
\end{lemma}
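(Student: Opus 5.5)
The plan is to reduce $f$ to an ordinary polynomial on the unit circle via the substitution $z=e^{ix}$, and then count roots with the fundamental theorem of algebra. We may assume $f\not\equiv 0$; otherwise the statement is vacuous, and this is the case used in the hard instance.

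\textbf{Step 1: rewrite in terms of $z$.} Using
\[
\sin x=\frac{z-z^{-1}}{2i},\qquad \cos x=\frac{z+z^{-1}}{2},
\]
each monomial $\sin^u x\cos^v x$ with $u+v\le n$ becomes a Laurent polynomial in $z$. Its exponents lie in $\{-(u+v),\dots,u+v\}\subseteq\{-n,\dots,n\}$. Hence
\[
f(x)=\sum_{k=-n}^{n}\beta_k z^k
\]
for some complex coefficients $\beta_k$. Equivalently, $f$ is a real trigonometric polynomial of degree at most $n$. This follows because products of sines and cosines reduce to sums of $\cos(kx)$ and $\sin(kx)$ with $|k|\le u+v$.

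\textbf{Step 2: pass to an ordinary polynomial.} Set
\[
P(z)=z^n\sum_{k=-n}^{n}\beta_k z^k,
\]
which is a polynomial of degree at most $2n$. Then $f(x)=z^{-n}P(z)$ with $z=e^{ix}\neq 0$. So $f(x)=0$ exactly when $P(e^{ix})=0$.

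The map $x\mapsto e^{ix}$ is injective on $[0,2\pi)$. Therefore distinct roots $x$ of $f$ give distinct roots $z$ of $P$ on the unit circle.

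\textbf{Step 3: count roots.} If $P\not\equiv 0$, then $P$ has at most $2n$ roots, so $f$ has at most $2n$ roots in $[0,2\pi)$.

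If $P\equiv 0$, then $f$ vanishes identically as a function of $x$. This is exactly the excluded degenerate case.

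\textbf{Main obstacle.} The only delicate point is the identification just made. Distinct coefficient tuples $g^{u,v}$ can define the same function, because $\sin^2 x+\cos^2 x=1$. So vanishing of all $\beta_k$ does not force all $g^{u,v}=0$. The lemma should therefore be read as applying to a nonzero function $f$, rather than to a nonzero coefficient vector.

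For the application to Theorem~\ref{thm:main linear imitation}, this is harmless. There $f=f_1-f_2$ must change sign at each $c_i$, so it cannot vanish identically. The bound of $2n$ roots then gives $2|\cA|\le 2n$, i.e.\ $n\ge|\cA|$.
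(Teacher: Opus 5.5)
Your argument is correct and is essentially the paper's proof: you substitute $z=e^{ix}$, observe that $z^n f$ is a polynomial in $z$ of degree at most $2n$, and bound the number of roots by the fundamental theorem of algebra. Your additional points fill in steps the paper's proof leaves implicit: the injectivity of $x\mapsto e^{ix}$ on $[0,2\pi)$, and the caveat that $f$ must not vanish identically as a function (e.g.\ $\sin^2 x+\cos^2 x-1$), which holds in the application because $f_1-f_2$ changes sign at each $c_i$.
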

\begin{proof}
    We can write $\sin x=\frac{z-z^{-1}}{2i}$ and $\cos x=\frac{z+z^{-1}}{2}$, where $z=\cos x+i\sin x$. Thus, $z^nf$
 becomes a $2n$-degree univariate polynomial in $z$. Since a  univariate polynomial with degree $2n$ can have at most $2n$ complex roots, we have at most $2n$ roots for $x\in[0,2\pi)$.
 \end{proof}

\newpage
\section{Implementation Details}\label{app:detail method}
\subsection{World Model}\label{app:detail world model}
Here we provide the architecture details of our pose-image conditioned world model. Following \cite{song2025history}, we add independent noise levels to the history frames with probability 0.5 and retain clear history frames with probability 0.5. For the future frames, we add uniform random noise and denoise them uniformly, without any auto-regressive schedule to speed up inference. We adopt multi-view prediction by concatenating 4 camera views into a grid, so that the model can infer the 3D positions of the robot. Specifically, for table top manipulators, we use wrist-view, front-view, side-view and top-view. For bi-manual robots, we use two wrist-views and the front-view and the side-view. For Robocasa, we utilize the 2 side-views, together with a wrist-view and top-view. Since the robot is mobile in Robocasa, we fix the top view camera, so we can infer the 2D movements from the top-view pose. We share the position embeddings of the frame tokens and pose image tokens, and only compute the flow matching loss of the frame tokens.
\subsubsection{Pose-Image Conditioning: Decoupling Physical Simulation and Neural Imagination}\label{app:pose image}

While modern video foundation models generate high-fidelity and physically plausible videos \cite{wiedemer2025video}, they often lack the dynamics necessary to predict the specific physical consequences of low-dimensional robot actions. These outcomes are inherently dependent on the robot's unique kinematics and control parameters. To address this, we map abstract action vectors into visual joint pose skeletons, providing a semantically rich action representation. We compute anticipated joint positions by executing forward dynamics of the robot model. In practice, physical simulators such as MuJoCo, PyBullet, or Pinocchio can be employed to achieve these computations efficiently, given standard URDF or MJCF configurations. Although the robot configurations and physical dynamics are privileged information not widely used in previous world model implementations, we point out that this is essential for the world model to understand robot actions and generalize to novel actions.

We acknowledge that these forward-simulated poses are inaccurate, since external reaction forces introduced during object interaction are not captured by robot forward dynamics. For example, as illustrated in Fig.~\ref{fig:app pose demo}, a physical object may prevent a gripper from fully closing in the real environment, but the computed pose image will depict the gripper as closed because the commanded action was \texttt{close} and the reaction force from the object is unknown a priori. Also, when the gripper is holding objects or colliding with fixtures, the object mass and collision force can not be simulated via forward dynamics of the robot, resulting in inaccurate joint positions, especially for long action sequences. Crucially, these dynamics are not intended to be exact physical predictions. Instead, they serve as a mapping between actions and visual prompts such as pose-images. This pose-image based visual conditioning provides a generalizable, semantically grounded action representation that supplements the generative capabilities of video foundation models.
\begin{figure}
    \centering
    \includegraphics[width=\linewidth]{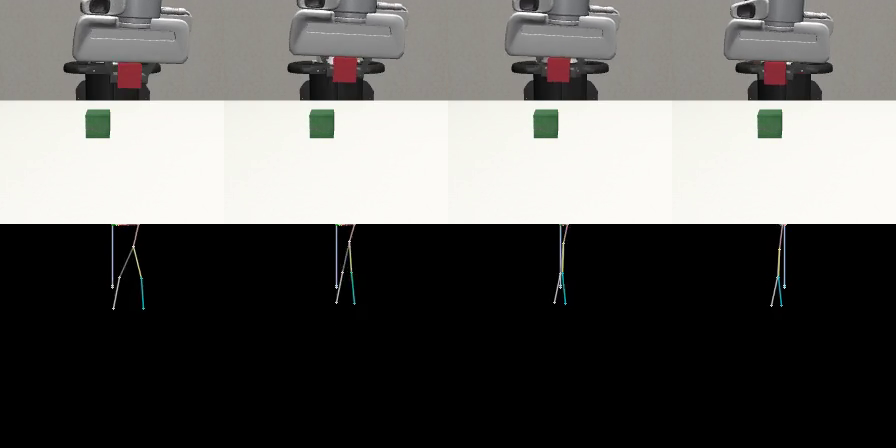}
    \caption{\textbf{Illustration of inaccurate pose image.} In the image, the first row is the actual environment rollout and the second row is the computed pose image using forward dynamics. Because the gripper action is \texttt{close} for every step, in the computed robot joint pose, the grippers jaws are closed immediately. In actual execution, due to the presence of the cube between the jaws, the opening between the jaws remain unchanged.}
    \label{fig:app pose demo}
\end{figure}

\subsection{World Action Planner}\label{app:planner detail}
To efficiently search for plausible actions within our world model, we adopt a global-local search strategy inspired by the framework in \cite{zhang2025inference}. This approach employs a coarse-to-fine hierarchy: we utilize a larger scale for global action optimization guided by agent feedback, followed by a more fine-grained grid size for local search. The prompt templates for agent action proposal, feedback-driven optimization, and candidate ranking are provided in Figs~\ref{fig:prompt proposal},~\ref{fig:optimization prompt} and \ref{fig:ranking prompt}, respectively. During the global action optimization phase, we prompt the agent to provide directional modifications rather than precise coordinates, with the modification scale of $0.06$. For \texttt{GridSearch}, we search the planar coordinates of the target gripper position across the set $\sets{(x,y),(x\pm\delta,y),(x,y\pm\delta)}$, where the scale $\delta$ is set to $0.02$ for \texttt{StackCube} and $0.05$ for all other tasks. We set the \textit{search scale (grid size) to be approximately half the object size}, so the candidate positions represent distinct manipulation poses around the object. When we need to modify the orientation, we search the target yaw angle within the set $\sets{\psi,\psi+45^{\circ}, \psi-45^{\circ}}$. We emphasize that we search within the low-dimensional goal space rather than the space of individual actions, allowing for a significantly more compact search. The robot controller is implemented as a Diffusion Policy with a UNet backbone \cite{chi2025diffusion}. It accepts the current and target 7-dimensional end-effector poses as inputs to generate the action chunks required to reach the target end-effector state. For our experiments, we trained two policy variants with action chunk lengths of 40 and 20, respectively. We train the robot controller on noisy trajectories.

\newpage
\section{Experiment Details}\label{app:exp detail}
In this section, we provide detailed setup and more qualitative results for our experiments and ablations.

\subsection{World Model}\label{app:wm exp details}
Here we provide the details of the action-conditioned world model experiments in Section~\ref{sec:video gen exp}.

\subsubsection{Setup}\label{app:wm exp setup} 
To simulate the action sampling typical of the widely used Model Predictive Path Integral (MPPI) algorithm and to improve model generalization across novel exploratory actions, we augment expert demonstrations with Gaussian noise prior to trajectory replay. Specifically, we generate 100 trajectories per task: 40 from clean demonstrations, 40 with Gaussian noise added to the actions ($\sigma=0.08$), and 20 with higher-intensity noise ($\sigma=0.16$). We observe that model-based exploration primarily necessitates spatial reasoning regarding the end-effector's location; thus, for the high-variance noise ($\sigma=0.16$), we limit perturbations to the first three dimensions of the action vector (corresponding to target gripper positions) while maintaining fixed orientations. Incorporating such varied trajectories, including potential failures, is a standard practice in the literature \cite{guo2025ctrl,jain2025smooth} for improving world model robustness. The model is trained with a global batch size of 64 on 16 H100 GPUs. For the cross-embodiment baselines, we adopt cross-attention as our base action-conditioning mechanism due to its superior performance (Table~\ref{tab:single-robot}), and then apply the cross-embodiment-unification methods. Since our conditioning mechanism does not require any text input, we use a dummy text embedding for the \texttt{Wan}-T2V model for all inputs.

\begin{wraptable}{r}{0.5\linewidth}
    \centering
    \resizebox{\linewidth}{!}{\begin{tabular}{c|cc}
    \toprule
        \multirow{2}{*}{\diagbox{History FPS}{Dataset}} & \multicolumn{2}{c}{LIBERO-Long}   \\
          & wrist-view & third-view \\ 
          \midrule
          20 FPS (dense) & 15.68 / 0.326 & 22.04 / 0.094  \\
         7 FPS (sparse) & \textbf{15.98} / \textbf{0.320} & \textbf{22.14} / \textbf{0.093} \\
         \bottomrule
    \end{tabular}}
    \caption{Ablation of sparse history frames}
    \label{tab:app sparse history}
\end{wraptable}
\paragraph{Sparse History} In our implementation, we utilize a sampling rate of 7 FPS for history frames, in contrast to the 20 FPS used for prediction frames and environment control. This sparse history allows for a longer temporal window without increasing the total number of history frames. Additionally, utilizing sparse frames helps mitigate spurious correlations rooted in local dynamics between continuous frames, forcing the model to learn the global environment configuration rather than relying on short-term motion patterns. As shown in Table~\ref{tab:app sparse history}, this approach improves performance when generalizing to novel action sequences in LIBERO-Long.

\subsubsection{Qualitative Results}\label{app:world model qualitative results}
We first evaluate the generalization capabilities of our pose-image conditioned world model by demonstrating its ability to predict the physical consequences of novel actions that lead to failures not present in the training set. As illustrated in Fig.~\ref{fig:app world model demo}, the world model successfully imagines the physical outcomes of actions that knock a mug over in various directions, such as backward or to the right. Notably, while Gaussian noise was incorporated during training, the noise levels were minimal and insufficient to produce such collisions; therefore, these "knocking over" trajectories are strictly out-of-distribution, highlighting our model's capacity for zero-shot physical simulation and generalization.
\begin{figure}[htb]
\begin{subfigure}{\linewidth}
    \centering
    \includegraphics[width=\linewidth]{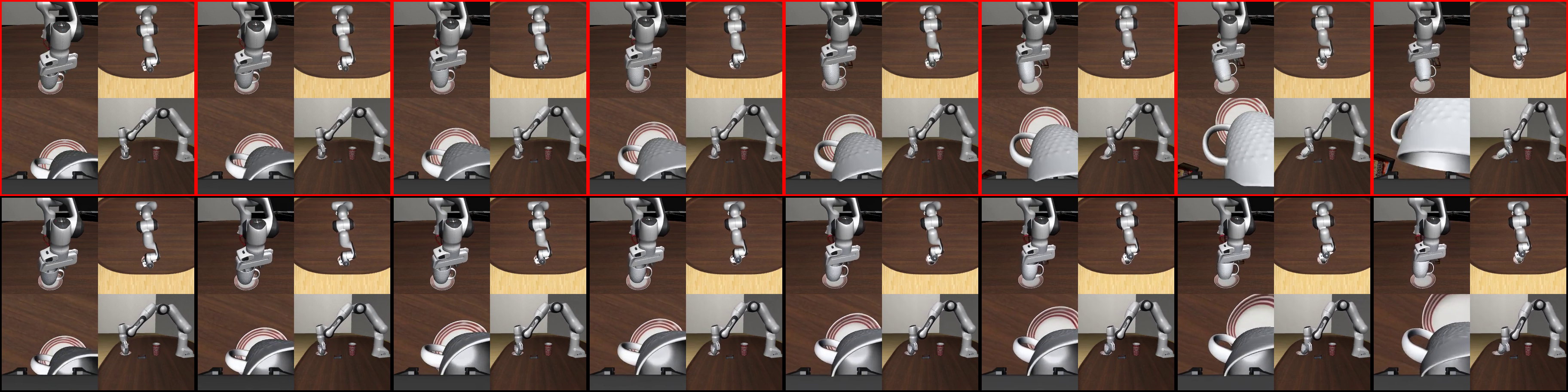}
    \caption{Qualitative results for knocking the mug over backwards. }
\end{subfigure}
    \begin{subfigure}{\linewidth}
        \centering
        \includegraphics[width=\linewidth]{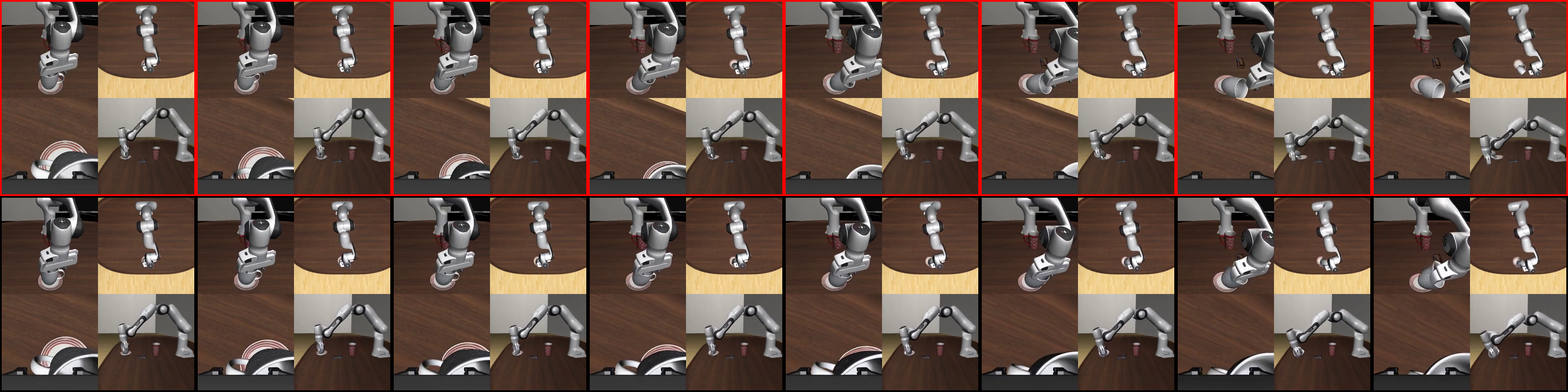}
        \caption{Qualitative results for knocking the mug over to the right.}
    \end{subfigure}
    \caption{Qualitative results for OOD actions that knocks the mug over, with the first row being ground truth frames with red borders, and the second row being generated imagination frames. The videos are downsampled 5x from 40 frames (2 seconds) to 8 frames.}
    \label{fig:app world model demo}
\end{figure}
We further demonstrate the cross-embodiment modeling capabilities of our framework in Fig.~\ref{fig:app world model cross robot}. Our pose-image conditioned model successfully captures the dynamics across diverse robotic platforms, effectively generalizing from high-degree-of-freedom humanoid hands to mobile manipulators with side view by leveraging the embodiment-agnostic nature of the pose-image representation.
\begin{figure}[htb]
\begin{subfigure}{\linewidth}
    \centering
    \includegraphics[width=\linewidth]{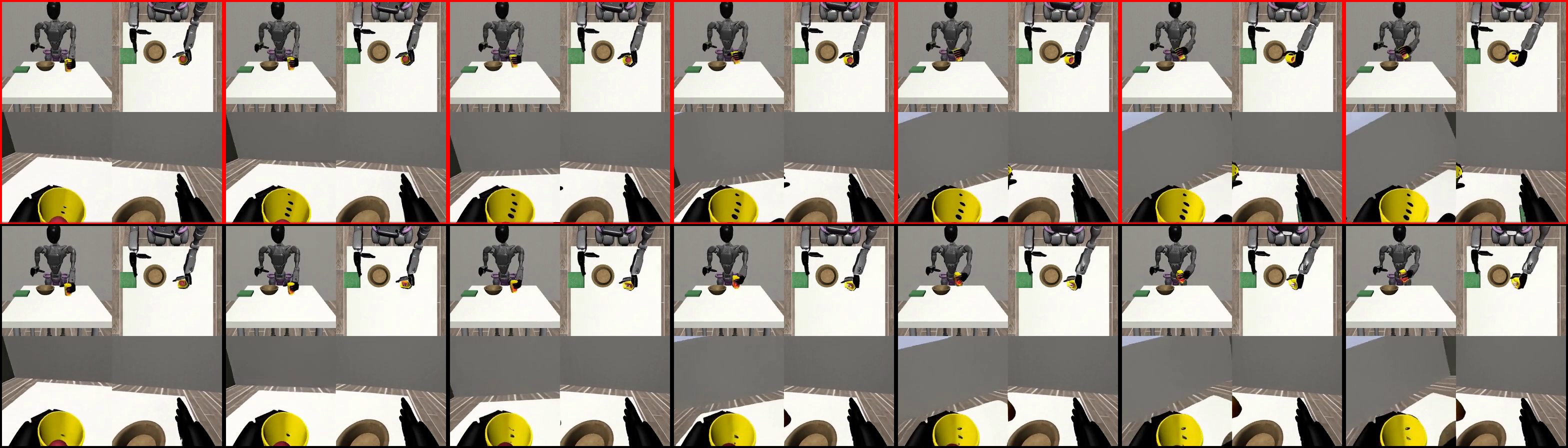}
    \caption{Qualitative results for dexterous humanoid in DexMimicGen dataset. }
\end{subfigure}
    \begin{subfigure}{\linewidth}
        \centering
        \includegraphics[width=\linewidth]{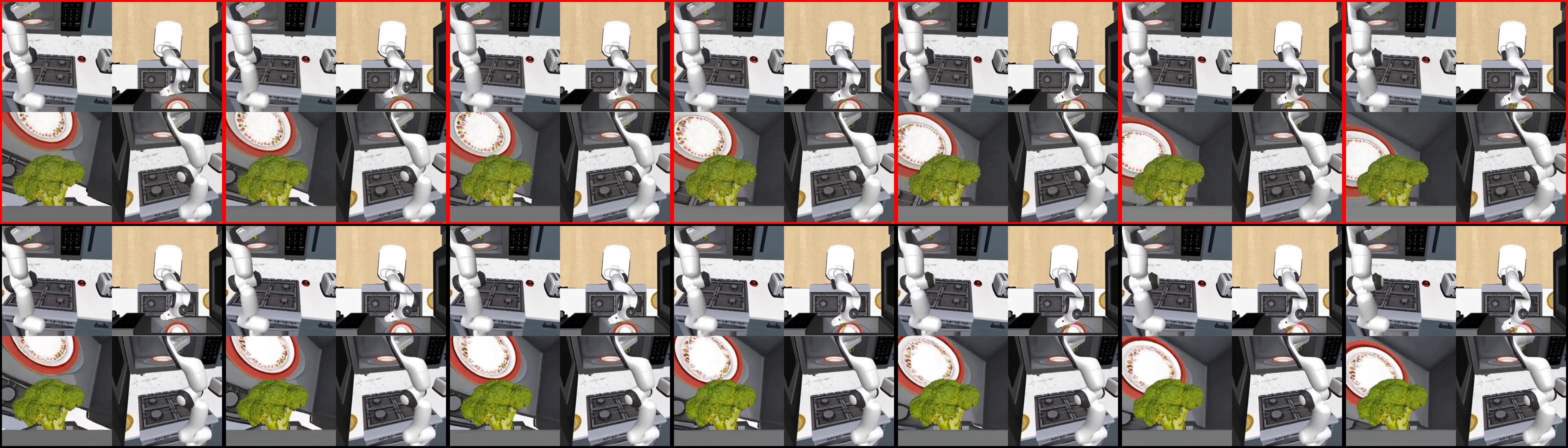}
        \caption{Qualitative results for the Panda Omron robot in the Robocasa dataset.}
    \end{subfigure}
    \caption{Qualitative results cross-embodiment modeling, with the first row being ground truth frames with red borders, and the second row being generated imagination frames. The videos are downsampled 3x from 20 frames (1 second) to 7 frames.}
    \label{fig:app world model cross robot}
\end{figure}

\subsection{World Action Planner}\label{app:planner exp details}
Here we provide the details of the world action planner experiments in Section~\ref{sec:planning exp}.
\subsubsection{Compositional Task Generalization}\label{app: libero long details}

\paragraph{World Action Planner Setup} We train our diffusion policy, the $\pi_{0.5}$ baseline, and the cosmos-policy using an action chunk length of 40 (equivalent to a 2-second horizon) following \cite{kim2026cosmos}, maintaining a 10-step execution horizon for each chunk before re-planning the subsequent sequence. Following \cite{chi2025diffusion}, we implement our diffusion policy using a transformer architecture, with the specific hyper-parameters detailed in Table~\ref{tab:app policy detail}.
\begin{table}[htb]
    \centering
    \begin{tabular}{cc}
    \toprule
       Hyper-parameter  &  Value\\
       \midrule
       Layers  &  12 \\
       Heads & 8 \\
       Dimension & 512 \\
       Observation length & 1 \\
       Prediction length & 40 \\
       Execution action steps & 10 \\
       Input images & wrist view and agent view \\
       Image resolution & 128x128\\
       Crop size & 116x116 \\
       Batch size & 64 \\
       \bottomrule
    \end{tabular}
    \caption{Details of our diffusion policy}
    \label{tab:app policy detail}
\end{table}
The policy is trained on the LIBERO-90 dataset using 30 expert demonstrations per task. Following \cite{kim2024openvla}, we condition the policy on task instructions via DistilBERT \cite{sanh2019distilbert} embeddings. Because our model is trained from scratch with these low-dimensional embeddings, it lacks the inherent zero-shot generalization to novel instructions found in VLAs that utilize pre-trained VLM backbones. Consequently, our World Action Planner leverages the VLM agent to decompose compositional task instructions into a sequence of atomic sub-tasks that the diffusion policy can execute. The world model is trained for 20k steps with a global batch size of 64, using 100 trajectories per task from the LIBERO-90 suite exclusively, as per the protocol in Section~\ref{sec:video gen exp}.

\paragraph{Baseline Details} The baselines follow the same configuration as our diffusion policy, utilizing an action chunk length of 40 and a 10-step execution horizon. These models are trained on the LIBERO-90 dataset, with 40 to 50 demonstrations per task after filtering out unsuccessful demonstrations and no-op actions following \cite{kim2024openvla}. 

We train the $\pi_{0.5}$ baseline for 40k steps using the official LIBERO JAX implementation, achieving an average in-domain success rate of 95.8\% on the LIBERO-90 tasks. Evaluation on the LIBERO-Long suite indicates that while $\pi_{0.5}$ achieves high success rates on non-compositional tasks which only involves environmental variations, it struggles with compositional generalization, particularly in scenarios requiring active navigation toward secondary targets. Detailed performance metrics are summarized in Table~\ref{tab:pi libero10 data}. 
\begin{table}[htb]
    \centering
    \resizebox{\linewidth}{!}{\begin{tabular}{lc}
    \toprule
        Task  &  Success Rate \\
        \midrule
        LIVING ROOM SCENE2 put both the alphabet soup and the tomato sauce in the basket & 4 \\
        LIVING ROOM SCENE2 put both the cream cheese box and the butter in the basket & 0 \\
        KITCHEN SCENE3 turn on the stove and put the moka pot on it & 6 \\
        KITCHEN SCENE4 put the black bowl in the bottom drawer of the cabinet and close it & 18 \\
        LIVING ROOM SCENE5 put the white mug on the left plate and put the yellow and white mug on the right plate & 0 \\
        STUDY SCENE1 pick up the book and place it in the back compartment of the caddy & 90 \\
        LIVING ROOM SCENE6 put the white mug on the plate and put the chocolate pudding to the right of the plate & 0 \\
        LIVING ROOM SCENE1 put both the alphabet soup and the cream cheese box in the basket & 0 \\
        KITCHEN SCENE8 put both moka pots on the stove & 0 \\
        KITCHEN SCENE6 put the yellow and white mug in the microwave and close it & 0 \\
        \bottomrule
    \end{tabular}}
    \caption{Performance of $\pi_{0.5}$ trained on LIBERO-  90 and evaluated on LIBERO-Long}
    \label{tab:pi libero10 data}
\end{table}
From the results we can see that successes typically occurred when test tasks closely mirrored the training distribution. For instance, the task "STUDY SCENE1: pick up the book and place it in the back compartment of the caddy" in LIBERO-Long has a direct counterpart in LIBERO-90 ("STUDY SCENE2"), with only minor environmental variations. In more complex scenarios, the gripper stalls after completing the first sub-task, while rarely drifting towards the second target object, resulting in near-zero success rates. These results reinforce our observation that imitation learning policies rely heavily on memorized motion patterns and lack the test-time reasoning necessary to synthesize novel trajectories for adaptation. For the cosmos-policy baseline, constrained by computational resources, we trained the model following the official setup for 30k steps, achieving an action L1 loss of 0.025. This configuration yields an average in-domain success rate of 93\% on LIBERO-90 tasks. We found cosmos-policy has zero success rates on all compositional tasks in LIBERO-Long, except the "STUDY SCENE1: pick up the book and place it in the back compartment of the caddy" task which has a direct counterpart in LIBERO-90.

When evaluating policy enhancement methods such as SAILOR and GPC-RANK \cite{jain2025smooth,qi2025strengthening}—which utilize policy-generated actions rather than our VLM agent proposals—we observe near-zero success rates. This occurs because the actions generated by the baseline policy have near zero magnitude in our compositional task generalization after finishing the first sub-task. While the Gaussian noise in SAILOR's MPPI exploration \cite{jain2025smooth} occasionally drifts the gripper toward the next object, it rarely results in successful task completion. These results highlight the importance of agent intervention in our system.

\subsubsection{New Layout Generalization}\label{app:details libero object}
\paragraph{Layout Setup} We provide the images of the original layout and our modified layout below in Fig.~\ref{fig:app libero object layout}.
\begin{figure}[htb]
    \centering
    \includegraphics[width=\linewidth]{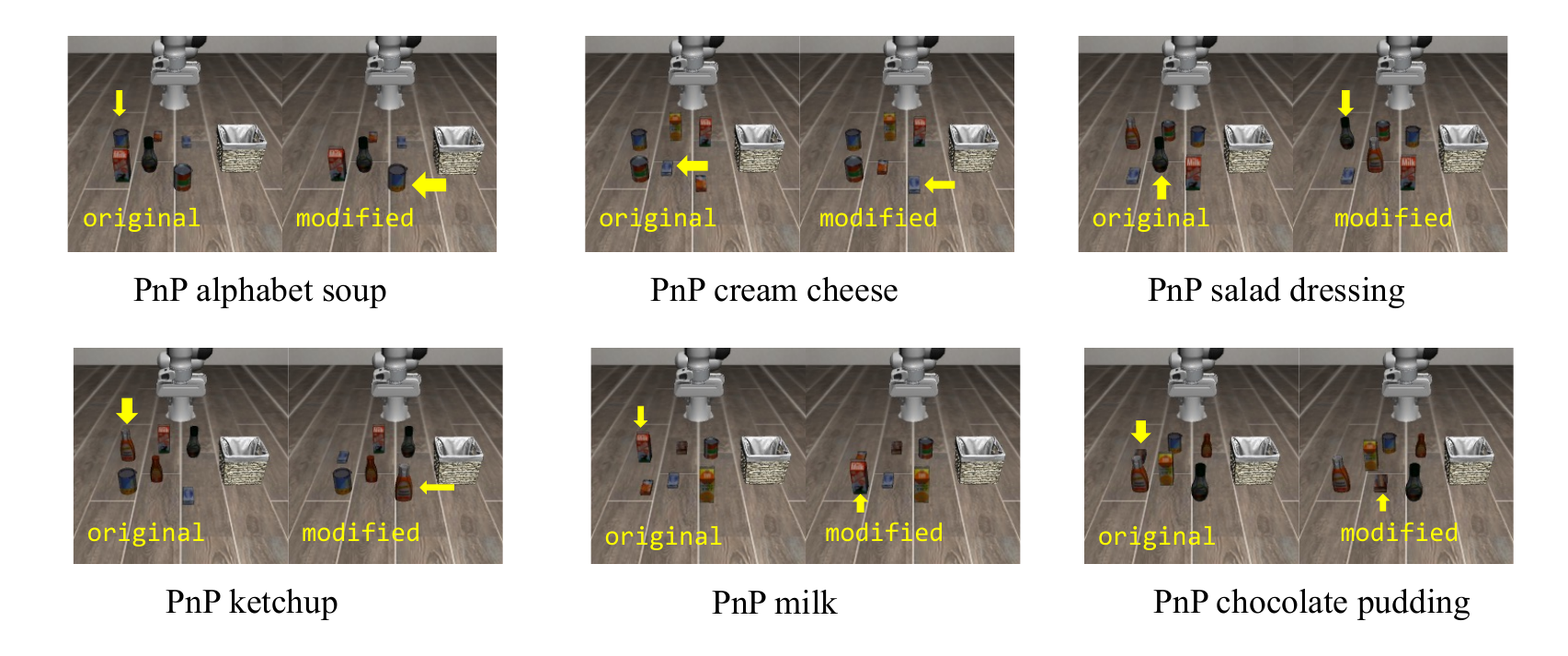}
    \caption{Original object layout and our modified object layout in LIBERO-Object. Yellow arrow points to the target object to be grasped into the basket.}
    \label{fig:app libero object layout}
\end{figure}

\paragraph{World Action Planner Setup} For our method, we fine-tune the diffusion policy and world model (pre-trained on LIBERO-90 as in the previous compositional task generalization experiments) using the official LIBERO-Object dataset under its original layout. The diffusion policy is fine-tuned with only 5 demonstrations per task. To enhance the robustness of the world model, we augment the fine-tuning set with 10 additional trajectories generated by adding Gaussian noise (with std of 0.08 and 0.16) to the expert actions.

\paragraph{Baseline Setup} We directly evaluate the official LIBERO checkpoints for the $\pi_{0.5}$ and cosmos-policy baselines. We observe that $\pi_{0.5}$ achieves non-zero success rates on the ``PnP Chocolate Pudding'' task because the target object is not sufficiently displaced from its original training location; this allows the policy to occasionally capture the pudding during subsequent re-grasp attempts, as shown in Fig.~\ref{fig:app libero object pi suc}. In most other cases, however, the policy fails by erroneously targeting distractor objects near the original coordinates, as illustrated in Fig~\ref{fig: app libero object pi fail}. For other tasks where the target object is displaced far away, the $\pi_{0.5}$ model exhibits zero success rates, with similar patterns of grasping distractor objects placed near the original coordinates.
\begin{figure}
    \centering
    \begin{subfigure}{\linewidth}
    \centering
    \includegraphics[width=\linewidth]{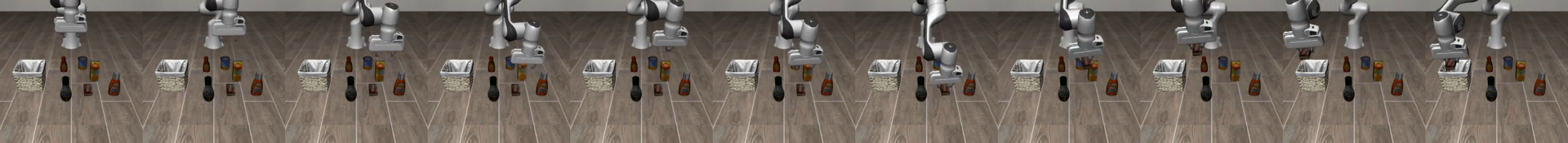}
    \caption{Low probability success example of $\pi_{0.5}$ on the modified layout of PnP chocolate pudding.}
    \label{fig:app libero object pi suc}
    \end{subfigure}\\
    
    \begin{subfigure}{\linewidth}
        \includegraphics[width=\linewidth]{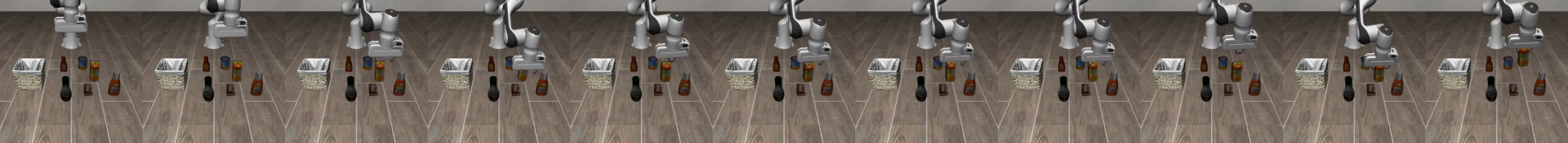}
        \caption{Failure example of $\pi_{0.5}$ on the modified layout of PnP chocolate pudding.}
        \label{fig: app libero object pi fail}
    \end{subfigure}
    \caption{Qualitative evaluation of $\pi_{0.5}$ on the modified layout of PnP chocolate pudding, down-sampled from 20 FPS to 1 FPS for clarity. The VLA exhibits low success rates, as the gripper rarely drifts toward the correct target and instead frequently grasps distractor objects placed at the original position during re-grasp attempts.}\label{fig:app libero object pi}
\end{figure}

The cosmos-policy baseline performs worse than $\pi_{0.5}$, yielding a zero success rate across all tasks. Notably, the WAM consistently repeats grasp attempts at the original training coordinates, failing to exhibit the local exploration observed in $\pi_{0.5}$. Qualitative results of the environment rollout and the predicted future states from the WAM are illustrated in Fig.~\ref{fig:app cosmos libero object}. While the predicted images are visually realistic, they do not facilitate successful task completion. These results suggest that because the model is trained primarily to mimic demonstration trajectories and predict action outcomes based on pre-trained video generation, it lacks the necessary reasoning and planning capabilities to generalize to novel tasks or environmental layouts.
\begin{figure}
    \centering
    \includegraphics[width=\linewidth]{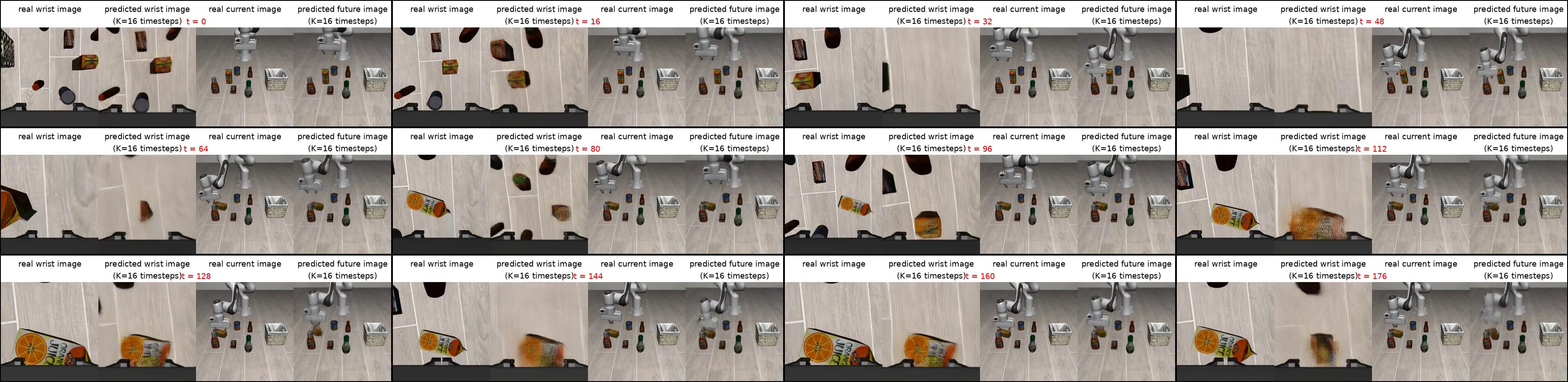}
    \caption{Qualitative evaluation of the cosmos-policy on the modified layout of the PnP chocolate pudding task, illustrating ground-truth environment rollouts alongside predicted future states from the WAM. Despite the modified layout, the model continues to target the original target object coordinates, resulting in repetitive and unsuccessful re-grasp cycles at the original position of the target object.}
    \label{fig:app cosmos libero object}
\end{figure}

For policy enhancement methods such as SAILOR and GPC-RANK \cite{jain2025smooth,qi2025strengthening}, we also observe near-zero success rates because the actions generated by the policy often target incorrect positions. These results underscore the importance of independent reasoning and planning when encountering novel scenarios, rather than relying exclusively on the demonstrations within the imitation learning dataset.

\subsubsection{Zero-shot generalization}\label{app:robosuite details}
In the zero-shot setting, we hard code the \texttt{GRASP} and \texttt{RELEASE} action sequence as below, where the actions are represented as end-effector operational space velocity control:
\begin{itemize}
    \item \texttt{GRASP}: $[0,0,-0.5,0,0,0,-1] \times 10 + [0,0,0,0,0,0,1] \times 10 + [0,0,1,0,0,0,1]\times 10$
    \item \texttt{RELEASE}: $[0,0,0,0,0,0,-1]\times 10 + [0,0,1,0,0,0,-1]\times 10$ 
\end{itemize}
To fine-tune our world model, we utilize the LIBERO-90 model as a backbone and collect 50 exploratory trajectories based on actions proposed by the VLM. The world model is then fine-tuned for 10 epochs. We point out that although successful trajectories exist among the exploratory trajectories, we only use them to finetune the world model, while prior methods \cite{jain2025smooth,qi2025strengthening} require additional expert demonstrations beside the in-distribution sampled trajectories for success. 
% For the robot controller, we leverage the LIBERO-90 checkpoint to collect initial trajectories and subsequently bootstrap the model by fine-tuning on these rollouts. As illustrated in Fig~\ref{fig:app stack grasp demo}, the resulting trajectories are non-smooth and exhibit "zig-zag" behavior; we hypothesize that this artifact arises from the lack of high-quality expert demonstrations within the fine-tuning set, which hinders the model's ability to maintain smooth control transitions.
% \begin{figure}
%     \centering
%     \includegraphics[width=\linewidth]{appendix/figures/fig7.png}
%     \caption{Qualitative result for zero-shot grasping using the robot controller, which demonstrates zig zag movements when approaching and aligning the red cube.}
%     \label{fig:app stack grasp demo}
% \end{figure}

\subsubsection{Ablation Experiments}\label{app:ablation exp}
Here we provide the results for the ablation experiments in Section~\ref{sec:ablation}.

\paragraph{Ablation of individual components} We first conduct an ablation study to evaluate the individual contributions of each module within the World Action Planner. The results, presented in Table~\ref{tab:ablation planner}, demonstrate that each component is essential for maximizing overall performance. For easy tasks, object clearance via global action optimization is sufficient. 
\begin{table}[htb]
    \centering
    \resizebox{\linewidth}{!}{\begin{tabular}{l|cccc}
    \toprule
        \diagbox{Method}{Task} & \makecell{PnP alphabet soup \\ \& tomato sauce} & \makecell{PnP white mug \\ \& yellow and white mug} & \makecell{PnP white mug \\ \& chocolate pudding} & \makecell{PnP alphabet soup \\ \& cream cheese box} \\
       \midrule
       ~~Vision-language planner &  56 &   28 & 46 & 32 \\
       + Global action optimization & 64 & 40 & 70 & 48 \\
       + Local action search & 64 & 54 & 72 & 66 \\
       {+ Policy rollout imagination} &  \textbf{72} & \textbf{68} & \textbf{78} & \textbf{70} \\
       \bottomrule
    \end{tabular}}
    \caption{\textbf{Results for ablations of our action refinement pipeline.} We ablate different components in our world action planner using the LIBERO-Long tasks, where each component contributes to higher success rates.}
    \label{tab:ablation planner}
\end{table}
Notably, for the PnP alphabet soup and cream cheese task, when grasping the cream cheese box after placing the alphabet soup, the gripper is rotated, requiring the agent to find the right orientation for the grasp. While the VLM struggles to describe the required rotation analytically, it can effectively identify the optimal orientation when evaluating imaginations of candidate poses, as illustrated in Fig.~\ref{fig:app ablation rotation.}.
\begin{figure}[htb]
    \centering
    \includegraphics[width=\linewidth]{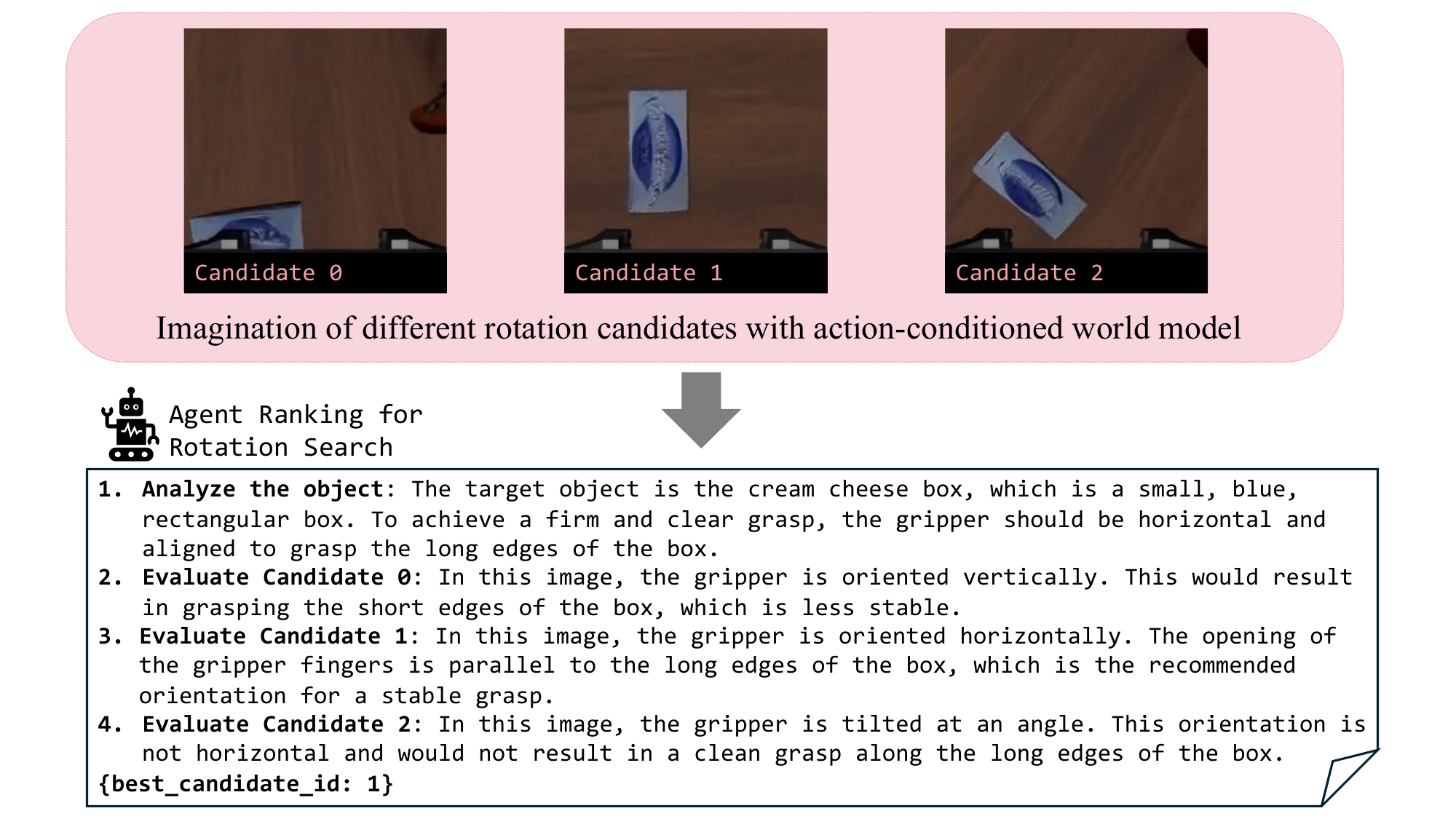}
    \caption{Illustration of local search for gripper rotation. The VLM successfully identifies the right rotation from imagination of candidates, while failing to describe the correct rotation analytically. }
    \label{fig:app ablation rotation.}
\end{figure}
For the fine-grained PnP white mug and yellow-and-white mug task, transitioning from the first mug to the next presents a specific challenge: it is often unclear which state serves as a valid "in-distribution" starting point for the policy to successfully trigger a grasp and complete the second sub-task. To resolve this ambiguity, we imagine consecutive policy rollouts following various action candidates to determine which state allows the policy to effectively continue the task. As illustrated in Fig.~\ref{fig:app ablation policy imagination}, we can identify the optimal state for completing the grasp by imagining the consecutive policy rollout (shown in row 2), whereas selecting the correct candidate remains difficult without the benefit of further imagination (as in row 1).
\begin{figure}[htb]
    \centering
    \includegraphics[width=\linewidth]{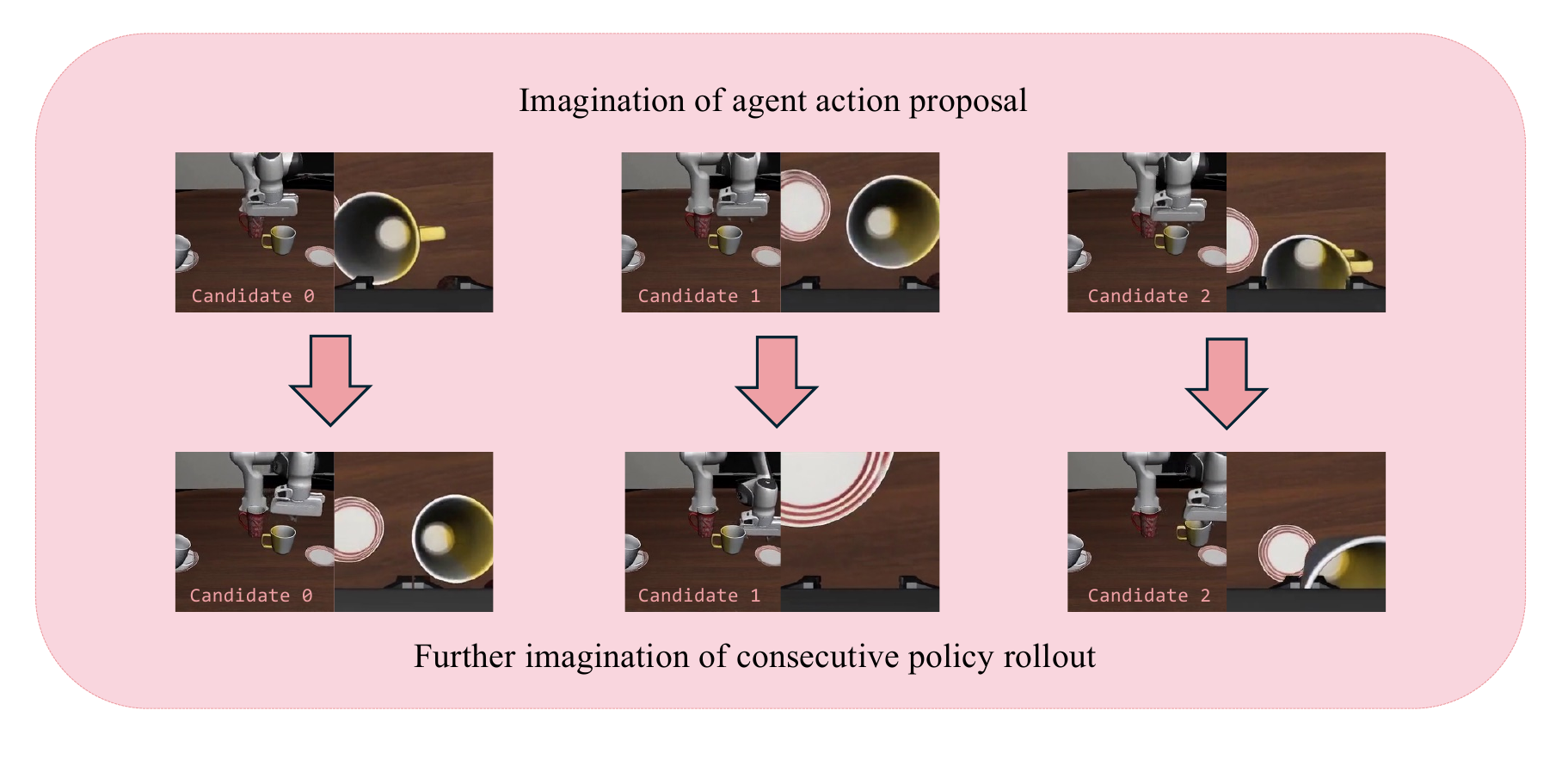}
    \caption{Illustration imagining consecutive policy rollout. While in the first row, it is hard to identify which one is the good state to continue the second sub-task by grasping the mug, we can easily see that for Candidate 2, the policy successfully grasped the mug and continues the second sub-task from the consecutive policy rollout imagination in the second row. }
    \label{fig:app ablation policy imagination}
\end{figure}

\paragraph{Ablation against Best-of-N Sampling} Next, we ablate the efficiency of our optimization and search pipeline against naive best-of-N sampling from the VLM. To provide a rigorous upper bound, for best-of-N sampling we use ground-truth rewards by executing the actions in the environment and observe the environment reward, while we restrict to world model imaginations and VLM agent evaluations for our method. The results are shown in Table~\ref{tab: bon}.
\begin{table}[htb]
\centering
{\begin{tabular}{c|ccccc}
    \toprule
      Method &  BoN-1  & BoN-2 & BoN-4 & BoN-8 & Global optimization \\
        \midrule
       Success \%  & 16 & 24 & 36 & 42 & 60  \\
       Imaginations \# & 0 & 2 & 4 & 8 & 1  \\
       \bottomrule
    \end{tabular}}
    \caption{\textbf{BoN Sampling vs. Our Planning Pipeline.} BoN-$i$ denotes best-of-N with $i$ samples. In the PnP ketchup task, the VLM fails to account for physical constraints across multiple samples, resulting in collision and unsafe execution; while our global action optimization effectively identifies such risks and corrects the trajectory for a safe and successful completion.}
    \label{tab: bon}
\end{table}
We observe that VLMs frequently exhibit systematic physical oversights; e.g., in the LIBERO-Object "PnP ketchup" task, the VLM often fails to lift the object enough to clear the basket rim, a failure that persists across multiple samples. In contrast, as illustrated in Fig.~\ref{fig:ablation}, our approach identifies these physical violations and rectifies the trajectory through global optimization with agent feedback, requiring only much fewer imaginations.
\begin{figure}[htb]
    \centering
    \includegraphics[width=\linewidth]{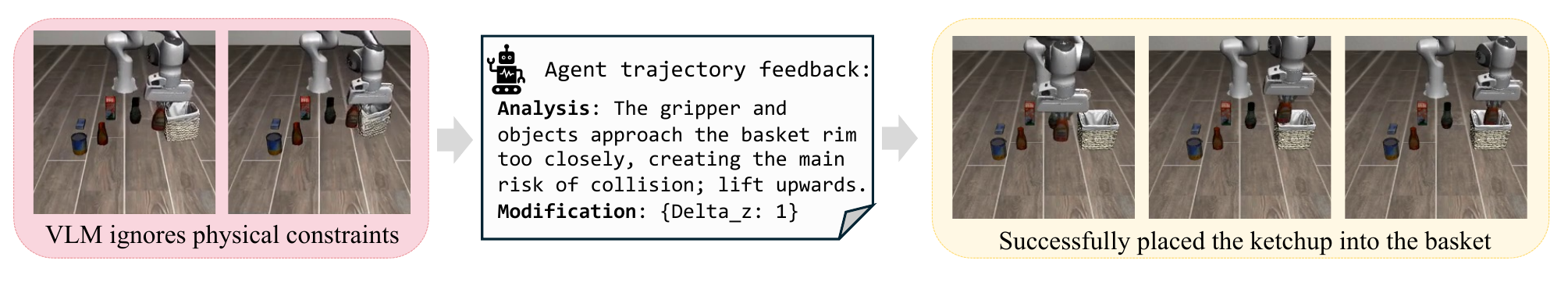}
    \caption{\textbf{Global Action Optimization.} While VLMs often ignore physical constraints, our framework leverages world model imaginations to identify potential collisions and optimize action sequences for safe execution.}
    \label{fig:ablation}
    \vspace{-5pt}
\end{figure}

We also compare our local search with \texttt{GridSearch} exclusively against the BoN baseline in the \texttt{StackCube} task, where global optimization for physical clearance is not essential and pure local search alone is sufficient to complete the task successfully, although occurring larger computation cost with more imaginations. As shown in Table~\ref{tab: grid search}, even with similar sample-and-select logic, our \texttt{GridSearch} outperforms BoN sampling by actively exploring the neighborhood of the actions, while VLMs can make repetitive errors across multiple samples in certain states.
\begin{table}[htb]
\centering
{\begin{tabular}{c|cccccc}
    \toprule
      Method &  BoN-1  & BoN-2 & BoN-4 & BoN-8  & BoN-10 & Local search \\
        \midrule
       Success \%  & 22 & 28 & 32 & 50 & 62 & 70 \\
       Imaginations \# & 0 & 2 & 4 & 8 & 10 & 6  \\
       \bottomrule
    \end{tabular}}
    \caption{\textbf{BoN Sampling vs. Our Local Grid Search.} BoN-$i$ denotes best-of-N with $i$ samples. In the \texttt{StackCube} task, our \texttt{GridSearch} outperforms BoN sampling via actively searching the neighborhood of the actions, while VLMs can make repetitive errors in certain states.}
    \label{tab: grid search}
\end{table}

\section{Wall Clock Time Analysis}\label{app:clocl}
Here we report the wall clock time of our world model. When training on a H100 GPU with batch size 2, one single training step takes approximately 4.8 seconds. When running inference on a A100 GPU with batch size 1, one single forward takes 0.85 seconds, so predicting 20 frames requires 20 denoising steps and takes around 17 seconds. As a result, the global optimization will take around 30 seconds including world model imagination and VLM reasoning, and the local search can take 2 to 3 minutes. We note that the computation cost of local search can be significantly decreased by adopting an auto-regressive world model and using KV cache for the shared history frames among all the candidates, which we leave as future work.

We emphasize that while generating imagined rollouts with the world model is computationally intensive, the system does not require continuous re-planning at every control step. In our experiments, we only invoke the planning system at critical decision points: once during the transition between sub-tasks in compositional settings, and twice, specifically at the onset of the grasping and placement phases, for other evaluation tasks. Furthermore, the overall planning latency is significantly influenced by the VLM agent's reasoning time. This process is analogous to ``System 2'' reasoning in LLM agents, where a deliberate thinking phase and chain-of-thought reasoning is required to solve difficult problems \cite{jaech2024openai,li2024chain}.

\newpage
\begin{figure*}[htb]
\centering
\begin{tcolorbox}[
    colback=white, 
    colframe=black!20, 
    width=\textwidth, 
    arc=2pt, % Small rounded corners
    outer arc=2pt,
    left=10pt, right=10pt, top=10pt, bottom=10pt
]

You are a helpful assistant for controlling a robot arm in a simulated environment. You will give advice on how to complete the task based on the image inputs.

\paragraph{Task instruction} Here is the task instruction: \verb|{detailed_task_instruction}|

\paragraph{Observation} Here is the starting state of the current episode \verb|<images>| and here is the current state images \verb|<images>|. The execution history so far is \verb|<images>|.

\paragraph{Action Proposal}
    Now, please propose the next actions for the robot to complete the task. You should complete the task following the order in the instruction. 
    The available atomic actions are:
    \begin{itemize}
        \item \textbf{MOVE} Here you move the gripper to a target position, and you should point it out in the multiview current state images. 
    The position should be represented by x,y pixel coordinates normalized to 0-1000. 
    To grasp or operate an object, move towards it. If you need to place an object, point to the target gripper position so the object can be placed following the instruction.
    Examples:
    \begin{verbatim}
{"action": "MOVE",
"parameters": {"frontview": {"x": 500, "y": 300},
"topview": {"x": 450, "y": 350},
"sideview": {"x": 480, "y": 320}}}
    \end{verbatim}
    \item \textbf{ROTATION} Here you rotate the gripper, and you return the rotation in Euler angles \verb|[delta_roll, delta_pitch, delta_yaw]| in degrees.
    The coordinate system and axis are defined as \verb|coordinate_system_description|.  Example:
    \begin{verbatim}
    {"action": "ROTATE","parameters": {"delta_roll": 0,
    "delta_pitch": 15, "delta_yaw": 0}}
    \end{verbatim}
    \item \textbf{RELEASE} Here you release the object by opening the gripper.
    
    Example: \verb|{"action": "RELEASE","parameters": {}}|
    \item \textbf{GRASP} Here you grasp the object by closing the gripper. 
    
    Example: \verb|{"action": "GRASP","parameters": {}}|
    \end{itemize}
    
    Finally, return a list of actions in json format.
\end{tcolorbox}
\caption{Example prompt template for agent action proposal.}
\label{fig:prompt proposal}
\end{figure*}

\newpage
\begin{figure*}[htb]
\centering
\begin{tcolorbox}[
    colback=white, 
    colframe=black!20, 
    width=\textwidth, 
    arc=2pt, % Small rounded corners
    outer arc=2pt,
    left=10pt, right=10pt, top=10pt, bottom=10pt
]
\paragraph{Optimize gripper position}
        Here is the gripper position of our next robot actions \verb|<images>|, and I want you to look carefully and analyze the position of the gripper and the object, and optimize the gripper position following the guidelines below.
        Return the gripper position adjustments in x y z directions:
        The coordinate system and axis are defined as \verb|{coordinate_system_description}|.
        
        \paragraph{Guidelines}
        \begin{itemize}
            \item Understand what the gripper is trying to do, based on your task instruction understandings.
        \verb|{detailed_task_description}|.
        \item Analyze the trajectory frame by frame, and inspect whether there may be potential collisions with any object along the trajectory. Make sure that the gripper had cleared any previous objects such as cups and backet rims during the movement. If there is any potential collision with rims and objects, adjust the gripper so that the trajectory is fully clear of any obstacles.
        
        \item When the gripper is trying to place the object, first verify whether the position is above the target region to be placed. Zoom in on the frontview image, check whether the gripper is clearly to the left or right of the correct target region, and return the adjustment in y direction. Zoom in on the sideview image, check whether the gripper is clearly to the left (close to the camera) or to the right (away from the camera), and return the adjustment in x direction.
        
        \item If the gripper is about to grasp an object, the gripper should be aligned and directly above the object. Adjust the gripper if it is clearly misaligned with the object, with both gripper jaws outside the object. 
        
        Zoom in on the frontview image, and see if the gripper is below or above the object. The gripper should be slightly above the object top to ensure enough room for descending and grasping. If the gripper jaws are almost touching the object in the final images and there is potential risk of collision, you should lift the gripper by returning 1 in the z direction. Check whether the gripper is to the left or right of the object with both jaws outside the object, and return the adjustment in y direction.
        
        Zoom in on the wristview image to see if the object is between the jaws of the gripper. If not, move the gripper so that it is above the object and well aligned with the object. \verb|{wrist_view_coordinate_system_description}|
        
        Zoom in on the side-view image to see whether the gripper jaws are centered and above the object, adjust the position if the gripper jaws are clearly outside the object. \verb|{sideview_coordinate_system_description}|
        \end{itemize}
       
        \paragraph{Format}
        Please return your optimization direction in json format, for example:
        \begin{verbatim}
        {"Delta_x": 0,"Delta_y": -1,"Delta_z": 1}
        \end{verbatim}
        where x, y, z can only be -1, 0 or 1, with 0 being no movement in that direction, 1 being move towards positive direction and -1 being move towards negative direction.
        Return 0 in the corresponding axis if the gripper is mostly aligned and safe.
\end{tcolorbox}
\caption{Example prompt template for agent feedback for action optimization.}
\label{fig:optimization prompt}
\end{figure*}

\newpage
\begin{figure*}[htb]
\centering
\begin{tcolorbox}[
    colback=white, 
    colframe=black!20, 
    width=\textwidth, 
    arc=2pt, % Small rounded corners
    outer arc=2pt,
    left=10pt, right=10pt, top=10pt, bottom=10pt
]

I have a set of images \verb|<images>| showing the robot actions trying to complete the task \verb|{detailed_task_description}|. I want you to rank them from best to worst in terms of whether the action is accomplishing the task goal safely. Here are more detailed guidelines:
\paragraph{Guidelines}
\begin{itemize}
    \item If the gripper is trying to grasp an object, I want you to rank them from best to worst in terms of whether the grasp is firm and clear. The best image should be a clear and firm grasp at the right place. 
    
    Reflect on the trajectory history and task instructions, and understand which object is the gripper trying to grasp.

    Identify the position of the jaws of the gripper in the wristview image, which is at the bottom. Verify whether the \verb|{grasping_position_of_target_object}| is being grasped between the jaws clearly.
    \item If the gripper orientation is different in the candidate images, you should identify the best orientation for a clear and stable grasp.
    
    \item If the gripper is trying to place \verb|{grasp_object}| onto / into \verb|{target_object}|, the \verb|{grasp_object}| and gripper should be directly above the center of the \verb|{target_object}|, so in the front-view and side-view images, the gripper and object should be above and within the \verb|{target_object}|. If the object and gripper is to the left or to the right outside of the \verb|{target_object}|, the grasped object may be dropped out of the \verb|{target_object}|, so the position is bad. 
    
\end{itemize}

\paragraph{Reminder}
\begin{itemize}
    \item You should rank all the images following the same standard. If none of them is perfect, you should rank them by which one is closest.
    \item When ranking the later images, refer and reflect the previous candidates to rank them faithfully. For example, if the first image is blurry and the second image is clear, then the second image should be ranked higher than the first image.
\end{itemize} 
Return the ranking result in json format, for example: \verb|[0, 2, 1]|, where the ids in the list are the image ids ranked from best to worst, with the first being the best. The range of the ids should be from 0 to N-1, where N is the total number of candidate images.
\end{tcolorbox}
\caption{Example prompt template for agent candidate ranking for local search.}
\label{fig:ranking prompt}
\end{figure*}
\newpage

\clearpage
\end{document}